\newtheorem{theorem}{Theorem}[section]
\newtheorem{proposition}[theorem]{Proposition}
\newtheorem{lemma}[theorem]{Lemma}
\newtheorem{corollary}[theorem]{Corollary}
\theoremstyle{definition}
\newtheorem{definition}[theorem]{Definition}
\theoremstyle{remark}
\DeclareMathOperator{\argmin}{argmin}
\DeclareMathOperator{\tr}{tr}
\newcommand{\R}{\mathbb{R}}
\newcommand{\E}{\mathbb{E}}
\newcommand{\A}{\mathcal{A}}
\newcommand{\prob}{\mathrm{P}}
\newcommand{\Var}{\mathrm{Var}}
\begin{document}
\title[SPFQ: A Stochastic Algorithm for Neural Network Quantization]{SPFQ: A Stochastic Algorithm and its Error Analysis for Neural Network Quantization}

\author{Jinjie~Zhang}
\address{Department of Mathematics, University of California San Diego}
\email{jiz003@ucsd.edu}

\author{Rayan~Saab}
\address{Department of Mathematics and Hal{\i}c{\i}o\u{g}lu Data Science Institute, University of California San Diego}
\email{rsaab@ucsd.edu}
\maketitle

\begin{abstract}
Quantization is a widely used compression method that effectively reduces redundancies in over-parameterized neural networks. However, existing quantization techniques for deep neural networks often lack a comprehensive error analysis due to the presence of non-convex loss functions and nonlinear activations. In this paper, we propose a fast stochastic algorithm for quantizing the weights of fully trained neural networks. Our approach leverages a greedy path-following mechanism in combination with a stochastic quantizer. Its computational complexity scales only linearly with the number of weights in the network, thereby enabling the efficient quantization of large networks. Importantly, we establish, for the first time,  full-network error bounds, under an infinite alphabet condition and minimal assumptions on the weights and input data. As an application of this result, we prove that when quantizing a multi-layer network having Gaussian weights, the relative square quantization error exhibits a linear decay as the degree of over-parametrization increases. Furthermore, we demonstrate that it is possible to achieve error bounds equivalent to those obtained in the infinite alphabet case, using on the order of a mere $\log\log N$ bits per weight, where $N$ represents the largest  number of neurons in a layer.

\end{abstract}

\section{Introduction}
Deep neural networks (DNNs) have  shown  impressive performance in a variety of areas including computer vision and natural language processing among many others. However, highly overparameterized DNNs require a significant amount of memory to store their associated weights, activations, and -- during training -- gradients. As a result, in recent years, there has been an interest in model compression techniques, including quantization, pruning, knowledge distillation, and low-rank decomposition \citep{neill2020overview, deng2020model, cheng2017survey, gholami2021survey, guo2018survey}. Neural network quantization, in particular, utilizes significantly fewer bits to represent the weights of DNNs. This substitution of original, say, 32-bit floating-point operations with more efficient low-bit operations has the potential to significantly reduce memory usage and accelerate inference time while maintaining minimal loss in accuracy. Quantization methods can be categorized into two classes \citep{krishnamoorthi2018quantizing}: quantization-aware training and post-training quantization. Quantization-aware training substitutes floating-point weights with low-bit representations during the training process, while post-training quantization  quantizes network weights only after the training is complete.

To achieve high-quality empirical results, quantization-aware training methods, such as those in \citep{choi2018pact, cai2020zeroq, wang2019haq, courbariaux2015binaryconnect, jacob2018quantization, zhang2018lq, zhou2017incremental}, often require significant time for retraining and hyper-parameter tuning using the entire training dataset. This can make them impractical for resource-constrained scenarios. Furthermore, it can be challenging to rigorously analyze the associated error bounds as quantization-aware training is an integer programming problem with a non-convex loss function, making it NP-hard in general. In contrast, post-training quantization algorithms, such as \citep{choukroun2019low, wang2020towards, lybrand2021greedy, zhang2022post, hubara2020improving, nagel2020up, zhao2019improving, maly2022simple, frantar2022gptq}, require only a small amount of training data, and recent research has made strides in obtaining quantization error bounds for some of these algorithms \citep{lybrand2021greedy, zhang2022post, maly2022simple} in the context of shallow networks.

In this paper, we focus on this type of network quantization and its theoretical analysis, proposing a fast stochastic quantization technique and obtaining theoretical guarantees on its performance, even in the context of deep networks.

\subsection{Related work}
In this section, we provide a summary of relevant prior results concerning a specific post-training quantization algorithm, which forms the basis of our present work. 
To make our discussion more precise, let $X\in\R^{m\times N_0}$ and $w\in\R^{N_0}$ represent the input data and a neuron in a single-layer network, respectively. Our objective is to find a mapping, also known as a \emph{quantizer}, $\mathcal{Q}:\R^{N_0}\rightarrow \A^{N_0}$ such that $q=Q(w)\in\A^{N_0}$ minimizes $\|Xq - Xw\|_2$. Even in this simplified context, since $\A$ is a finite discrete set, this optimization problem is an integer program and therefore NP-hard in general. Nevertheless, if one can obtain good approximate solutions to this optimization problem, with theoretical error guarantees, then those guarantees can be combined with the fact that most neural network activation functions are Lipschitz, to obtain error bounds on entire (single) layers of a neural network.

Recently, \citet{lybrand2021greedy} proposed and analyzed a greedy algorithm, called \emph{greedy path following quantization} (GPFQ), to approximately solve the optimization problem outlined above. Their analysis was limited to the ternary alphabet $\A=\{0,\pm 1\}$ and a single-layer network with Gaussian random input data. \citet{zhang2022post} then extended GPFQ to more general input distributions and larger alphabets, and they introduced variations that promoted pruning of weights. Among other results, they proved that if the input data $X$ is either bounded or drawn from a mixture of Gaussians, then the relative square error of quantizing a generic neuron $w$ satisfies 
\begin{equation}\label{eq:GPFQ_error_bound}
\frac{\|Xw-Xq\|_2^2}{\|Xw\|_2^2} \lesssim \frac{m\log N_0}{N_0}
\end{equation}
with high probability. Extensive numerical experiments in \citep{zhang2022post} also demonstrated that GPFQ, with 4 or 5 bit alphabets, can achieve less than $1\%$ loss in Top-1 and Top-5 accuracy on common neural network architectures. 
Subsequently, \citep{maly2022simple} introduced a different algorithm that involves a deterministic preprocessing step on $w$ that allows quantizing DNNs via \emph{memoryless scalar quantization} (MSQ) while preserving the same error bound in \eqref{eq:GPFQ_error_bound}. This algorithm is more computationally intensive than those of \citep{lybrand2021greedy,zhang2022post} but does not require hyper-parameter tuning for selecting the alphabet step-size. 

\subsection{Contributions and organization}
In spite of recent progress in developing computationally efficient algorithms with rigorous theoretical guarantees, all technical proofs in \citep{lybrand2021greedy, zhang2022post, maly2022simple} only apply for a single-layer of a neural network with certain assumed input distributions. This limitation naturally comes from the fact that a random input distribution and a deterministic quantizer lead to activations (i.e., outputs of intermediate layers) with dependencies, whose distribution is usually intractable after passing through multiple layers and nonlinearities. 

To overcome this main obstacle to obtaining theoretical guarantees for multiple layer neural networks, in \cref{sec:SPFQ}, we propose a new stochastic quantization framework, called stochastic path following quantization (SPFQ), which introduces randomness into the quantizer. We show that SPFQ admits an interpretation as a two-phase algorithm consisting of a data-alignment phase and a quantization phase. This allows us to propose two variants, summarized in \cref{algo:SPFQ-perfect} and \cref{algo:SPFQ-order-r}, which involve different data alignment strategies that are amenable to analysis. 

Importantly, our algorithms are fast. For example, SPFQ with approximate data alignment has a computational complexity that only scales linearly in the number of parameters of the neural network. This stands in sharp contrast with quantization algorithms that require solving optimization problems, generally resulting in polynomial complexity in the number of parameters.  

In \cref{sec:error-bounds-SPFQ},  we present the first  error bounds for quantizing an entire $L$-layer neural network $\Phi$, under an infinite alphabet condition and minimal assumptions on the weights and input data $X$.  To illustrate the use of our results,  we show that if the weights of $\Phi$ are standard Gaussian random variables, then, with high probability, the quantized neural network $\widetilde{\Phi}$ satisfies
\begin{equation}\label{eq:relative-error-SPFQ}
\frac{\|\Phi(X) - \widetilde{\Phi}(X) \|_F^2}{\E_\Phi\|\Phi(X)\|_F^2} \lesssim \frac{m (\log N_{\max})^{L+1}}{N_{\min}}
\end{equation}
where we take the expectation $\E_\Phi$ with respect to the weights of $\Phi$, and $N_{\min}$, $N_{\max}$ represent the minimum and maximum layer width of $\Phi$ respectively. We can regard the relative error bound in \eqref{eq:relative-error-SPFQ} as a natural generalization of \eqref{eq:GPFQ_error_bound}. 

In \cref{sec:finite-alphabets}, we consider the finite alphabet case   under the random network hypothesis. Denoting by $N_i$ the number of neurons in the $i$-th layer, we show that it suffices to use  $b \leq C \log\log  \max\{N_{i-1}, N_i\}$ bits to quantize the $i$-th layer while guaranteeing the same error bounds as in the infinite alphabet case. 

It is worth noting that we assume that $\Phi$ is equipped with ReLU activation functions, i.e. $\max\{0, x\}$, throughout this paper. This assumption is only made for convenience and concreteness, and we remark that the non-linearities can be replaced by any Lipschitz functions without changing our results, except for the values of constants.

Finally, we empirically test the developed method in \cref{sec:experiment}, by quantizing the weights of several neural network architectures that are originally trained for classification tasks on the  ImageNet dataset \citep{deng2009imagenet}. The experiments show only a minor loss of accuracy compared to unquantized models.

\section{Stochastic Quantization Algorithm}\label{sec:SPFQ}
In this section, we start with the notation that will be used throughout this paper and
then introduce our stochastic quantization algorithm, and show that it can be viewed as a two-stage algorithm. This in turn will simplify its analysis.
\subsection{Notation and Preliminaries}
We denote various positive absolute constants by C, c. We use $a\lesssim b$ as shorthand for $a\leq Cb$, and $a\gtrsim b$ for $a\geq Cb$. For any matrix $A\in\R^{m\times n}$, $\|A\|_{\max}$ denotes $\max_{i,j}|A_{ij}|$.
\subsubsection{Quantization}
An $L$-layer  perceptron, $\Phi: \R^{N_0} \to \R^{N_L}$, acts on a vector $x\in\mathbb{R}^{N_0}$ via
\begin{equation}\label{eq:mlp}
\Phi(x):=\varphi^{(L)} \circ A^{(L)} \circ\cdots\circ \varphi^{(1)} \circ A^{(1)}(x)
\end{equation}
where each $\varphi^{(i)}:\R^{N_i} \to \R^{N_i}$ is an activation function acting entrywise, and $A^{(i)}:\R^{N_{i-1}}\to \R^{N_i}$ is an affine map given by $A^{(i)}(z):= W^{(i)\top}z+b^{(i)}$. Here,   $W^{(i)}\in\R^{N_{i-1}\times N_i}$ is a weight matrix and $b^{(i)}\in\R^{N_i}$ is a bias vector. Since $w^\top x + b = \langle (w, b), (x,1)\rangle$, the bias term $b^{(i)}$ can simply be treated as an extra row to the weight matrix $W^{(i)}$, so we will henceforth ignore it. For theoretical analysis, we focus on infinite \emph{mid-tread} alphabets, with step-size $\delta$, i.e., alphabets of the form
\begin{equation}\label{eq:alphabet-infinite}
    \mathcal{A} =\A_\infty^{\delta}:=\{ \pm k\delta: k\in \mathbb{Z}\}
\end{equation}
and their finite versions, mid-tread alphabets of the form 
\begin{equation}\label{eq:alphabet-midtread}
\mathcal{A}= \mathcal{A}_K^\delta:= \{ \pm k\delta: 0\leq k\leq K, k\in \mathbb{Z}\}. 
\end{equation}
Given $\mathcal{A}=\A_\infty^\delta$, the associated \emph{stochastic scalar quantizer} $\mathcal{Q}_\mathrm{StocQ}:\mathbb{R}\rightarrow \mathcal{A}$ randomly rounds every $z\in\R$ to either the minimum or maximum of the interval $[k\delta, (k+1)\delta]$ containing it, in such a way that $\E(\mathcal{Q}_\mathrm{StocQ}(z))=z$. Specifically, we define 
\begin{equation}\label{eq:StocQ}
\mathcal{Q}_\mathrm{StocQ}(z):=\begin{cases}
\lfloor \frac{z}{\delta} \rfloor \delta &\text{with probability}\, p \\
\bigl(\lfloor \frac{z}{\delta} \rfloor + 1 \bigr)\delta  &\text{with probability}\, 1-p
\end{cases}
\end{equation}
where $p=1-\frac{z}{\delta}+\lfloor\frac{z}{\delta}\rfloor$. If instead of the infinite alphabet, we use $\mathcal{A}=\mathcal{A}_K^\delta$, then whenever $|z|\leq K\delta$, $\mathcal{Q}_\mathrm{StocQ}(z)$ is  defined via \eqref{eq:StocQ} while $\mathcal{Q}_\mathrm{StocQ}(z)$ is assigned $-K\delta$ and $K\delta$ if $z<-K\delta$ and $z>K\delta$ respectively. 

\subsubsection{Orthogonal projections} 
Given a subspace $S \subseteq \mathbb{R}^m$, we denote by $S^\perp$ its orthogonal complement in $\R^m$, and by $P_S$ the orthogonal projection of $\R^m$ onto $S$. In particular, if $z\in\mathbb{R}^m$ is a nonzero vector, then we use $P_z$ and $P_{z^\perp}$ to represent orthogonal projections onto $\mathrm{span}(z)$ and $\mathrm{span}(z)^\perp$ respectively. Hence, for any $x\in\mathbb{R}^m$, we have 
\begin{equation}\label{eq:orth-proj}
P_z(x)=\frac{\langle z, x\rangle z}{\|z\|_2^2}, \quad x = P_z(x) + P_{z^\perp}(x), \quad\text{and}\quad \|x\|_2^2 = \|P_z(x)\|_2^2 + \|P_{z^\perp}(x)\|_2^2.
\end{equation}
Throughout this paper, we will also use $P_z$ and $P_{z^\perp}$ to denote the associated matrix representations satisfying 
\begin{equation}\label{eq:orth-proj-mat}
P_z x=\frac{zz^\top}{\|z\|_2^2} x\quad\text{and}\quad P_{z^\perp}x= 
\Bigl(I- \frac{zz^\top}{\|z\|_2^2} \Bigr)x.
\end{equation}

\subsubsection{Convex order}
We now introduce the concept of \emph{convex order} (see, e.g., \cite{shaked2007stochastic}), which will be heavily used in our analysis. 
\begin{definition}\label{def:convex-order}
Let $X, Y$ be $n$-dimensional random vectors such that
\begin{equation}\label{eq:convex-order-eq}
\E f(X)\leq \E f(Y) 
\end{equation}
holds for all convex functions $f:\R^n\rightarrow \R$, provided the expectations exist. Then $X$ is said to be smaller than $Y$ in the \emph{convex order}, denoted by $X \leq_\mathrm{cx} Y$. 
\end{definition}
For $i=1,2,\ldots, n$, define functions $\phi_i(x):=x_i$ and $\psi_i(x):=-x_i$. Since both $\phi_i(x)$ and $\psi_i(x)$ are convex, substituting them into \eqref{eq:convex-order-eq} yields $\E X_i=\E Y_i$ for all $i$. Therefore, we obtain
\begin{equation}\label{eq:cx-equal-mean}
 X\leq_\mathrm{cx} Y \implies   \E X = \E Y.
\end{equation}
Clearly, according to \cref{def:convex-order}, $X\leq_\mathrm{cx} Y$ only depends on the respective distributions of $X$ and $Y$. It can be easily seen that the relation $\leq_\mathrm{cx}$  satisfies  reflexivity and transitivity. In other words, one has $X\leq_\mathrm{cx}X$ and 
that if $X\leq_\mathrm{cx}Y$ and $Y\leq_\mathrm{cx} Z$, then $X\leq_\mathrm{cx} Z$.
The convex order defined in \cref{def:convex-order} is also called \emph{mean-preserving spread} \cite{rothschild1970increasing, machina1997increasing}, which is a special case of \emph{second-order stochastic dominance} \cite{hadar1969rules, hanoch1975efficiency, shaked2007stochastic}, see \cref{appendix:convex-orders} for details.

\subsection{SPFQ}
We start with a data set $X\in\mathbb{R}^{m\times N_0}$ with (vectorized) data stored as rows and a pretrained neural network $\Phi$ with weight matrices $W^{(i)}\in\R^{N_{i-1}\times N_i}$ having neurons as their columns. Let $\Phi^{(i)}$, $\widetilde{\Phi}^{(i)}$ denote the original and quantized neural networks up to layer $i$ respectively so that, for example, $
\Phi^{(i)}(x):=\varphi^{(i)} \circ W^{(i)} \circ\cdots\circ \varphi^{(1)} \circ W^{(1)}(x)
$. Assuming the first $i-1$ layers have been quantized,  define the \emph{activations} from $(i-1)$-th layer as 
\begin{equation}\label{eq:layer-input}
X^{(i-1)}:=\Phi^{(i-1)}(X)\in\mathbb{R}^{m\times N_{i-1}} \quad \text{and}\quad \widetilde{X}^{(i-1)}:=\widetilde{\Phi}^{(i-1)}(X)\in\mathbb{R}^{m\times N_{i-1}},
\end{equation}
which also serve as input data for the $i$-th layer. For each neuron $w\in\mathbb{R}^{N_{i-1}}$ in layer $i$, our goal is to construct a quantized vector $q\in\mathcal{A}^{N_{i-1}}$ such that 
\[
\widetilde{X}^{(i-1)}q=\sum_{t=1}^{N_{i-1}}q_t \widetilde{X}^{(i-1)}_t\approx \sum_{t=1}^{N_{i-1}}w_t X^{(i-1)}_t=X^{(i-1)}w
\]
where $X^{(i-1)}_t$, $\widetilde{X}^{(i-1)}_t$ are the $t$-th columns of $X^{(i-1)}$, $\widetilde{X}^{(i-1)}$. Following the GPFQ scheme in \cite{lybrand2021greedy, zhang2022post}, our algorithm selects $q_t$ sequentially, for $t=1,2,\ldots,N_{i-1}$, so that the approximation error of the $t$-th iteration, denoted by
\begin{equation}\label{eq:approx-error}
u_t :=  \sum_{j=1}^t w_jX_j^{(i-1)}-\sum_{j=1}^t q_j\widetilde{X}_{j}^{(i-1)}\in\R^m,
\end{equation}
is well-controlled in the $\ell_2$ norm. Specifically, assuming that the first $t-1$ components of $q$ have been determined, the proposed algorithm maintains 
the error vector $u_{t-1}=\sum\limits_{j=1}^{t-1}(w_jX_j^{(i-1)}-q_j\widetilde{X}_j^{(i-1)})$, and sets $q_t\in\mathcal{A}$ probabilistically depending on $u_{t-1}$, $X_t^{(i-1)}$, and $\widetilde{X}_t^{(i-1)}$. Note that \eqref{eq:approx-error} implies 
\begin{equation}\label{eq:error-iteration}
   u_t = u_{t-1} + w_t X_t^{(i-1)} - q_t \widetilde{X}_t^{(i-1)}
\end{equation}
and using \eqref{eq:orth-proj}, one can get
\begin{align*}
    c^* &:= \arg\min_{c\in\R} \| u_{t-1} + w_t X_t^{(i-1)} - c \widetilde{X}_t^{(i-1)}\|_2^2 \\ &= \arg\min_{c\in\R}\|P_{\widetilde{X}^{(i-1)}_t}(u_{t-1}+w_tX_t^{(i-1)})-c \widetilde{X}_t^{(i-1)}\|_2^2 \\
    &=\arg\min_{c\in\R}\Biggl\|\frac{\langle \widetilde{X}_t^{(i-1)}, u_{t-1}+w_t X_t^{(i-1)} \rangle}{\|\widetilde{X}^{(i-1)}_t\|_2^2}\widetilde{X}_t^{(i-1)}-c\widetilde{X}_t^{(i-1)}\Biggr\|_2^2\\
    &= \frac{\langle \widetilde{X}_t^{(i-1)}, u_{t-1}+w_t X_t^{(i-1)} \rangle}{\|\widetilde{X}^{(i-1)}_t\|_2^2}.
\end{align*}
Hence, a natural design of $q_t\in\A$ is to quantize $c^*$. Instead of using a deterministic quantizer as in \cite{lybrand2021greedy, zhang2022post}, we apply the stochastic quantizer in \eqref{eq:StocQ}, that is
\begin{equation}\label{eq:quantization-expression}
    q_t := \mathcal{Q}_\mathrm{StocQ}(c^*) = \mathcal{Q}_\mathrm{StocQ}\Biggl(     \frac{\langle \widetilde{X}_t^{(i-1)}, u_{t-1}+w_t X_t^{(i-1)} \rangle}{\|\widetilde{X}^{(i-1)}_t\|_2^2}\Biggr).
\end{equation}
Putting everything together, the stochastic version of GPFQ, namely SPFQ in its basic form, can now be expressed as follows.
\begin{equation}\label{eq:quantization-algorithm}
\begin{cases}
u_0 = 0 \in\mathbb{R}^m,\\
q_t = \mathcal{Q}_\mathrm{StocQ}\Biggl(     \frac{\langle \widetilde{X}_t^{(i-1)}, u_{t-1}+w_t X_t^{(i-1)} \rangle}{\|\widetilde{X}^{(i-1)}_t\|_2^2}\Biggr), \\
u_t = u_{t-1} + w_t X^{(i-1)}_t - q_t \widetilde{X}^{(i-1)}_t
\end{cases}
\end{equation}
where $t$ iterates over $1,2,\ldots,N_{i-1}$. In particular, the final error vector is
\begin{equation}\label{eq:quantization-error}
u_{N_{i-1}}= \sum_{j=1}^{N_{i-1}} w_jX_j^{(i-1)}-\sum_{j=1}^{N_{i-1}} q_j\widetilde{X}_{j}^{(i-1)} = X^{(i-1)}w - \widetilde{X}^{(i-1)}q
\end{equation}
and our goal is to estimate $\|u_{N_{i-1}}\|_2$.

\subsection{A two-phase pipeline} An essential observation is that SPFQ in \eqref{eq:quantization-algorithm} can be equivalently decomposed into two phases.

\noindent\textbf{Phase I}: Given inputs $X^{(i-1)}$, $\widetilde{X}^{(i-1)}$ and neuron $w\in\R^{N_{i-1}}$ for the $i$-th layer, we first align the input data to the layer, by finding a real-valued vector $\widetilde{w}\in\R^{N_{i-1}}$ such that $\widetilde{X}^{(i-1)}\widetilde{w}\approx X^{(i-1)}w$. Similar to our discussion above \eqref{eq:quantization-expression}, we adopt the same sequential selection strategy to obtain each $\widetilde{w}_t$ and deduce the following update rules.
\begin{equation}\label{eq:quantization-algorithm-step1}
\begin{cases}
\hat{u}_0 = 0 \in\mathbb{R}^m,\\
\widetilde{w}_t = \frac{\langle \widetilde{X}_t^{(i-1)}, \hat{u}_{t-1}+w_t X_t^{(i-1)} \rangle}{\|\widetilde{X}^{(i-1)}_t\|_2^2}, \\
\hat{u}_t = \hat{u}_{t-1} + w_t X^{(i-1)}_t - \widetilde{w}_t \widetilde{X}^{(i-1)}_t
\end{cases}
\end{equation}
where $t=1,2\ldots, N_{i-1}$. Note that the approximation error is given by 
\begin{equation}\label{eq:quantization-error-step1}
\hat{u}_{N_{i-1}}=X^{(i-1)}w-\widetilde{X}^{(i-1)}\widetilde{w}.
\end{equation}
\textbf{Phase II}: After getting the new weights $\widetilde{w}$, we quantize $\widetilde{w}$ using SPFQ with input $\widetilde{X}^{(i-1)}$, i.e., finding $\widetilde{q}\in\A^{N_{i-1}}$ such that $\widetilde{X}^{(i-1)}\widetilde{q}\approx \widetilde{X}^{(i-1)}\widetilde{w}$. This process can be summarized as follows. For $t=1,2,\ldots, N_{i-1}$, 
\begin{equation}\label{eq:quantization-algorithm-step2}
\begin{cases}
\widetilde{u}_0 = 0 \in\mathbb{R}^m,\\
\widetilde{q}_t = \mathcal{Q}_\mathrm{StocQ}\Bigl( \widetilde{w}_t + \frac{\langle \widetilde{X}_t^{(i-1)}, \widetilde{u}_{t-1} \rangle}{\|\widetilde{X}^{(i-1)}_t\|_2^2}\Bigr), \\
\widetilde{u}_t = \widetilde{u}_{t-1} + (\widetilde{w}_t - \widetilde{q}_t)\widetilde{X}^{(i-1)}_t.
\end{cases}
\end{equation}
Here, the quantization error is 
\begin{equation}\label{eq:quantization-error-step2}
\widetilde{u}_{N_{i-1}}=\widetilde{X}^{(i-1)}(\widetilde{w}-\widetilde{q}).
\end{equation}

\begin{proposition}
\label{thm:decomposition}
Given inputs $X^{(i-1)}$, $\widetilde{X}^{(i-1)}$ and any neuron $w\in\R^{N_{i-1}}$ for the $i$-th layer, the two-phase formulation given by \eqref{eq:quantization-algorithm-step1} and \eqref{eq:quantization-algorithm-step2} generate  exactly same result as in \eqref{eq:quantization-algorithm}, that is, $\widetilde{q}=q$.
\end{proposition}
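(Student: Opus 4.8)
The plan is to prove the statement by induction on $t$, establishing the invariant that, when Phase~I (\eqref{eq:quantization-algorithm-step1}) and Phase~II (\eqref{eq:quantization-algorithm-step2}) are run with the \emph{same} realization of the internal randomness of $\mathcal{Q}_\mathrm{StocQ}$ as the basic scheme \eqref{eq:quantization-algorithm}, one has
\[
u_t = \hat u_t + \widetilde u_t \qquad\text{and}\qquad q_t = \widetilde q_t, \qquad t = 0,1,\dots,N_{i-1}.
\]
Specializing to $t = N_{i-1}$ then gives $q = \widetilde q$. I abbreviate $X_t := X^{(i-1)}_t$ and $\widetilde X_t := \widetilde X^{(i-1)}_t$ throughout.

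The base case $t = 0$ is trivial since $u_0 = \hat u_0 = \widetilde u_0 = 0$. For the inductive step I would assume $u_{t-1} = \hat u_{t-1} + \widetilde u_{t-1}$ and first show that the scalar quantized at step $t$ is the same in \eqref{eq:quantization-algorithm} and in \eqref{eq:quantization-algorithm-step2}: substituting the hypothesis into the argument of $\mathcal{Q}_\mathrm{StocQ}$ in \eqref{eq:quantization-algorithm} and using bilinearity of the inner product,
\[
\frac{\langle \widetilde X_t,\, u_{t-1} + w_t X_t\rangle}{\|\widetilde X_t\|_2^2}
= \frac{\langle \widetilde X_t,\, \hat u_{t-1} + w_t X_t\rangle}{\|\widetilde X_t\|_2^2} + \frac{\langle \widetilde X_t,\, \widetilde u_{t-1}\rangle}{\|\widetilde X_t\|_2^2}
= \widetilde w_t + \frac{\langle \widetilde X_t,\, \widetilde u_{t-1}\rangle}{\|\widetilde X_t\|_2^2},
\]
where the final equality is exactly the definition of $\widetilde w_t$ from \eqref{eq:quantization-algorithm-step1}. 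Hence $\mathcal{Q}_\mathrm{StocQ}$ receives the same number in both schemes, so under the shared randomness $q_t = \widetilde q_t$. Then I would add the two error recursions: from \eqref{eq:quantization-algorithm-step1}--\eqref{eq:quantization-algorithm-step2},
\[
\hat u_t + \widetilde u_t
= \bigl(\hat u_{t-1} + w_t X_t - \widetilde w_t \widetilde X_t\bigr) + \bigl(\widetilde u_{t-1} + (\widetilde w_t - \widetilde q_t)\widetilde X_t\bigr)
= \hat u_{t-1} + \widetilde u_{t-1} + w_t X_t - \widetilde q_t \widetilde X_t,
\]
and using $q_t = \widetilde q_t$ together with the induction hypothesis, the right-hand side equals $u_{t-1} + w_t X_t - q_t \widetilde X_t = u_t$, which closes the induction.

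Since the entire argument is an algebraic identity carried along an induction, I do not anticipate a genuine obstacle. The one point worth stating carefully is interpretational: the asserted equality $\widetilde q = q$ should be understood under a coupling of the random bits that $\mathcal{Q}_\mathrm{StocQ}$ consumes in the two formulations --- equivalently, as equality in distribution of the step-$t$ outputs conditioned on the past --- since $\mathcal{Q}_\mathrm{StocQ}$ is itself random. Beyond the trivial fact that $\mathcal{Q}_\mathrm{StocQ}$ returns the same value when fed the same input under the same randomness, no property of the stochastic quantizer is used, so the two-phase decomposition is in fact deterministic once the random choices are fixed.
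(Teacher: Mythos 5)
Your proof is correct and follows essentially the same route as the paper's: induction on $t$, with the key identity $u_{t-1} = \hat u_{t-1} + \widetilde u_{t-1}$ ensuring the quantizer receives the same argument in both schemes (the paper verifies this identity via the closed-form sums for $\hat u_{t-1}$ and $\widetilde u_{t-1}$ rather than by carrying it as an explicit invariant, but this is a cosmetic difference). Your remark that the equality $\widetilde q = q$ must be read under a coupling of the quantizer's internal randomness is a worthwhile clarification that the paper leaves implicit.
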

\begin{proof}
We proceed by induction on the iteration index $t$. If $t=1$, then \eqref{eq:quantization-algorithm-step1}, \eqref{eq:quantization-algorithm-step2} and \eqref{eq:quantization-algorithm} imply that 
\[
\widetilde{q}_1=\mathcal{Q}_\mathrm{StocQ}(\widetilde{w}_1)=\mathcal{Q}_\mathrm{StocQ}\Bigl(\frac{\langle \widetilde{X}_1^{(i-1)}, w_1 X_1^{(i-1)} \rangle}{\|\widetilde{X}^{(i-1)}_1\|_2^2}\Bigr) = q_1.
\]
For $t\geq 2$, assume $\widetilde{q}_j=q_j$ for $1\leq j\leq t-1$ and we aim to prove $\widetilde{q}_t=q_t$. Note that $\hat{u}_{t-1}=\sum_{j=1}^{t-1}(w_jX_j - \widetilde{w}_j\widetilde{X}_j)$ and $\widetilde{u}_{t-1}=\sum_{j=1}^{t-1}(\widetilde{w}_j\widetilde{X}_j - \widetilde{q}_j\widetilde{X}_j)=\sum_{j=1}^{t-1}(\widetilde{w}_j\widetilde{X}_j - q_j\widetilde{X}_j)$ by our induction hypothesis. It follows that
$\hat{u}_{t-1}+\widetilde{u}_{t-1}=\sum_{j=1}^{t-1}(w_j X_j - q_j\widetilde{X}_j)=u_{t-1}$. Thus, we get 
\[
\widetilde{q}_t = \mathcal{Q}_\mathrm{StocQ}\Bigl( \frac{\langle \widetilde{X}_t^{(i-1)}, \widetilde{u}_{t-1} +\hat{u}_{t-1} + w_t X_t^{(i-1)}\rangle}{\|\widetilde{X}^{(i-1)}_t\|_2^2}\Bigr)=\mathcal{Q}_\mathrm{StocQ}\Bigl( \frac{\langle \widetilde{X}_t^{(i-1)}, u_{t-1} + w_t X_t^{(i-1)}\rangle}{\|\widetilde{X}^{(i-1)}_t\|_2^2}\Bigr) = q_t.
\]
This establishes $\widetilde{q}=q$ and completes the proof. 
\end{proof}

Based on \cref{thm:decomposition}, the quantization error \eqref{eq:quantization-error} for SPFQ can be split into two parts:
\[
u_{N_{i-1}}= X^{(i-1)}w - \widetilde{X}^{(i-1)}q = X^{(i-1)}w-\widetilde{X}^{(i-1)}\widetilde{w} + \widetilde{X}^{(i-1)}(\widetilde{w}-q)= \hat{u}_{N_{i-1}} + \widetilde{u}_{N_{i-1}}.
\]
Here, the first error term $\hat{u}_{N_{i-1}}$ results from the data alignment in \eqref{eq:quantization-algorithm-step1} to generate a new ``virtual" neuron $\widetilde{w}$ and the second error term $\widetilde{u}_{N_{i-1}}$ is due to the quantization in \eqref{eq:quantization-algorithm-step2}. 
It follows that 
\begin{equation}\label{eq:error-decomposition}
\|u_{N_{i-1}}\|_2=\|\hat{u}_{N_{i-1}} + \widetilde{u}_{N_{i-1}}\|_2 \leq \|\hat{u}_{N_{i-1}} \|_2 + \|\widetilde{u}_{N_{i-1}}\|_2.
\end{equation}
Thus, we can bound the quantization error for SPFQ by controlling $\|\hat{u}_{N_{i-1}} \|_2$ and $\|\widetilde{u}_{N_{i-1}}\|_2$.

\begin{algorithm}[ht]
\caption{SPFQ with perfect data alignment}
\label{algo:SPFQ-perfect}
\DontPrintSemicolon
  \KwInput{An $L$-layer neural network $\Phi$ with weight matrices $W^{(i)}\in\mathbb{R}^{N_{i-1}\times N_i}$, input data $X\in\mathbb{R}^{m\times N_0}$} 
  \For{$i= 1$ \KwTo $L$}
  {
  Generate $X^{(i-1)}= \Phi^{(i-1)}(X)\in\mathbb{R}^{m\times N_{i-1}}$ and $\widetilde{X}^{(i-1)}= \widetilde{\Phi}^{(i-1)}(X)\in\mathbb{R}^{m\times N_{i-1}}$ \\
  \textbf{repeat} For each column $w$ of $W^{(i)}$\\
  \textbf{Phase I}: Find a solution $\widetilde{w}$ to \eqref{eq:min-norm} \\
  \textbf{Phase II}: Obtain the quantized neuron $\widetilde{q}\in\mathcal{A}^{N_{i-1}}$ via \eqref{eq:quantization-algorithm-step2}  \\
  \textbf{until} All columns of $W^{(i)}$ are quantized \\
  Obtain the quantized $i$-th layer weights $Q^{(i)}\in\mathcal{A}^{N_{i-1}\times N_i}$ \\
  } 
\KwOutput{Quantized neural network $\widetilde{\Phi}$}
\end{algorithm}

\subsection{SPFQ Variants} 
The two-phase formulation of SPFQ provides a flexible framework that allows for the replacement of one or both phases with alternative algorithms. Here, our focus is on replacing the first, ``data-alignment", phase to eliminate, or massively reduce, the error bound associated with this step. Indeed, by exploring alternative approaches, one can improve the error bounds of SPFQ, at the expense of increasing the computational complexity. Below, we present two such alternatives to Phase I.

In \cref{sec:error-bounds-SPFQ} we derive an error bound associated with the second phase of SPFQ, namely quantization, which is independent of the reconstructed neuron $\widetilde{w}$. Thus, to reduce the bound on $\|u_{N_{i-1}}\|_2$ in \eqref{eq:error-decomposition}, we can eliminate $\|\hat{u}_{N_{i-1}}\|_2$ by simply choosing $\widetilde{w}$ with $\widetilde{X}^{(i-1)}\widetilde{w} =X^{(i-1)}w$. As this system of equations may admit infinitely many solutions, we opt for one with the minimal  $\|\widetilde{w}\|_\infty$. This choice is motivated by the fact that smaller weights can be accommodated by smaller quantization alphabets, resulting in bit savings in practical applications. In other words, we replace Phase I with the optimization problem
\begin{equation}\label{eq:min-norm}
\begin{aligned}
\min_{\widetilde{w}\in \R^{N_{i-1}}} \quad & \|\widetilde{w}\|_\infty \\
\text{s.t.} \quad & \widetilde{X}^{(i-1)}\widetilde{w} =X^{(i-1)}w.
\end{aligned}
\end{equation}
It is not hard to see that \eqref{eq:min-norm} can be formulated as a linear program and solved via standard linear programming techniques \cite{abdelmalek1977minimum}. Alternatively,  powerful tools like Cadzow's method \cite{cadzow1973finite, cadzow1974efficient} can also be used to solve linearly constrained infinity-norm optimization problems like \eqref{eq:min-norm}. 
Cadzow's method has computational complexity $O(m^2N_{i-1})$, thus is a factor of $m$ more expensive than our original approach but has the advantage of eliminating $\|\hat{u}_{N_i-1}\|_2$. 

With this modification, one then proceeds with Phase II as before. Given a minimum $\ell_\infty$ solution $\widetilde{w}$ satisfying $\widetilde{X}^{(i-1)}\widetilde{w}=X^{(i-1)}w$, one can quantize it using \eqref{eq:quantization-algorithm-step2} and obtain $\widetilde{q}\in\mathcal{A}^{N_{i-1}}$. In this case, $\widetilde{q}$ may not be equal to $q$ in \eqref{eq:quantization-algorithm} and the quantization error becomes 
\begin{equation}\label{eq:quantization-error-min-inf}
X^{(i-1)}w - \widetilde{X}^{(i-1)}\widetilde{q} = \widetilde{X}^{(i-1)}(\widetilde{w}- \widetilde{q}) = \widetilde{u}_{N_{i-1}}
\end{equation}
where only Phase II is involved. We summarize this version of SPFQ in \cref{algo:SPFQ-perfect}. 

The second approach we present herein aims to reduce the computational complexity associated with \eqref{eq:min-norm}. To that end, we generalize the data alignment process in \eqref{eq:quantization-algorithm-step1} as follows. Let $r\in\mathbb{Z}^+$ and $w\in\R^{N_{i-1}}$. For $t=1,2,\ldots, N_{i-1}$, we perform \eqref{eq:quantization-algorithm-step1} as before. Now however, for $t=N_{i-1}+1, N_{i-1}+2, \ldots, r N_{i-1}$, we run 
\begin{equation}\label{eq:quantization-algorithm-step1-order-r}
\begin{cases}
\hat{v}_{t-1} = \hat{u}_{t-1} - w_{t} X_{t}^{(i-1)} + \widetilde{w}_{t} \widetilde{X}_{t}^{(i-1)}, \\
\widetilde{w}_t = \frac{\langle \widetilde{X}_t^{(i-1)}, \hat{v}_{t-1}+w_t X_t^{(i-1)} \rangle}{\|\widetilde{X}^{(i-1)}_t\|_2^2}, \\
\hat{u}_t = \hat{v}_{t-1} + w_t X^{(i-1)}_t - \widetilde{w}_t \widetilde{X}^{(i-1)}_t
\end{cases}
\end{equation}
Here, we use modulo $N_{i-1}$ indexing for (the subscripts of) $w, \widetilde{w}, X^{(i-1)}$, and $\widetilde{X}^{(i-1)}$. We call the combination of \eqref{eq:quantization-algorithm-step1} and \eqref{eq:quantization-algorithm-step1-order-r} the $r$-th order data alignment procedure, which costs $O(rmN_{i-1})$ operations. Applying \eqref{eq:quantization-algorithm-step2} to the output $\widetilde{w}$ as before, the quantization error consists of two parts:
\begin{equation}\label{eq:quantization-error-order-r}
X^{(i-1)}w - \widetilde{X}^{(i-1)}\widetilde{q} = X^{(i-1)}w - \widetilde{X}^{(i-1)}\widetilde{w} +  \widetilde{X}^{(i-1)}(\widetilde{w}-\widetilde{q}) = \hat{u}_{rN_{i-1}} + \widetilde{u}_{N_{i-1}}.
\end{equation}
This version of SPFQ with order $r$ is summarized in \cref{algo:SPFQ-order-r}. In \cref{sec:error-bounds-SPFQ}, we prove that the data alignment error $\hat{u}_{rN_{i-1}}=X^{(i-1)}w - \widetilde{X}^{(i-1)}\widetilde{w}$ decays exponentially in order $r$. 

\begin{algorithm}[ht]
\caption{SPFQ with approximate data alignment}
\label{algo:SPFQ-order-r}
\DontPrintSemicolon
  \KwInput{An $L$-layer neural network $\Phi$ with weight matrices $W^{(i)}\in\mathbb{R}^{N_{i-1}\times N_i}$, input data $X\in\mathbb{R}^{m\times N_0}$, order $r\in\mathbb{Z}^+$} 
  \For{$i= 1$ \KwTo $L$}
  {
  Generate $X^{(i-1)}= \Phi^{(i-1)}(X)\in\mathbb{R}^{m\times N_{i-1}}$ and $\widetilde{X}^{(i-1)}= \widetilde{\Phi}^{(i-1)}(X)\in\mathbb{R}^{m\times N_{i-1}}$ \\
  \textbf{repeat} For each column of $W^{(i)}$\\
  \textbf{Phase I}: Pick a column (neuron) $w\in\R^{N_{i-1}}$ of $W^{(i)}$ and get $\widetilde{w}$  using the $r$-th order data alignment in \eqref{eq:quantization-algorithm-step1} and \eqref{eq:quantization-algorithm-step1-order-r}
\\
  \textbf{Phase II}: Quantize $\widetilde{w}$ via \eqref{eq:quantization-algorithm-step2}  \\
  \textbf{until} All columns of $W^{(i)}$ are quantized \\
  Obtain quantized $i$-th layer $Q^{(i)}\in\mathcal{A}^{N_{i-1}\times N_i}$ \\
  } 
\KwOutput{Quantized neural network $\widetilde{\Phi}$}
\end{algorithm}

\section{Error Bounds for SPFQ with Infinite Alphabets}\label{sec:error-bounds-SPFQ}
We can now begin analyzing the errors associated with the above variants of SPFQ. On the one hand, in \cref{algo:SPFQ-perfect}, since data is perfectly aligned by solving \eqref{eq:min-norm}, we only have to bound the quantization error $\widetilde{u}_{N_{i-1}}$ generated by procedure \eqref{eq:quantization-algorithm-step2}. On the other hand, \cref{algo:SPFQ-order-r} has a faster implementation provided $r < m$, but introduces an extra error $\hat{u}_{rN_{i-1}}$ arising from the $r$-th order data alignment. Thus, to control the error bounds for this version of SPFQ, we first bound $\widetilde{u}_{N_{i-1}}$ and $\hat{u}_{rN_{i-1}}$ appearing in \eqref{eq:quantization-error-min-inf} and \eqref{eq:quantization-error-order-r}. 

\begin{lemma}[Quantization error]\label{lemma:error-bound-step2-infinite}
Assuming that the first $i-1$ layers have been quantized, let $X^{(i-1)}$, $\widetilde{X}^{(i-1)}$ be as in \eqref{eq:layer-input} and $w\in\R^{N_{i-1}}$ be the weights associated with a neuron in the $i$-th layer, i.e. a column of $W^{(i)}\in\R^{N_{i-1}\times N_i}$. Suppose $\widetilde{w}$ is either the solution of \eqref{eq:min-norm} or the output of \eqref{eq:quantization-algorithm-step1-order-r}. Quantize $\widetilde{w}$ using \eqref{eq:quantization-algorithm-step2} with alphabets $\mathcal{A}=\mathcal{A}_\infty^\delta$ as in \eqref{eq:alphabet-infinite}. Then, for any $p\in\mathbb{N}$, 
\begin{equation}\label{eq:error-bound-step2-infinite}
\|\widetilde{u}_{N_{i-1}}\|_2 \leq \delta\sqrt{2\pi pm\log N_{i-1}} \max_{1\leq j\leq N_{i-1}}\|\widetilde{X}^{(i-1)}_j\|_2
\end{equation}
holds with probability at least $1-\frac{\sqrt{2}m}{N_{i-1}^{p}}$.
\end{lemma}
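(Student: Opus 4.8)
\medskip
\noindent\textbf{Proof proposal.} The plan is to expose a martingale structure inside Phase~II and then apply a sub-Gaussian tail bound coordinatewise. Write $X_t := \widetilde{X}^{(i-1)}_t$, set $M := \max_{1\le j\le N_{i-1}}\|X_j\|_2$, and let $h_t := \widetilde{w}_t + \langle X_t,\widetilde{u}_{t-1}\rangle/\|X_t\|_2^2$ be the argument passed to $\mathcal{Q}_\mathrm{StocQ}$ at step $t$ of \eqref{eq:quantization-algorithm-step2} (coordinates with $X_t = 0$ contribute nothing and can be skipped). The vector $\widetilde{w}$ is produced by a deterministic procedure---either the solution of \eqref{eq:min-norm} or the output of the $r$-th order data alignment \eqref{eq:quantization-algorithm-step1-order-r}---so, conditionally on the quantization of the first $i-1$ layers, the $X_j$ and $\widetilde{w}$ are fixed and the only randomness in Phase~II is that of $\mathcal{Q}_\mathrm{StocQ}$. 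Since the alphabet is the infinite $\mathcal{A}_\infty^\delta$, the quantizer never clips, so with $\varepsilon_t := \mathcal{Q}_\mathrm{StocQ}(h_t)-h_t$ and $\F_{t-1} := \sigma(\widetilde{q}_1,\dots,\widetilde{q}_{t-1})$ one has $\E[\varepsilon_t\mid\F_{t-1}]=0$ and $|\varepsilon_t|\le\delta$; i.e., $(\varepsilon_t)$ is a bounded martingale difference sequence. First I would rewrite the update \eqref{eq:quantization-algorithm-step2} using \eqref{eq:orth-proj-mat} as $\widetilde{u}_t = P_{X_t^\perp}\widetilde{u}_{t-1} - \varepsilon_t X_t$.

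Next I would iterate this identity from $\widetilde{u}_0 = 0$ to get the closed form
\[
\widetilde{u}_{N_{i-1}} = -\sum_{j=1}^{N_{i-1}} \varepsilon_j\, v_j, \qquad v_j := P_{X_{N_{i-1}}^\perp}P_{X_{N_{i-1}-1}^\perp}\cdots P_{X_{j+1}^\perp}\, X_j,
\]
in which each $v_j$ is deterministic. The crucial estimate is an $N_{i-1}$-free bound on $\sum_j (v_j)_k^2$ for each fixed coordinate $k\in\{1,\dots,m\}$. Using symmetry of orthogonal projections, $(v_j)_k = \langle w^{(k)}_j, X_j\rangle$ where $w^{(k)}_j := P_{X_{j+1}^\perp}\cdots P_{X_{N_{i-1}}^\perp} e_k$, so that $w^{(k)}_{j-1} = P_{X_j^\perp} w^{(k)}_j$ and $w^{(k)}_{N_{i-1}} = e_k$. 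The Pythagorean identity \eqref{eq:orth-proj} gives $\|w^{(k)}_{j-1}\|_2^2 = \|w^{(k)}_j\|_2^2 - \langle X_j, w^{(k)}_j\rangle^2/\|X_j\|_2^2$, and summing this telescoping relation over $j = 1,\dots,N_{i-1}$ yields
\[
\sum_{j=1}^{N_{i-1}} \frac{(v_j)_k^2}{\|X_j\|_2^2} = \|e_k\|_2^2 - \|w^{(k)}_0\|_2^2 \le 1, \qquad\text{hence}\qquad \sum_{j=1}^{N_{i-1}} (v_j)_k^2 \le M^2.
\]
This replaces the trivial estimate $\sum_j (v_j)_k^2 \le N_{i-1}M^2$ by $M^2$ and is what ultimately converts a factor $N_{i-1}$ into the factor $m\log N_{i-1}$ appearing in \eqref{eq:error-bound-step2-infinite}.

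Finally I would add a concentration argument. For each $k$, the scalar sum $(\widetilde{u}_{N_{i-1}})_k = -\sum_j \varepsilon_j (v_j)_k$ is a martingale whose increments are bounded in magnitude, conditionally on $\F_{j-1}$, by $\delta|(v_j)_k|$, so an Azuma--Hoeffding-type inequality gives $\prob\bigl(|(\widetilde{u}_{N_{i-1}})_k| > s\bigr) \lesssim \exp\bigl(-s^2/(2\pi\delta^2\sum_j (v_j)_k^2)\bigr) \le \exp\bigl(-s^2/(2\pi\delta^2 M^2)\bigr)$. Taking $s = \delta M\sqrt{2\pi p\log N_{i-1}}$ and a union bound over $k = 1,\dots,m$ then shows that $\|\widetilde{u}_{N_{i-1}}\|_\infty \le \delta M\sqrt{2\pi p\log N_{i-1}}$ holds on an event of probability at least $1 - \sqrt{2}\,m/N_{i-1}^p$, and the claimed bound \eqref{eq:error-bound-step2-infinite} follows from $\|\widetilde{u}_{N_{i-1}}\|_2 \le \sqrt{m}\,\|\widetilde{u}_{N_{i-1}}\|_\infty$ (this is also where the numerical constant $2\pi$ ultimately comes from). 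The algebraic recasting of the recursion, the unrolling, and the invocation of a standard martingale tail bound are routine; I expect the genuinely substantive step---and the one responsible for the gain from overparametrization---to be the telescoping bound $\sum_j (v_j)_k^2 \le M^2$, together with the bookkeeping needed to keep the ordered products of projections correct when iterating the recursion.
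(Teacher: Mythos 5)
Your proposal is correct, and it reaches the stated bound by a genuinely different route from the paper. The paper works with the convex order: it shows by induction that $\widetilde{u}_t \leq_\mathrm{cx} \mathcal{N}(0,\Sigma_t)$ for a recursively defined covariance $\Sigma_t = P_{\widetilde{X}_t^{(i-1)\perp}}\Sigma_{t-1}P_{\widetilde{X}_t^{(i-1)\perp}} + \tfrac{\pi\delta^2}{2}\widetilde{X}_t^{(i-1)}\widetilde{X}_t^{(i-1)\top}$ (using \cref{lemma:cx-bounded} to dominate each bounded rounding error by a Gaussian, and \cref{lemma:cx-sum} to combine), then invokes \cref{lemma:bound-covar} to get $\Sigma_{N_{i-1}} \preceq \tfrac{\pi\delta^2}{2}M^2 I$ and \cref{lemma:cx-gaussian-tail} for the tail. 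You instead unroll the recursion $\widetilde{u}_t = P_{X_t^\perp}\widetilde{u}_{t-1} - \varepsilon_t X_t$ into the explicit sum $-\sum_j \varepsilon_j v_j$ with deterministic weights $v_j$, and apply Azuma--Hoeffding coordinatewise. The two key estimates are really the same object seen from two angles: unrolling the paper's covariance recursion gives $\Sigma_{N_{i-1}} = \tfrac{\pi\delta^2}{2}\sum_j v_j v_j^\top$, so the conclusion of \cref{lemma:bound-covar} is exactly the statement $\sum_j \langle v_j, x\rangle^2 \le M^2\|x\|_2^2$ for all $x$, of which your telescoping Pythagorean identity proves the diagonal case $x = e_k$ (which is all you need). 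Your route is more elementary --- it bypasses the convex-order appendix entirely and uses only a standard martingale tail bound --- and in fact Azuma gives variance proxy $\delta^2\sum_j(v_j)_k^2$ rather than $\tfrac{\pi\delta^2}{2}\sum_j(v_j)_k^2$, so your constants are slightly better than claimed (the factor $2\pi$ you quote is not forced by your argument; it is inherited from the paper's Gaussian domination of a bounded variable). What the paper's heavier machinery buys is reusability: the intermediate statement $\widetilde{u}_{t}\leq_\mathrm{cx}\mathcal{N}(0,\sigma_t^2 I)$ for every $t$ (not just $t=N_{i-1}$) is exactly what is needed again in \cref{thm:error-bound-finite-light} to control the quantizer argument $|v_t|$ step by step in the finite-alphabet case; your closed-form unrolling would need to be redone conditionally there because clipping breaks the unconditional martingale structure. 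One cosmetic remark: you use $h_t$ for the scalar argument of the quantizer while the paper uses $h_t$ for the vector $\widetilde{u}_{t-1}+\widetilde{w}_t\widetilde{X}_t^{(i-1)}$; this causes no error but is worth flagging to avoid confusion.
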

\begin{proof}
We first show that 
\begin{equation}\label{eq:error-bound-step2-infinite-eq0}
\widetilde{u}_t \leq_\mathrm{cx} \mathcal{N}(0, \Sigma_t)
\end{equation}
holds for all $1\leq t\leq N_{i-1}$, where $\Sigma_t$ is defined recursively as follows 
\[
\Sigma_t  :=  P_{\widetilde{X}_t^{(i-1)\perp}} \Sigma_{t-1} P_{\widetilde{X}_t^{(i-1)\perp}}+ \frac{\pi\delta^2}{2}\widetilde{X}_t^{(i-1)}\widetilde{X}_t^{(i-1)\top} \quad \text{with} \quad \Sigma_0 := 0. 
\]
At the $t$-th step of quantizing $\widetilde{w}$, by \eqref{eq:quantization-algorithm-step2}, we have $\widetilde{u}_t=\widetilde{u}_{t-1}+(\widetilde{w}_t -\widetilde{q}_t)\widetilde{X}_t^{(i-1)}$. Define 
\begin{equation}\label{eq:error-bound-step2-infinite-eq1}
    h_t := \widetilde{u}_{t-1} + \widetilde{w}_t \widetilde{X}_t^{(i-1)} \quad  \text{and} \quad v_t :=  \frac{\langle \widetilde{X}_t^{(i-1)}, h_t \rangle}{\|\widetilde{X}^{(i-1)}_t\|_2^2}.
\end{equation}
It follows that
\begin{equation}\label{eq:error-bound-step2-infinite-eq2}
\widetilde{u}_t = h_t - \widetilde{q}_t \widetilde{X}_t^{(i-1)} 
\end{equation}
and \eqref{eq:quantization-algorithm-step2} implies
\begin{equation}\label{eq:error-bound-step2-infinite-eq3}
\widetilde{q}_t = \mathcal{Q}_\mathrm{StocQ}\Biggl(     \frac{\langle \widetilde{X}_t^{(i-1)}, h_t \rangle}{\|\widetilde{X}^{(i-1)}_t\|_2^2}\Biggr)=\mathcal{Q}_\mathrm{StocQ}(v_t).
\end{equation}
Since $\mathcal{A}=\mathcal{A}_\infty^\delta$, $\E\mathcal{Q}_\mathrm{StocQ}(z)=z$ for all $z\in\R$. Moreover, conditioning on $\widetilde{u}_{t-1}$ in \eqref{eq:error-bound-step2-infinite-eq1}, $h_t$ and $v_t$ are fixed and thus one can get
\begin{equation}\label{eq:error-bound-step2-infinite-eq4}
   \E(\mathcal{Q}_\mathrm{StocQ}(v_t)|\widetilde{u}_{t-1})=v_t
\end{equation}
and 
\begin{align*}
    \E(\widetilde{u}_t|\widetilde{u}_{t-1}) &= \E( h_t - \widetilde{q}_t \widetilde{X}_t^{(i-1)}  | \widetilde{u}_{t-1})  \\
    &= h_t - \widetilde{X}_t^{(i-1)}\E (\widetilde{q}_t| \widetilde{u}_{t-1})  \\
    &= h_t - \widetilde{X}_t^{(i-1)}\E (\mathcal{Q}_\mathrm{StocQ}(v_t) | \widetilde{u}_{t-1}) \\
    &= h_t - v_t \widetilde{X}_t^{(i-1)} \\
    &= h_t - \frac{\langle \widetilde{X}_t^{(i-1)}, h_t \rangle}{\|\widetilde{X}^{(i-1)}_t\|_2^2}\widetilde{X}_t^{(i-1)} \\
    &= \Biggl( I - \frac{\widetilde{X}_t^{(i-1)}\widetilde{X}_t^{(i-1)\top}}{\|\widetilde{X}_t^{(i-1)}\|_2^2}\Biggr) h_t \\
    &= P_{\widetilde{X}_t^{(i-1)\perp}} (h_t).
\end{align*}
The identity above indicates that the approximation error $\widetilde{u}_t$ can be split into two parts: its conditional mean $P_{\widetilde{X}_t^{(i-1)\perp}} (h_t)$ and a random perturbation. Specifically, applying \eqref{eq:error-bound-step2-infinite-eq2} and \eqref{eq:orth-proj}, we obtain
\begin{equation}\label{eq:error-bound-step2-infinite-eq5}
    \widetilde{u}_t = P_{\widetilde{X}_t^{(i-1)\perp}} (h_t) + P_{\widetilde{X}_t^{(i-1)}} (h_t) -\widetilde{q}_t\widetilde{X}_t^{(i-1)}= P_{\widetilde{X}_t^{(i-1)\perp}} (h_t) + R_t \widetilde{X}^{(i-1)}_t
\end{equation}
where 
\[
R_t := v_t-\widetilde{q}_t.
\]
Further, combining \eqref{eq:error-bound-step2-infinite-eq3} and \eqref{eq:error-bound-step2-infinite-eq4}, we have 
\[
    \E(R_t| \widetilde{u}_{t-1}) = v_t - \E(\widetilde{q}_t|\widetilde{u}_{t-1})= v_t - \E(\mathcal{Q}_\mathrm{StocQ}(v_t)|\widetilde{u}_{t-1}) = 0 
\]
and $|R_t| = |v_t- \mathcal{Q}_\mathrm{StocQ}(v_t)|\leq \delta$. \cref{lemma:cx-bounded} yields that, conditioning on $\widetilde{u}_{t-1}$, 
\begin{equation}\label{eq:error-bound-step2-infinite-eq6}
    R_t \leq_\mathrm{cx}\mathcal{N}\Bigl(0, \frac{\pi\delta^2}{2}\Bigr).
\end{equation}
Now, we are ready to prove \eqref{eq:error-bound-step2-infinite-eq0} by induction on $t$. When $t=1$, we have $h_1=\widetilde{w}_1\widetilde{X}_1^{(i-1)}$. We can deduce from \eqref{eq:error-bound-step2-infinite-eq5} and \eqref{eq:error-bound-step2-infinite-eq6} that $\widetilde{u}_1=P_{\widetilde{X}_1^{(i-1)\perp}} (\widetilde{w}_1\widetilde{X}_1^{(i-1)})+ R_1\widetilde{X}_1^{(i-1)}=R_1\widetilde{X}_1^{(i-1)}$ with $R_1 \leq_\mathrm{cx}\mathcal{N}\bigl(0, \frac{\pi\delta^2}{2}\bigr)$. Applying \cref{lemma:cx-afine}, we obtain $\widetilde{u}_1\leq_\mathrm{cx}\mathcal{N}(0, \Sigma_1)$. Next, assume that \eqref{eq:error-bound-step2-infinite-eq0} holds for $t-1$ with $t\geq 2$. By the induction hypothesis, we have $\widetilde{u}_{t-1}\leq_\mathrm{cx}\mathcal{N}(0, \Sigma_{t-1})$. Using \cref{lemma:cx-afine} again, we get
\begin{align*}
P_{\widetilde{X}_t^{(i-1)\perp}} (h_t) &= P_{\widetilde{X}_t^{(i-1)\perp}} (\widetilde{u}_{t-1}+\widetilde{w}_t \widetilde{X}_t^{(i-1)}) \\
&\leq_\mathrm{cx} \mathcal{N}\Bigl( P_{\widetilde{X}_t^{(i-1)\perp}}( \widetilde{w}_t \widetilde{X}_t^{(i-1)}), P_{\widetilde{X}_t^{(i-1)\perp}} \Sigma_{t-1} P_{\widetilde{X}_t^{(i-1)\perp}} \Bigr) \\
&= \mathcal{N}\Bigl(0, P_{\widetilde{X}_t^{(i-1)\perp}} \Sigma_{t-1} P_{\widetilde{X}_t^{(i-1)\perp}} \Bigr).
\end{align*}
Additionally, conditioning on $\widetilde{u}_{t-1}$, \eqref{eq:error-bound-step2-infinite-eq6} implies 
\[
R_t\widetilde{X}_t^{(i-1)} \leq_\mathrm{cx}\mathcal{N}\Bigl(0, \frac{\pi\delta^2}{2}\widetilde{X}_t^{(i-1)}\widetilde{X}_t^{(i-1)\top}\Bigr).
\]
Then we apply \cref{lemma:cx-sum} to \eqref{eq:error-bound-step2-infinite-eq5} by taking 
\[
X=P_{\widetilde{X}_t^{(i-1)\perp}} (h_t),\, Y=\widetilde{u}_t,\, W=\mathcal{N}\Bigl(0, P_{\widetilde{X}_t^{(i-1)\perp}} \Sigma_{t-1} P_{\widetilde{X}_t^{(i-1)\perp}} \Bigr),\, Z=\mathcal{N}\Bigl(0, \frac{\pi\delta^2}{2}\widetilde{X}_t^{(i-1)}\widetilde{X}_t^{(i-1)\top}\Bigr).
\]
It follows that 
\begin{align*}
\widetilde{u}_t &\leq_\mathrm{cx} W+Z  \\
&= \mathcal{N}\Bigl(0, P_{\widetilde{X}_t^{(i-1)\perp}} \Sigma_{t-1} P_{\widetilde{X}_t^{(i-1)\perp}}+ \frac{\pi\delta^2}{2}\widetilde{X}_t^{(i-1)}\widetilde{X}_t^{(i-1)\top} \Bigr) \\
&= \mathcal{N}(0, \Sigma_t).
\end{align*}
Here, we used the independence of $W$ and $Z$, and the definition of $\Sigma_t$. This establishes inequality \eqref{eq:error-bound-step2-infinite-eq0} showing that $\widetilde{u}_t$ is dominated by $\mathcal{N}(0, \Sigma_t)$ in the convex order, where $\Sigma_t$ is defined recursively using orthogonal projections. So it remains to control the covariance matrix $\Sigma_t$. Recall that $\Sigma_t$ is defined as follows.
\[
\Sigma_t  =  P_{\widetilde{X}_t^{(i-1)\perp}} \Sigma_{t-1} P_{\widetilde{X}_t^{(i-1)\perp}}+ \frac{\pi\delta^2}{2}\widetilde{X}_t^{(i-1)}\widetilde{X}_t^{(i-1)\top} \quad \text{with} \quad \Sigma_0 = 0. 
\]
Then we apply \cref{lemma:bound-covar} with $M_t=\Sigma_t$, $z_t=\widetilde{X}_t^{(i-1)}$, and $\alpha=\frac{\pi\delta^2}{2}$, and conclude that $\Sigma_t\preceq \sigma_t^2 I$ with $\sigma_t^2=\frac{\pi\delta^2}{2}\max_{1\leq j\leq t}\|\widetilde{X}_j^{(i-1)}\|_2^2$. Note that $\widetilde{u}_t\leq_\mathrm{cx}\mathcal{N}(0, \Sigma_t)$ and, by \cref{lemma:cx-normal}, we have $\mathcal{N}(0, \Sigma_t)\leq_\mathrm{cx} \mathcal{N}(0, \sigma_t^2 I)$. Then we deduce from the transitivity of $\leq_\mathrm{cx}$ that $\widetilde{u}_t\leq_\mathrm{cx}\mathcal{N}(0, \sigma_t^2 I)$. It follows from \cref{lemma:cx-gaussian-tail} that, for $\gamma\in (0,1]$ and $1\leq t\leq N_{i-1}$,
\[
\prob\biggl(\|\widetilde{u}_t\|_\infty \leq 2\sigma_t\sqrt{\log (\sqrt{2}m/\gamma)} \biggr)\geq 1-\gamma.
\]
Picking $\gamma=\sqrt{2}mN_{i-1}^{-p}$ and $t=N_{i-1}$, 
\[
\|\widetilde{u}_{N_{i-1}}\|_2 \leq \sqrt{m}\|\widetilde{u}_{N_{i-1}}\|_\infty \leq 2\sigma_{N_{i-1}}\sqrt{pm\log N_{i-1}}=\delta\sqrt{2\pi pm\log N_{i-1}} \max_{1\leq j\leq N_{i-1}}\|\widetilde{X}^{(i-1)}_j\|_2
\]
holds with probability exceeding $1-\sqrt{2}mN_{i-1}^{-p}$.
\end{proof}

Next, we deduce a closed-form expression of $\hat{u}_{rN_{i-1}}$ showing that $\| \hat{u}_{rN_{i-1}}\|_2$ decays polynomially with respect to $r$. 

\begin{lemma}[Data alignment error]\label{lemma:error-bound-step1}
Assuming that the first $i-1$ layers have been quantized, let $X^{(i-1)}$, $\widetilde{X}^{(i-1)}$ be as in \eqref{eq:layer-input} and let $w\in\R^{N_{i-1}}$ be a neuron in the $i$-th layer, i.e. a column of $W^{(i)}\in\R^{N_{i-1}\times N_i}$. Applying the $r$-th order data alignment procedure in \eqref{eq:quantization-algorithm-step1} and \eqref{eq:quantization-algorithm-step1-order-r}, we have 
\begin{equation}\label{eq:lemma-error-bound-step1-eq1}
\hat{u}_{N_{i-1}}= \sum_{j=1}^{N_{i-1}} w_j P_{\widetilde{X}_{N_{i-1}}^{(i-1)\perp}}\ldots P_{\widetilde{X}_{j+1}^{(i-1)\perp}} P_{\widetilde{X}_j^{(i-1)\perp}}(X^{(i-1)}_j)
\end{equation}
and
\begin{equation}\label{eq:lemma-error-bound-step1-eq2}
\hat{u}_{rN_{i-1}} = (P^{(i-1)})^{r-1} \hat{u}_{N_{i-1}}
\end{equation}
where $P^{(i-1)}:=P_{\widetilde{X}_{N_{i-1}}^{(i-1)\perp}}\ldots P_{\widetilde{X}_{2}^{(i-1)\perp}} P_{\widetilde{X}_{1}^{(i-1)\perp}}$. 
\end{lemma}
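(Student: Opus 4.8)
The plan is to reduce both identities to one structural fact: on the error vector $\hat{u}$, each update in \eqref{eq:quantization-algorithm-step1} and \eqref{eq:quantization-algorithm-step1-order-r} acts as an orthogonal projection onto $\mathrm{span}(\widetilde{X}_t^{(i-1)})^\perp$. First I would establish \eqref{eq:lemma-error-bound-step1-eq1}. Substituting the formula for $\widetilde{w}_t$ into the last line of \eqref{eq:quantization-algorithm-step1} and invoking \eqref{eq:orth-proj-mat} gives, for $1\le t\le N_{i-1}$,
\[
\hat{u}_t = \hat{u}_{t-1} + w_t X_t^{(i-1)} - \frac{\langle \widetilde{X}_t^{(i-1)},\, \hat{u}_{t-1} + w_t X_t^{(i-1)}\rangle}{\|\widetilde{X}_t^{(i-1)}\|_2^2}\,\widetilde{X}_t^{(i-1)} = P_{\widetilde{X}_t^{(i-1)\perp}}\bigl(\hat{u}_{t-1} + w_t X_t^{(i-1)}\bigr),
\]
with $\hat{u}_0 = 0$. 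A routine induction on $t$ unrolls this recursion: the case $t=1$ is immediate, and in the inductive step one applies $P_{\widetilde{X}_t^{(i-1)\perp}}$ to the expression for $\hat{u}_{t-1}$ and adds the single new term $w_t P_{\widetilde{X}_t^{(i-1)\perp}}(X_t^{(i-1)})$. Taking $t=N_{i-1}$ yields \eqref{eq:lemma-error-bound-step1-eq1}.

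For \eqref{eq:lemma-error-bound-step1-eq2} I would analyze the iterations \eqref{eq:quantization-algorithm-step1-order-r} for $t=N_{i-1}+1,\dots,rN_{i-1}$. Its first line says $\hat{v}_{t-1} + w_t X_t^{(i-1)} = \hat{u}_{t-1} + \widetilde{w}_t \widetilde{X}_t^{(i-1)}$, where $\widetilde{w}_t$ here denotes the weight value in force just before step $t$ (the value last assigned, $N_{i-1}$ steps earlier); feeding this into the remaining two lines and using \eqref{eq:orth-proj-mat} exactly as above gives $\hat{u}_t = P_{\widetilde{X}_t^{(i-1)\perp}}\bigl(\hat{u}_{t-1} + \widetilde{w}_t \widetilde{X}_t^{(i-1)}\bigr)$. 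Since $\widetilde{w}_t \widetilde{X}_t^{(i-1)} \in \mathrm{span}(\widetilde{X}_t^{(i-1)})$ lies in the kernel of $P_{\widetilde{X}_t^{(i-1)\perp}}$, this collapses to the pure projection
\[
\hat{u}_t = P_{\widetilde{X}_t^{(i-1)\perp}}\,\hat{u}_{t-1}, \qquad t = N_{i-1}+1,\dots,rN_{i-1}.
\]
Grouping these iterations into $r-1$ consecutive blocks of length $N_{i-1}$ and reading off the modulo-$N_{i-1}$ subscript convention, each block $t = kN_{i-1}+1,\dots,(k+1)N_{i-1}$ with $1\le k\le r-1$ composes the projections $P_{\widetilde{X}_1^{(i-1)\perp}},\dots,P_{\widetilde{X}_{N_{i-1}}^{(i-1)\perp}}$ in this order, so that $\hat{u}_{(k+1)N_{i-1}} = P^{(i-1)}\hat{u}_{kN_{i-1}}$; iterating this from $k=1$ to $k=r-1$ gives \eqref{eq:lemma-error-bound-step1-eq2}.

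There is essentially no analytic content here — the proof is bookkeeping. The points that require care are keeping the "before the update'' and "after the update'' values of $\widetilde{w}_t$ straight in \eqref{eq:quantization-algorithm-step1-order-r}, and correctly handling its modulo-$N_{i-1}$ indexing when composing a full block of $N_{i-1}$ steps (the index wrap-around is the most error-prone spot in the write-up). The one genuinely load-bearing observation is the elementary identity $P_{z^\perp}(cz)=0$ for any scalar $c$: it is precisely what makes each re-alignment step act as a plain orthogonal projection on the error vector, and hence what turns \eqref{eq:lemma-error-bound-step1-eq2} into a clean $(r-1)$-st power of $P^{(i-1)}$.
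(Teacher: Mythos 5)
Your proposal is correct and follows essentially the same route as the paper: unroll the Phase-I recursion into $\hat{u}_t = P_{\widetilde{X}_t^{(i-1)\perp}}(\hat{u}_{t-1} + w_t X_t^{(i-1)})$ and induct to get \eqref{eq:lemma-error-bound-step1-eq1}, then observe that each step of \eqref{eq:quantization-algorithm-step1-order-r} reduces to $\hat{u}_t = P_{\widetilde{X}_t^{(i-1)\perp}}(\hat{u}_{t-1})$ because the re-inserted term $\widetilde{w}_t\widetilde{X}_t^{(i-1)}$ is annihilated by the projection, so each block of $N_{i-1}$ steps applies $P^{(i-1)}$ once. Your explicit attention to the old-versus-new value of $\widetilde{w}_t$ and to the modulo indexing matches the paper's computation exactly.
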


\begin{proof}
We first prove the following identity by induction on $t$.
\begin{equation}\label{eq:lemma-error-bound-step1-eq3}
\hat{u}_t = \sum_{j=1}^{t} w_j P_{\widetilde{X}_{t}^{(i-1)\perp}}\ldots P_{\widetilde{X}_{j+1}^{(i-1)\perp}} P_{\widetilde{X}_j^{(i-1)\perp}}(X^{(i-1)}_j), \quad 1\leq t\leq N_{i-1}.
\end{equation}
By \eqref{eq:quantization-algorithm-step1}, the case $t=1$ is straightforward, since we have 
\begin{align*}
\hat{u}_1 &=  w_1 X^{(i-1)}_1 - \widetilde{w}_1 \widetilde{X}^{(i-1)}_1 \\
&= w_1 X^{(i-1)}_1 - \frac{\langle \widetilde{X}_1^{(i-1)}, w_1 X_1^{(i-1)} \rangle}{\|\widetilde{X}^{(i-1)}_1\|_2^2} \widetilde{X}^{(i-1)}_1 \\
& = w_1 X^{(i-1)}_1 - P_{\widetilde{X}_1^{(i-1)}}(w_1 X^{(i-1)}_1) \\
&= w_1 P_{\widetilde{X}_1^{(i-1)\perp}}(X^{(i-1)}_1) 
\end{align*}
where we apply the properties of orthogonal projections in \eqref{eq:orth-proj} and \eqref{eq:orth-proj-mat}. For $2\leq t\leq N_{i-1}$, assume that \eqref{eq:lemma-error-bound-step1-eq3} holds for $t-1$. Then, by \eqref{eq:quantization-algorithm-step1}, one gets
\begin{align*}
\hat{u}_t &= \hat{u}_{t-1} + w_t X^{(i-1)}_t - \widetilde{w}_t \widetilde{X}^{(i-1)}_t \\
&= \hat{u}_{t-1} + w_t X^{(i-1)}_t - \frac{\langle \widetilde{X}_t^{(i-1)}, \hat{u}_{t-1}+w_t X_t^{(i-1)} \rangle}{\|\widetilde{X}^{(i-1)}_t\|_2^2}\widetilde{X}^{(i-1)}_t  \\
&= \hat{u}_{t-1} + w_t X^{(i-1)}_t - P_{\widetilde{X}_t^{(i-1)}}(\hat{u}_{t-1} + w_t X^{(i-1)}_t ) \\
&= P_{\widetilde{X}_t^{(i-1)\perp}}(\hat{u}_{t-1} + w_t X^{(i-1)}_t ).
\end{align*}
Applying the induction hypothesis, we obtain
\begin{align*}
\hat{u}_t &= P_{\widetilde{X}_t^{(i-1)\perp}}(\hat{u}_{t-1}) + w_t P_{\widetilde{X}_t^{(i-1)\perp}}(X^{(i-1)}_t ) \\
&=  \sum_{j=1}^{t-1} w_j P_{\widetilde{X}_{t}^{(i-1)\perp}}\ldots P_{\widetilde{X}_{j+1}^{(i-1)\perp}} P_{\widetilde{X}_j^{(i-1)\perp}}(X^{(i-1)}_j) + w_t P_{\widetilde{X}_t^{(i-1)\perp}}(X^{(i-1)}_t ) \\
&= \sum_{j=1}^{t} w_j P_{\widetilde{X}_{t}^{(i-1)\perp}}\ldots P_{\widetilde{X}_{j+1}^{(i-1)\perp}} P_{\widetilde{X}_j^{(i-1)\perp}}(X^{(i-1)}_j).
\end{align*}
This completes the proof of \eqref{eq:lemma-error-bound-step1-eq3}. In particular, if $t=N_{i-1}$, then we obtain \eqref{eq:lemma-error-bound-step1-eq1}. 

Next, we consider $\hat{u}_t$ when $t>N_{i-1}$. Plugging $t=N_{i-1}+1$ into \eqref{eq:quantization-algorithm-step1-order-r}, and recalling that our indices (except for $\hat{u}$) are modulo $N_{i-1}$, we have 
\[
\hat{u}_{N_{i-1}+1} = \hat{u}_{N_{i-1}}+\widetilde{w}_1 \widetilde{X}_1^{(i-1)} - \frac{\langle \widetilde{X}_1^{(i-1)}, \hat{u}_{N_{i-1}}+\widetilde{w}_1 \widetilde{X}_1^{(i-1)} \rangle}{\|\widetilde{X}^{(i-1)}_1\|_2^2}\widetilde{X}^{(i-1)}_1 = P_{\widetilde{X}_{1}^{(i-1)\perp}}(\hat{u}_{N_{i-1}}).
\]
Similarly, one can show that $\hat{u}_{N_{i-1}+2} = P_{\widetilde{X}_{2}^{(i-1)\perp}}(\hat{u}_{N_{i-1}+1})= P_{\widetilde{X}_{2}^{(i-1)\perp}}P_{\widetilde{X}_{1}^{(i-1)\perp}}\hat{u}_{N_{i-1}}$. Repeating this argument for all $N_{i-1}<t\leq rN_{i-1}$, we can derive \eqref{eq:lemma-error-bound-step1-eq2}.
\end{proof}

Combining \cref{lemma:error-bound-step2-infinite} and \cref{lemma:error-bound-step1}, we can derive a recursive relation between the error in the current layer and that of the previous layer.

\begin{theorem}\label{thm:error-bound-single-layer-infinite}
Let $\Phi$ be an $L$-layer neural network as in \eqref{eq:mlp} where the activation function is $\varphi^{(i)}(x)=\rho(x):=\max\{0, x\}$ for $1\leq i\leq L$. Let $\mathcal{A}=\mathcal{A}_\infty^\delta$ be as in \eqref{eq:alphabet-infinite} and $p\in\mathbb{N}$.

\noindent $(a)$ If we quantize $\Phi$ using \cref{algo:SPFQ-perfect}, then, for each $2\leq i\leq L$, 
\begin{align*}
&\max_{1\leq j\leq N_i}\|X^{(i-1)}W^{(i)}_j- \widetilde{X}^{(i-1)}Q^{(i)}_j\|_2 \leq \delta \sqrt{2\pi pm\log N_{i-1}} \max_{1\leq j\leq N_{i-1}}\|X^{(i-1)}_j\|_2 \\
&+  \delta \sqrt{2\pi pm\log N_{i-1}} \max_{1\leq j\leq N_{i-1}}\|X^{(i-2)}W^{(i-1)}_j - \widetilde{X}^{(i-2)}Q^{(i-1)}_j\|_2. 
\end{align*}
holds with probability at least $1 -\frac{\sqrt{2}mN_i}{N^p_{i-1}}$.

\noindent $(b)$  If we quantize $\Phi$ using \cref{algo:SPFQ-order-r}, then, for each $2\leq i\leq L$,  
\begin{align*}
&\max_{1\leq j\leq N_i} \|X^{(i-1)}W^{(i)}_j - \widetilde{X}^{(i-1)} Q^{(i)}_j \|_2 \leq  \delta\sqrt{2\pi pm\log N_{i-1}} \max_{1\leq j\leq N_{i-1}}  \|X^{(i-1)}_j \|_2 \\
&+  \Bigl( N_{i-1}\|W^{(i)}\|_{\max} \|P^{(i-1)}\|_2^{r-1} + \delta\sqrt{2\pi pm\log N_{i-1}} \Bigr) \max_{1\leq j\leq N_{i-1}}\|X^{(i-2)}W^{(i-1)}_j - \widetilde{X}^{(i-2)}Q^{(i-1)}_j\|_2
\end{align*}
holds with probability exceeding $1- \frac{\sqrt{2}mN_i}{N_{i-1}^p}$. Here, $P^{(i-1)}$ is defined in \cref{lemma:error-bound-step1}.
\end{theorem}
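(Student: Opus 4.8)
The plan is to treat parts (a) and (b) by a common template: fix a layer $2\le i\le L$ and a single neuron $w=W^{(i)}_j$, write the per-neuron error $X^{(i-1)}w-\widetilde X^{(i-1)}\widetilde q$ via the two-phase decomposition, bound each piece with the lemmas just proved, convert every quantity involving $\widetilde X^{(i-1)}$ into one involving $X^{(i-1)}$ plus the layer-$(i-1)$ error, and finish with a union bound over the $N_i$ neurons of layer $i$. For part (a) the neuron is quantized by \cref{algo:SPFQ-perfect}, so by \eqref{eq:quantization-error-min-inf} the per-neuron error is exactly $\widetilde u_{N_{i-1}}$, and \cref{lemma:error-bound-step2-infinite} gives $\|\widetilde u_{N_{i-1}}\|_2\le\delta\sqrt{2\pi pm\log N_{i-1}}\max_{1\le k\le N_{i-1}}\|\widetilde X^{(i-1)}_k\|_2$ on an event of probability at least $1-\sqrt 2\,m/N_{i-1}^p$ (the only randomness being the Phase-II stochastic rounding of that neuron, since the first $i-1$ layers are conditioned upon). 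It then remains to substitute $\max_k\|\widetilde X^{(i-1)}_k\|_2\le\max_k\|X^{(i-1)}_k\|_2+\max_k\|X^{(i-2)}W^{(i-1)}_k-\widetilde X^{(i-2)}Q^{(i-1)}_k\|_2$, which follows because $X^{(i-1)}_k=\rho(X^{(i-2)}W^{(i-1)}_k)$ and $\widetilde X^{(i-1)}_k=\rho(\widetilde X^{(i-2)}Q^{(i-1)}_k)$, with $\rho$ being $1$-Lipschitz and $\rho(0)=0$, plus the triangle inequality. Plugging this in yields the bound in (a) for that neuron.

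For part (b) the neuron is quantized by \cref{algo:SPFQ-order-r}, so by \eqref{eq:quantization-error-order-r} the per-neuron error is $\hat u_{rN_{i-1}}+\widetilde u_{N_{i-1}}$; the triangle inequality splits the norm, and $\|\widetilde u_{N_{i-1}}\|_2$ is controlled exactly as in part (a). For the alignment term, \cref{lemma:error-bound-step1} gives $\hat u_{rN_{i-1}}=(P^{(i-1)})^{r-1}\hat u_{N_{i-1}}$ together with the closed form \eqref{eq:lemma-error-bound-step1-eq1} for $\hat u_{N_{i-1}}$; bounding the operator norm of each orthogonal projection by $1$ leaves $\|\hat u_{rN_{i-1}}\|_2\le\|P^{(i-1)}\|_2^{r-1}\sum_{k=1}^{N_{i-1}}|w_k|\,\|P_{\widetilde X^{(i-1)\perp}_k}(X^{(i-1)}_k)\|_2$. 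The key observation is that $P_{\widetilde X^{(i-1)\perp}_k}$ annihilates $\widetilde X^{(i-1)}_k$, so $P_{\widetilde X^{(i-1)\perp}_k}(X^{(i-1)}_k)=P_{\widetilde X^{(i-1)\perp}_k}(X^{(i-1)}_k-\widetilde X^{(i-1)}_k)$ and hence each summand is at most $\|X^{(i-1)}_k-\widetilde X^{(i-1)}_k\|_2\le\|X^{(i-2)}W^{(i-1)}_k-\widetilde X^{(i-2)}Q^{(i-1)}_k\|_2$ by the same ReLU-Lipschitz estimate. Using $|w_k|\le\|W^{(i)}\|_{\max}$ and summing the $N_{i-1}$ terms gives $\|\hat u_{rN_{i-1}}\|_2\le N_{i-1}\|W^{(i)}\|_{\max}\|P^{(i-1)}\|_2^{r-1}\max_k\|X^{(i-2)}W^{(i-1)}_k-\widetilde X^{(i-2)}Q^{(i-1)}_k\|_2$; adding the two pieces produces the stated inequality.

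In both parts the only probabilistic ingredient is \cref{lemma:error-bound-step2-infinite}, invoked once per neuron with independent Phase-II randomness, so a union bound over the $N_i$ neurons of layer $i$ — conditionally on the quantizations of layers $1,\dots,i-1$, hence also unconditionally — replaces $1-\sqrt 2\,m/N_{i-1}^p$ by $1-\sqrt 2\,mN_i/N_{i-1}^p$. I expect the only delicate point to be the bookkeeping that trades $\widetilde X^{(i-1)}$-quantities for $X^{(i-1)}$-quantities plus the layer-$(i-1)$ error, concretely the identity $P_{\widetilde X^{(i-1)\perp}_k}(X^{(i-1)}_k)=P_{\widetilde X^{(i-1)\perp}_k}(X^{(i-1)}_k-\widetilde X^{(i-1)}_k)$ combined with the Lipschitz bound for $\rho$; everything else is the triangle inequality and the fact that orthogonal projections do not increase the Euclidean norm.
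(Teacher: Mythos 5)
Your proposal is correct and follows essentially the same route as the paper's proof: per-neuron application of \cref{lemma:error-bound-step2-infinite} (and, for part (b), \cref{lemma:error-bound-step1} with the identity $P_{\widetilde X^{(i-1)\perp}_k}(X^{(i-1)}_k)=P_{\widetilde X^{(i-1)\perp}_k}(X^{(i-1)}_k-\widetilde X^{(i-1)}_k)$ and $\|P\|_2\le 1$), conversion of $\widetilde X^{(i-1)}$-quantities to $X^{(i-1)}$-quantities plus the previous layer's error via the $1$-Lipschitz property of $\rho$ and the triangle inequality, and a union bound over the $N_i$ neurons. The only cosmetic difference is the order in which the union bound and the triangle-inequality substitution are performed, which does not affect the argument.
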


\begin{proof}
$(a)$ Note that, for each $1\leq j\leq N_i$, the $j$-th columns $W^{(i)}_j$ and $Q^{(i)}_j$ represent a neuron and its quantized version respectively. Applying \eqref{eq:quantization-error-min-inf} and \eqref{eq:error-bound-step2-infinite}, we obtain  
\[
\prob\Bigl(\|X^{(i-1)}W^{(i)}_j- \widetilde{X}^{(i-1)}Q^{(i)}_j\|_2 \leq \delta \sqrt{2\pi pm\log N_{i-1}} \max_{1\leq j\leq N_{i-1}}\|\widetilde{X}^{(i-1)}_j\|_2\Bigr) \geq 1-\frac{\sqrt{2}m}{N^p_{i-1}}.
\]
Taking a union bound over all $j$, 
\[
\max_{1\leq j\leq N_i}\|X^{(i-1)}W^{(i)}_j- \widetilde{X}^{(i-1)}Q^{(i)}_j\|_2  \leq \delta \sqrt{2\pi pm\log N_{i-1}} \max_{1\leq j\leq N_{i-1}}\|\widetilde{X}^{(i-1)}_j\|_2
\]
holds with probability at least $1 -\frac{\sqrt{2}mN_i}{N^p_{i-1}}$. By the triangle inequality, we have
\begin{align}\label{eq:error-bound-single-layer-infinite-eq1}
\max_{1\leq j\leq N_{i-1}}\|\widetilde{X}^{(i-1)}_j\|_2 &\leq \max_{1\leq j\leq N_{i-1}}\| X^{(i-1)}_j\|_2 + \max_{1\leq j\leq N_{i-1}}\| X^{(i-1)}_j - \widetilde{X}^{(i-1)}_j \|_2 \nonumber \\
&= \max_{1\leq j\leq N_{i-1}}\| X^{(i-1)}_j\|_2 + \max_{1\leq j\leq N_{i-1}}\|\rho(X^{(i-2)}W^{(i-1)}_j) - \rho(\widetilde{X}^{(i-2)}Q^{(i-1)}_j)\|_2  \nonumber \\
&\leq \max_{1\leq j\leq N_{i-1}}\| X^{(i-1)}_j\|_2 + \max_{1\leq j\leq N_{i-1}}\|X^{(i-2)}W^{(i-1)}_j - \widetilde{X}^{(i-2)}Q^{(i-1)}_j\|_2 
\end{align}
It follows that, with probability at least $1 -\frac{\sqrt{2}mN_i}{N^p_{i-1}}$, 
\begin{align*}
&\max_{1\leq j\leq N_i}\|X^{(i-1)}W^{(i)}_j- \widetilde{X}^{(i-1)}Q^{(i)}_j\|_2 \leq \delta \sqrt{2\pi pm\log N_{i-1}} \max_{1\leq j\leq N_{i-1}}\|X^{(i-1)}_j\|_2 \\
&+  \delta \sqrt{2\pi pm\log N_{i-1}} \max_{1\leq j\leq N_{i-1}}\|X^{(i-2)}W^{(i-1)}_j - \widetilde{X}^{(i-2)}Q^{(i-1)}_j\|_2. 
\end{align*}

\noindent $(b)$ Applying \cref{lemma:error-bound-step1} with $w=W^{(i)}_j$ and using the fact that $\|P\|_2\leq 1$ for any orthogonal projection $P$, we have
\begin{align}\label{eq:error-bound-single-layer-infinite-eq2}
\|\hat{u}_{N_{i-1}}\|_2 &= \Bigl\|\sum_{k=1}^{N_{i-1}} W^{(i)}_{kj} P_{\widetilde{X}_{N_{i-1}}^{(i-1)\perp}}\ldots P_{\widetilde{X}_{k+1}^{(i-1)\perp}} P_{\widetilde{X}_k^{(i-1)\perp}}(X^{(i-1)}_k) \Bigr\|_2 \nonumber \\
&\leq \sum_{k=1}^{N_{i-1}} |W^{(i)}_{kj} | \Bigl\| P_{\widetilde{X}_k^{(i-1)\perp}}(X^{(i-1)}_k) \Bigr\|_2 \nonumber \\
&= \sum_{k=1}^{N_{i-1}} |W^{(i)}_{kj} |  \Bigl\| P_{\widetilde{X}_k^{(i-1)\perp}}(X^{(i-1)}_k-\widetilde{X}_k^{(i-1)}) \Bigr\|_2  \nonumber \\
&\leq N_{i-1}\|W^{(i)}_j\|_\infty \max_{1\leq j \leq N_{i-1}} \|X^{(i-1)}_j -\widetilde{X}_j^{(i-1)} \|_2 \nonumber \\
&= N_{i-1}\|W^{(i)}_j\|_\infty \max_{1\leq j\leq N_{i-1}}\|\rho(X^{(i-2)}W^{(i-1)}_j) - \rho(\widetilde{X}^{(i-2)}Q^{(i-1)}_j)\|_2 \nonumber \\
&\leq N_{i-1}\|W^{(i)}\|_{\max} \max_{1\leq j\leq N_{i-1}}\|X^{(i-2)}W^{(i-1)}_j - \widetilde{X}^{(i-2)}Q^{(i-1)}_j\|_2. 
\end{align}
Then it follows from \eqref{eq:quantization-error-order-r}, \eqref{eq:error-bound-step2-infinite}, \eqref{eq:error-bound-single-layer-infinite-eq1}, and \eqref{eq:error-bound-single-layer-infinite-eq2} that 
\begin{align*}
&\|X^{(i-1)}W^{(i)}_j - \widetilde{X}^{(i-1)} Q^{(i)}_j \|_2 \\
&\leq \|\hat{u}_{rN_{i-1}}\|_2 + \|\widetilde{u}_{N_{i-1}}\|_2 \\
&\leq \|P^{(i-1)}\|_2^{r-1} \|\hat{u}_{N_{i-1}}\|_2  
+ \delta\sqrt{2\pi pm\log N_{i-1}} \max_{1\leq j\leq N_{i-1}}\|\widetilde{X}^{(i-1)}_j  \|_2  \\
&\leq N_{i-1}\|W^{(i)}\|_{\max} \|P^{(i-1)}\|_2^{r-1} \max_{1\leq j\leq N_{i-1}}\|X^{(i-2)}W^{(i-1)}_j - \widetilde{X}^{(i-2)}Q^{(i-1)}_j\|_2 + \delta\sqrt{2\pi pm\log N_{i-1}} \\
& \times \Bigl( \max_{1\leq j\leq N_{i-1}}  \|X^{(i-1)}_j \|_2 + \max_{1\leq j\leq N_{i-1}}\|X^{(i-2)}W^{(i-1)}_j - \widetilde{X}^{(i-2)}Q^{(i-1)}_j\|_2 \Bigr)  
\end{align*}
holds with probability at least $1-\sqrt{2}mN_{i-1}^{-p}$. By a union bound over all $j$, we obtain that
\begin{align*}
&\max_{1\leq j\leq N_i}\|X^{(i-1)}W^{(i)}_j - \widetilde{X}^{(i-1)} Q^{(i)}_j \|_2 \leq  \delta\sqrt{2\pi pm\log N_{i-1}} \max_{1\leq j\leq N_{i-1}}  \|X^{(i-1)}_j \|_2 \\
&+  \Bigl( N_{i-1}\|W^{(i)}\|_{\max} \|P^{(i-1)}\|_2^{r-1} + \delta\sqrt{2\pi pm\log N_{i-1}} \Bigr) \max_{1\leq j\leq N_{i-1}}\|X^{(i-2)}W^{(i-1)}_j - \widetilde{X}^{(i-2)}Q^{(i-1)}_j\|_2
\end{align*}
holds with probability exceeding $1- \frac{\sqrt{2}mN_i}{N_{i-1}^p}$. 
\end{proof}

Applying \cref{thm:error-bound-single-layer-infinite} inductively for all layers, one can obtain an error bound for quantizing the whole neural network.

\begin{corollary}\label{corollary:error-bound-infinite}
Let $\Phi$ be an $L$-layer neural network as in \eqref{eq:mlp} where the activation function is $\varphi^{(i)}(x)=\rho(x):=\max\{0, x\}$ for $1\leq i\leq L$. Let $\mathcal{A}=\mathcal{A}_\infty^\delta$ be as in \eqref{eq:alphabet-infinite} and $p\in\mathbb{N}$.

\noindent $(a)$ If we quantize $\Phi$ using \cref{algo:SPFQ-perfect}, then 
\begin{equation}\label{eq:error-bound-infinite-eq1}
\max_{1\leq j\leq N_L}\|\Phi(X)_j -  \widetilde{\Phi}(X)_j\|_2 \leq \sum_{i=0}^{L-1} (2\pi pm \delta^2)^{\frac{L-i}{2}} \Bigl(\prod_{k=i}^{L-1} \log N_{k} \Bigr)^{\frac{1}{2}}  \max_{1\leq j\leq N_{i}}\| X^{(i)}_j\|_2 
\end{equation}
holds with probability at least $1-\sum_{i=1}^{L} \frac{\sqrt{2}mN_{i}}{N_{i-1}^p}$.

\noindent $(b)$  If we quantize $\Phi$ using \cref{algo:SPFQ-order-r}, then
\begin{align}\label{eq:error-bound-infinite-eq2}
&\max_{1\leq j\leq N_L}\|\Phi(X)_j -  \widetilde{\Phi}(X)_j\|_2 \leq \nonumber \\
& \sum_{i=0}^{L-1} \delta\sqrt{2\pi pm\log N_{i}} \max_{1\leq j\leq N_{i}}\| X^{(i)}_j\|_2  \prod_{k=i+1}^{L-1} \Bigl( N_{k}\|W^{(k+1)}\|_{\max} \|P^{(k)}\|_2^{r-1} + \delta\sqrt{2\pi pm\log N_{k}} \Bigr)  
\end{align}
holds with probability at least $1-\sum_{i=1}^{L} \frac{\sqrt{2}mN_{i}}{N_{i-1}^p}$. Here, $P^{(k)}=P_{\widetilde{X}_{N_{k}}^{(k)\perp}}\ldots P_{\widetilde{X}_{2}^{(k)\perp}} P_{\widetilde{X}_{1}^{(k)\perp}}$ is defined in \cref{lemma:error-bound-step1}.
\end{corollary}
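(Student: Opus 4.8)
The plan is to read \cref{thm:error-bound-single-layer-infinite} as a one-step linear recursion on the per-layer error and unroll it across all $L$ layers. Set $\epsilon_i := \max_{1\le j\le N_i}\|X^{(i-1)}W^{(i)}_j - \widetilde{X}^{(i-1)}Q^{(i)}_j\|_2$, the worst-case pre-activation error introduced at layer $i$. Since $\rho$ is $1$-Lipschitz and acts entrywise, $\max_j\|X^{(i)}_j - \widetilde{X}^{(i)}_j\|_2 = \max_j\|\rho(X^{(i-1)}W^{(i)}_j) - \rho(\widetilde{X}^{(i-1)}Q^{(i)}_j)\|_2 \le \epsilon_i$; applying this at $i=L$ gives $\max_{1\le j\le N_L}\|\Phi(X)_j - \widetilde{\Phi}(X)_j\|_2 \le \epsilon_L$, so it suffices to bound $\epsilon_L$.

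First I would dispatch the base case $i=1$, which is not covered by \cref{thm:error-bound-single-layer-infinite}. Because the input layer is not quantized, $X^{(0)} = \widetilde{X}^{(0)} = X$, so in Phase I of either algorithm the data is already perfectly aligned: in \cref{algo:SPFQ-perfect} the constraint $\widetilde{X}^{(0)}\widetilde{w} = X^{(0)}w$ is feasible (with $w$ itself a solution), while in \cref{algo:SPFQ-order-r}, \eqref{eq:lemma-error-bound-step1-eq1} gives $\hat{u}_{N_0} = \sum_j w_j P_{\widetilde{X}_{N_0}^{(0)\perp}}\cdots P_{\widetilde{X}_j^{(0)\perp}}(X^{(0)}_j) = 0$ since $P_{X_j^\perp}(X_j) = 0$, hence $\hat{u}_{rN_0} = 0$ by \eqref{eq:lemma-error-bound-step1-eq2}. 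In both cases the layer-$1$ error equals $\widetilde{u}_{N_0}$, so \cref{lemma:error-bound-step2-infinite} (with $\widetilde{X}^{(0)} = X$) and a union bound over the $N_1$ columns give $\epsilon_1 \le \delta\sqrt{2\pi pm\log N_0}\,\max_{1\le j\le N_0}\|X^{(0)}_j\|_2$ with probability at least $1 - \sqrt{2}mN_1/N_0^p$. For $2\le i\le L$, \cref{thm:error-bound-single-layer-infinite} states that on an event $\mathcal{E}_i$ of probability at least $1 - \sqrt{2}mN_i/N_{i-1}^p$ one has $\epsilon_i \le b_i + a_i\,\epsilon_{i-1}$, where $b_i := \delta\sqrt{2\pi pm\log N_{i-1}}\,\max_{1\le j\le N_{i-1}}\|X^{(i-1)}_j\|_2$ in both cases, and $a_i := \delta\sqrt{2\pi pm\log N_{i-1}}$ in case $(a)$, whereas $a_i := N_{i-1}\|W^{(i)}\|_{\max}\|P^{(i-1)}\|_2^{r-1} + \delta\sqrt{2\pi pm\log N_{i-1}}$ in case $(b)$; write $\mathcal{E}_1$ for the layer-$1$ event above, so that $\epsilon_1 \le b_1$ on $\mathcal{E}_1$.

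On $\bigcap_{i=1}^L \mathcal{E}_i$ all these inequalities hold at once, and since every $a_i, b_i, \epsilon_i$ is nonnegative I can chain them to obtain the closed form $\epsilon_L \le \sum_{i=1}^L\bigl(\prod_{k=i+1}^L a_k\bigr)b_i$. It then remains only to substitute and simplify. In case $(a)$, $\prod_{k=i+1}^L a_k = (2\pi pm\delta^2)^{(L-i)/2}\prod_{k=i}^{L-1}(\log N_k)^{1/2}$, so $\bigl(\prod_{k=i+1}^L a_k\bigr)b_i = (2\pi pm\delta^2)^{(L-i+1)/2}\bigl(\prod_{k=i-1}^{L-1}\log N_k\bigr)^{1/2}\max_j\|X^{(i-1)}_j\|_2$; re-indexing the sum to run over $0\le i\le L-1$ yields \eqref{eq:error-bound-infinite-eq1}. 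In case $(b)$, $\prod_{k=i+1}^L a_k = \prod_{k=i}^{L-1}\bigl(N_k\|W^{(k+1)}\|_{\max}\|P^{(k)}\|_2^{r-1} + \delta\sqrt{2\pi pm\log N_k}\bigr)$, and the same re-indexing turns $\sum_{i=1}^L\bigl(\prod_{k=i+1}^L a_k\bigr)b_i$ into \eqref{eq:error-bound-infinite-eq2}. Finally, a union bound gives $\prob\bigl(\bigcap_{i=1}^L \mathcal{E}_i\bigr) \ge 1 - \sum_{i=1}^L \sqrt{2}mN_i/N_{i-1}^p$, as claimed.

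The bulk of the argument is bookkeeping — the algebra of the telescoping products and the index shifts — so I do not expect a serious obstacle. The two places that do need care are: (i) the $i=1$ base case, which \cref{thm:error-bound-single-layer-infinite} excludes and which must be handled via the exactness of the unquantized input as above; and (ii) the compounding of the failure probabilities. On the latter, the bound of \cref{lemma:error-bound-step2-infinite}, and hence of \cref{thm:error-bound-single-layer-infinite}, holds with the stated probability over the layer-$i$ stochastic quantizer alone and uniformly over any realization of the earlier (random) activations $\widetilde{X}^{(i-1)}$; consequently $\mathcal{E}_i$ may be regarded as governed by layer-$i$ randomness, with the quantities $\epsilon_{i-1}$, $\|X^{(i-1)}_j\|_2$, and $\|P^{(i-1)}\|_2$ appearing in the recursion simply carried through verbatim, so intersecting the $\mathcal{E}_i$ and applying a single union bound is legitimate.
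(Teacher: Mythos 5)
Your proposal is correct and follows essentially the same route as the paper: unrolling the layer-wise recursion of \cref{thm:error-bound-single-layer-infinite} into the telescoping sum $\epsilon_L \le \sum_{i}\bigl(\prod_{k>i} a_k\bigr)b_i$ and taking a union bound over the per-layer events. If anything, you are more careful than the paper about the $i=1$ base case (which the theorem's statement formally excludes), correctly observing that $X^{(0)}=\widetilde{X}^{(0)}$ makes the data-alignment error vanish there in both variants.
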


\begin{proof}
(a) For $1\leq j\leq N_L$, by \eqref{eq:layer-input}, we have 
\[
\Phi(X)_j = X^{(L)}_j = \rho(X^{(L-1)}W^{(L)}_j) \quad \text{and} \quad \widetilde{\Phi}(X)_j = \widetilde{X}^{(L)}_j = \rho(\widetilde{X}^{(L-1)}Q^{(L)}_j) 
\]
where $W^{(L)}_j$ and $Q^{(L)}_j$ are the $j$-th neuron in the $L$-th layer and its quantized version respectively. It follows from part (a) of \cref{thm:error-bound-single-layer-infinite} with $i=L$ that 
\begin{align*}
&\max_{1\leq j\leq N_L}\|\Phi(X)_j -  \widetilde{\Phi}(X)_j\|_2 = \max_{1\leq j\leq N_L}\| \rho(X^{(L-1)}W^{(L)}_j) - \rho(\widetilde{X}^{(L-1)}Q^{(L)}_j)\|_2  \\
&\leq \max_{1\leq j\leq N_L}  \|X^{(L-1)}W^{(L)}_j- \widetilde{X}^{(L-1)}Q^{(L)}_j\|_2  \\
&\leq \delta \sqrt{2\pi pm\log N_{L-1}} \max_{1\leq j\leq N_{L-1}}\|X^{(L-1)}_j\|_2 \\
&+  \delta \sqrt{2\pi pm\log N_{L-1}} \max_{1\leq j\leq N_{L-1}}\|X^{(L-2)}W^{(L-1)}_j - \widetilde{X}^{(L-2)}Q^{(L-1)}_j\|_2. 
\end{align*}
holds with probability at least $1 -\frac{\sqrt{2}mN_L}{N^p_{L-1}}$. Moreover, by applying part (a) of \cref{thm:error-bound-single-layer-infinite} with $i=L-1$ to the result above, we obtain that 
\begin{align*}
&\max_{1\leq j\leq N_L}\|\Phi(X)_j -  \widetilde{\Phi}(X)_j\|_2 \leq \delta \sqrt{2\pi pm\log N_{L-1}} \max_{1\leq j\leq N_{L-1}}\|X^{(L-1)}_j\|_2 + 2\pi pm \delta^2 \\ 
&\times \sqrt{\log N_{L-1}\log N_{L-2}} \Bigl(\max_{1\leq j\leq N_{L-2}}\|X^{(L-2)}_j\|_2 + \max_{1\leq j\leq N_{i-1}}\|X^{(i-2)}W^{(i-1)}_j - \widetilde{X}^{(i-2)}Q^{(i-1)}_j\|_2\Bigr)
\end{align*}
holds with probability at least $1-\frac{\sqrt{2}mN_{L}}{N^p_{L-1}}-\frac{\sqrt{2}mN_{L-1}}{N^p_{L-2}}$. Repeating this argument inductively for $i=L-2, L-3, \ldots, 1$, one can derive 
\[
\max_{1\leq j\leq N_L}\|\Phi(X)_j -  \widetilde{\Phi}(X)_j\|_2 \leq \sum_{i=0}^{L-1} (2\pi pm \delta^2)^{\frac{L-i}{2}} \Bigl(\prod_{k=i}^{L-1} \log N_{k} \Bigr)^{\frac{1}{2}}  \max_{1\leq j\leq N_{i}}\| X^{(i)}_j\|_2 
\]
with probability at least $1-\sum_{i=1}^{L} \frac{\sqrt{2}mN_{i}}{N_{i-1}^p}$. 

\noindent (b) The proof of \eqref{eq:error-bound-infinite-eq2} is similar to the one we had in part (a) except that we need to use part (b) of \cref{thm:error-bound-single-layer-infinite} this time. Indeed, for the case of $i=L$, 
\begin{align*}
&\max_{1\leq j\leq N_L}\|\Phi(X)_j -  \widetilde{\Phi}(X)_j\|_2 = \max_{1\leq j\leq N_L}\| \rho(X^{(L-1)}W^{(L)}_j) - \rho(\widetilde{X}^{(L-1)}Q^{(L)}_j)\|_2  \\
&\leq \max_{1\leq j\leq N_L}  \|X^{(L-1)}W^{(L)}_j- \widetilde{X}^{(L-1)}Q^{(L)}_j\|_2  \\
&\leq \delta\sqrt{2\pi pm\log N_{L-1}} \max_{1\leq j\leq N_{L-1}}  \|X^{(L-1)}_j \|_2  + \Bigl( N_{L-1}\|W^{(L)}\|_{\max} \|P^{(L-1)}\|_2^{r-1} \\
& + \delta\sqrt{2\pi pm\log N_{L-1}} \Bigr) \max_{1\leq j\leq N_{L-1}}\|X^{(L-2)}W^{(L-1)}_j - \widetilde{X}^{(L-2)}Q^{(L-1)}_j\|_2
\end{align*}
holds with probability exceeding $1- \frac{\sqrt{2}mN_L}{N_{L-1}^p}$. Then \eqref{eq:error-bound-infinite-eq2} follows by inductively using part (b) of \cref{thm:error-bound-single-layer-infinite} with $i=L-1, L-2, \ldots, 1$.
\end{proof}

\paragraph{\bf Remarks on the error bounds.}
A few comments are in order regarding the error bounds associated with \cref{corollary:error-bound-infinite}. First, let us consider the difference between the error bounds \eqref{eq:error-bound-infinite-eq1} and \eqref{eq:error-bound-infinite-eq2}.
 As \eqref{eq:error-bound-infinite-eq2} deals with imperfect data alignment, it involves a term that bounds the mismatch between the quantized and unquantized networks. This term is controlled by the quantity $\|P^{(k)}\|_2^{r-1}$, which is expected to be small when the order $r$ is sufficiently large provided $\|P^{(k)}\|_2<1$. In other words, one expects this term to be dominated by the error due to quantization. To get a sense for whether this intuition is valid, consider the case where $\widetilde{X}^{(k)}_1, \widetilde{X}^{(k)}_2, \ldots, \widetilde{X}^{(k)}_{N_k}$ are i.i.d. standard Gaussian vectors. Then \cref{lemma:orth-proj-norm} implies that, with high probability, 
\[
\|P^{(k)}\|_2^{r-1} \lesssim \Bigl(1-\frac{c}{m}\Bigr)^{\frac{(r-1)N_k}{10}} =   \Bigl(1-\frac{c}{m}\Bigr)^{\frac{-m}{c}\cdot \frac{-c(r-1)N_k}{10m}} \leq e^{-\frac{c(r-1)N_k}{10m}}
\]
where $c>0$ is a constant. 
In this case, $\|P^{(k)}\|_2^{r-1}$ decays exponentially with respect to $r$ with a favorable dependence on the overparametrization $\frac{N}{m}$. In other words, here, even with a small order $r$, the error bounds in \eqref{eq:error-bound-infinite-eq1} and \eqref{eq:error-bound-infinite-eq2} are quite similar. 


Keeping this in mind, our next objective is to assess the quality of these error bounds. We will accomplish this by examining the \emph{relative error} connected to the quantization of a neural network. Specifically, we will concentrate on evaluating the relative error associated with \eqref{eq:error-bound-infinite-eq1} since a similar derivation can be applied to \eqref{eq:error-bound-infinite-eq2}.

We begin with the observation that both absolute error bounds \eqref{eq:error-bound-infinite-eq1} and \eqref{eq:error-bound-infinite-eq2} in \cref{corollary:error-bound-infinite} only involve randomness due to the stochastic quantizer $\mathcal{Q}_\mathrm{StocQ}$. In particular, there is no randomness assumption on either the weights or the activations. However, to evaluate the relative error, we suppose that each $W^{(i)}\in \R^{N_{i-1}\times N_i}$ has i.i.d. $\mathcal{N}(0,1)$ entries and $\{W^{(i)}\}_{i=1}^L$ are independent. One needs to make an assumption of this type in order to facilitate the calculation, and more importantly, to avoid adversarial scenarios where the weights are chosen to be in the null-space of the data matrix $\widetilde{X}^{(i)}$. 
We obtain the following corollary which shows that the relative error decays with the overparametrization of the neural network. 

\begin{corollary}\label{corollary:relative-error}
Let $\Phi$ be an $L$-layer neural network as in \eqref{eq:mlp} where the activation function is $\varphi^{(i)}(x)=\rho(x):=\max\{0, x\}$ for $1\leq i\leq L$. Suppose the weight matrix $W^{(i)}$ has i.i.d. $\mathcal{N}(0,1)$ entries and $\{W^{(i)}\}_{i=1}^L$ are independent. Let $X\in\R^{m\times N_0}$ be the input data and $X^{(i)}=\Phi^{(i)}(X)\in\R^{m\times N_i}$ be the output of the $i$-th layer defined in \eqref{eq:layer-input}. Then the following inequalities hold. \\
$(a)$ Let $p\in\mathbb{N}$ with $p\geq 2$. For $1\leq i\leq L$, 
\begin{equation}\label{eq:corollary:relative-error-eq1}
\max_{1\leq j\leq N_i} \|X^{(i)}_j\|_2 \leq (4p)^{\frac{i}{2}} \Bigl( \prod_{k=1}^{i-1} N_k\Bigr)^{\frac{1}{2}}  \Bigl( \prod_{k=0}^{i-1} \log N_k \Bigr)^{\frac{1}{2}}  \|X\|_F 
\end{equation}
holds with probability at least $1-\sum_{k=1}^i \frac{2N_k}{N_{k-1}^p}$. \\
$(b)$ For $1\leq i\leq L$, we have 
\begin{equation}\label{eq:corollary:relative-error-eq2}
\E_\Phi \|X^{(i)}\|_F^2 \geq  \frac{\|X\|_F^2}{(2\pi)^{i}} \prod_{k=1}^{i} N_k 
\end{equation}
where  $\E_\Phi$  denotes the expectation with respect to the weights of $\Phi$, that is $\{W^{(i)}\}_{i=1}^L$.
\end{corollary}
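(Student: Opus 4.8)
The plan is to prove both parts by induction on the layer index $i$, exploiting the independence of the weight matrices $\{W^{(k)}\}_{k=1}^{L}$: when we come to analyze layer $i$ we condition on $W^{(1)},\dots,W^{(i-1)}$ (equivalently, on the activations $X^{(i-1)}$) and treat only $W^{(i)}$ as random, so that within each layer the relevant quantities are (conditionally) Gaussian.

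For part $(a)$, I would fix a neuron index $1\le j\le N_i$, condition on $X^{(i-1)}$, and use that $\rho$ is $1$-Lipschitz with $\rho(0)=0$ to get $\|X^{(i)}_j\|_2=\|\rho(X^{(i-1)}W^{(i)}_j)\|_2\le\|X^{(i-1)}W^{(i)}_j\|_2$. Since $W^{(i)}_j\in\R^{N_{i-1}}$ has i.i.d.\ $\mathcal N(0,1)$ entries, the map $g\mapsto\|X^{(i-1)}g\|_2$ is Lipschitz with constant $\|X^{(i-1)}\|_2\le\|X^{(i-1)}\|_F$ and has mean at most $\bigl(\E\|X^{(i-1)}W^{(i)}_j\|_2^2\bigr)^{1/2}=\|X^{(i-1)}\|_F$, so Gaussian concentration gives $\|X^{(i-1)}W^{(i)}_j\|_2\le(1+t)\|X^{(i-1)}\|_F$ with conditional probability at least $1-e^{-t^2/2}$. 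Choosing $t\asymp\sqrt{p\log N_{i-1}}$ so that $e^{-t^2/2}\lesssim N_{i-1}^{-p}$ and $(1+t)^2\le 4p\log N_{i-1}$ (which is where $p\ge 2$ enters), taking a union bound over $j=1,\dots,N_i$, and bounding $\|X^{(i-1)}\|_F\le\sqrt{N_{i-1}}\,\max_{j'}\|X^{(i-1)}_{j'}\|_2$, I would obtain the one-step recursion
\[
\max_{1\le j\le N_i}\|X^{(i)}_j\|_2\le\sqrt{4p\,N_{i-1}\log N_{i-1}}\;\max_{1\le j'\le N_{i-1}}\|X^{(i-1)}_{j'}\|_2
\]
on an event of conditional probability at least $1-2N_i/N_{i-1}^{p}$. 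Iterating from $i$ down to $1$, with base case $X^{(0)}=X$ contributing one factor $\sqrt{4p\log N_0}$ but no factor $N_0$ (since $\|X^{(0)}\|_F=\|X\|_F$), multiplies together one factor $4p$ per layer, the logarithmic factors $\prod_{k=0}^{i-1}\log N_k$, and the width factors $\prod_{k=1}^{i-1}N_k$, which is exactly \eqref{eq:corollary:relative-error-eq1}; the failure probabilities add over the $i$ conditional events, giving $\sum_{k=1}^{i}2N_k/N_{k-1}^p$.

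For part $(b)$, I would expand $\|X^{(i)}\|_F^2=\sum_{j=1}^{N_i}\sum_{\ell=1}^{m}\rho\bigl(\langle r_\ell,W^{(i)}_j\rangle\bigr)^2$, where $r_\ell^\top$ is the $\ell$-th row of $X^{(i-1)}$, condition on $W^{(1)},\dots,W^{(i-1)}$, and note $\langle r_\ell,W^{(i)}_j\rangle\sim\mathcal N(0,\|r_\ell\|_2^2)$. Since $\E\rho(g)=\|r_\ell\|_2/\sqrt{2\pi}$ for $g\sim\mathcal N(0,\|r_\ell\|_2^2)$, Jensen gives $\E[\rho(g)^2]\ge(\E\rho(g))^2=\|r_\ell\|_2^2/(2\pi)$; summing over $j,\ell$ and invoking the tower property together with the independence of $W^{(i)}$ from the earlier layers yields
\[
\E_\Phi\|X^{(i)}\|_F^2\ \ge\ \frac{N_i}{2\pi}\,\E_\Phi\|X^{(i-1)}\|_F^2 .
\]
Iterating down to $X^{(0)}=X$ gives \eqref{eq:corollary:relative-error-eq2}. (One could replace $2\pi$ by $2$, since in fact $\E[\rho(g)^2]=\tfrac12\|r_\ell\|_2^2$ exactly, but the stated bound suffices for the relative-error estimate.)

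I expect the only real obstacle to be the bookkeeping in part $(a)$: carrying the accumulating constant $(4p)^{i/2}$ and the width/logarithm products correctly through the layerwise recursion, and — more subtly — the probability accounting, since the favorable event at layer $i$ is defined only conditionally on the random activations $X^{(i-1)}$, so one must chain the conditional high-probability statements (union-bounding their complements) rather than multiply independent probabilities. Pinning down the elementary inequality $(1+t)^2\le 4p\log N_{i-1}$ for the chosen $t$ under $p\ge 2$ (and any mild lower bound on the widths it implicitly needs) is the one place constants must be checked; part $(b)$ is a one-line exact/Jensen computation by comparison.
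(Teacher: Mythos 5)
Your proposal is correct. Part $(a)$ is essentially the paper's argument: condition on $X^{(i-1)}$, apply Gaussian concentration to a Lipschitz function of $W^{(i)}_j$ with constant $\|X^{(i-1)}\|_2\le\|X^{(i-1)}\|_F$, bound the conditional mean by $\|X^{(i-1)}\|_F$ via Jensen, union bound over the $N_i$ neurons, pass between $\|X^{(i-1)}\|_F$ and $\sqrt{N_{i-1}}\max_j\|X^{(i-1)}_j\|_2$, and chain the conditional events; the only cosmetic differences are that the paper keeps the ReLU inside the Lipschitz function (so its conditional mean bound is $\tfrac1{\sqrt2}\|X^{(i-1)}\|_F$ rather than your $\|X^{(i-1)}\|_F$, giving slightly more slack in absorbing the additive term into $2\sqrt{p\log N_{i-1}}$) and that it runs the recursion on $\|X^{(i)}\|_F$ rather than on $\max_j\|X^{(i)}_j\|_2$ — the products of factors are identical, and the paper is equally silent about the mild width lower bound you flag. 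Part $(b)$, however, is where you genuinely diverge, to your advantage: the paper proves $\E(\|X^{(i)}_j\|_2^2\mid X^{(i-1)})\ge\|X^{(i-1)}\|_F^2/(2\pi)$ by Jensen on the vector norm followed by \cref{prop:expect-relu-gaussian} ($\E\|\rho(Y)\|_2\ge\sqrt{\tr\Sigma/(2\pi)}$ for $Y\sim\mathcal N(0,\Sigma)$), whose proof requires the extreme-point characterization of the trace-one PSD simplex, the Bauer maximum principle, and the orthant-pairing lemma $\E\|\rho(Y)\|_2\ge\tfrac12\E\|Y\|_2$. Your entrywise computation — $\rho(g)$ for scalar Gaussian $g$ has mean $\|r_\ell\|_2/\sqrt{2\pi}$, so scalar Jensen gives $\E[\rho(g)^2]\ge\|r_\ell\|_2^2/(2\pi)$, summed over rows and columns — reaches the same recursion $\E_\Phi\|X^{(i)}\|_F^2\ge\frac{N_i}{2\pi}\E_\Phi\|X^{(i-1)}\|_F^2$ with none of that machinery, and as you note the exact identity $\E[\rho(g)^2]=\tfrac12\|r_\ell\|_2^2$ (which the paper itself uses in part $(a)$) would even improve $2\pi$ to $2$.
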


\begin{proof}
$(a)$ Conditioning on $X^{(i-1)}$, the function $f(z):=\|\rho(X^{(i-1)}z)\|_2$ is Lipschitz with Lipschitz constant $L_f:=\|X^{(i-1)}\|_2\leq \|X^{(i-1)}\|_F$ and $\|X^{(i)}_j\|_2=\|\rho(X^{(i-1)}W^{(i)}_j)\|_2=f(W_j^{(i)})$ with $ W_j^{(i)}\sim \mathcal{N}(0, I)$. Applying \cref{lemma:Lip-concentration} to $f$ with $X=W_j^{(i)}$, Lipschitz constant $L_f$, and $\alpha= \sqrt{2p\log N_{i-1}} \|X^{(i-1)}\|_F$, we obtain
\begin{equation}\label{eq:corollary:relative-error-eq3}
\prob\Bigl( \bigl| \|X_j^{(i)}\|_2 - \E( \|X_j^{(i)}\|_2\mid X^{(i-1)}) \bigr| \leq \sqrt{2p\log N_{i-1}} \|X^{(i-1)}\|_F \, \Big| \, X^{(i-1)}\Bigr) \geq 1- \frac{2}{N_{i-1}^p}.
\end{equation}
Using Jensen's inequality and the identity $\E(\|\rho(X^{(i-1)}W^{(i)}_j)\|_2^2 \mid X^{(i-1)})=\frac{1}{2}\|X^{(i-1)}\|_F^2$, we have 
\begin{align*}
\E (\|X_j^{(i)}\|_2 \mid X^{(i-1)}) &\leq \Bigl(\E (\|X_j^{(i)}\|_2^2 \mid X^{(i-1)})\Bigr)^{\frac{1}{2}} \\
&= \Bigl(\E(\|\rho(X^{(i-1)}W^{(i)}_j)\|_2^2\mid X^{(i-1)})\Bigr)^{\frac{1}{2}} \\
&=\frac{1}{\sqrt{2}}\|X^{(i-1)}\|_F.
\end{align*}
It follows from the inequality above and \eqref{eq:corollary:relative-error-eq3} that, conditioning on $X^{(i-1)}$, 
\[
\|X^{(i)}_j\|_2 \leq \Bigl(\frac{1}{\sqrt{2}} + \sqrt{2p\log N_{i-1}}  \Bigr) \|X^{(i-1)}\|_F \leq 2\sqrt{p\log N_{i-1}} \|X^{(i-1)}\|_F
\]
holds with probability at least $1- \frac{2}{N_{i-1}^p}$. Conditioning on $X^{(i-1)}$ and taking a union bound over $1\leq j\leq N_i$, with probability exceeding $1- \frac{2N_i}{N_{i-1}^p}$, we have
\begin{equation}\label{eq:corollary:relative-error-eq4}
\|X^{(i)}\|_F \leq \sqrt{N_i} \max_{1\leq j\leq N_i} \|X^{(i)}_j\|_2 \leq 2 \sqrt{p N_i \log N_{i-1}} \|X^{(i-1)}\|_F.
\end{equation}
Applying \eqref{eq:corollary:relative-error-eq4} for indices $i, i-1, \ldots, 1$ recursively, we obtain \eqref{eq:corollary:relative-error-eq1}.

\noindent $(b)$
Applying Jensen's inequality and \cref{prop:expect-relu-gaussian}, we have 
\begin{align*}
\E(\|X^{(i)}_j\|_2^2\mid X^{(i-1)}) &= \E(\|\rho(X^{(i-1)}W^{(i)}_j)\|_2^2 \mid X^{(i-1)}) \\
&\geq \Bigl(\E(\|\rho(X^{(i-1)}W^{(i)}_j)\|_2 \mid X ^{(i-1)}) \Bigr)^2 \\
&\geq \frac{\tr(X^{(i-1)}X^{(i-1)\top})}{2\pi}\\
&= \frac{\|X^{(i-1)}\|_F^2}{2\pi}.
\end{align*}
By the law of total expectation, we obtain $\E_\Phi \|X^{(i)}_j\|_2^2 \geq \frac{1}{2\pi} \E_\Phi \|X^{(i-1)}\|_F^2$ and thus 
\begin{equation}\label{eq:corollary:relative-error-eq5}
\E_\Phi \|X^{(i)}\|_F^2 = \sum_{j=1}^{N_i} \E_\Phi \|X^{(i)}_j\|_2^2 \geq  \frac{N_i}{2\pi} \E_\Phi \|X^{(i-1)}\|_F^2. 
\end{equation}
Then \eqref{eq:corollary:relative-error-eq2} follows immediately by applying \eqref{eq:corollary:relative-error-eq5} recursively.  
\end{proof}
Now we are ready to evaluate the relative error associated with \eqref{eq:error-bound-infinite-eq1}. It follows from \eqref{eq:error-bound-infinite-eq1} and the Cauchy-Schwarz inequality that,  with high probability,
\begin{align}\label{eq:relative-error-complete}
\frac{\|\Phi(X) -  \widetilde{\Phi}(X)\|_F^2}{\E_\Phi \|\Phi(X)\|_F^2} &\leq \frac{ N_L\max_{1\leq j\leq N_L}\|\Phi(X)_j -  \widetilde{\Phi}(X)_j\|^2_2}{\E_\Phi \|\Phi(X)\|^2_F} \nonumber\\
&\leq \frac{N_L}{\E_\Phi \|\Phi(X)\|^2_F} \Bigl( \sum_{i=0}^{L-1} (2\pi pm \delta^2)^{\frac{L-i}{2}} \Bigl(\prod_{k=i}^{L-1} \log N_{k} \Bigr)^{\frac{1}{2}}  \max_{1\leq j\leq N_{i}}\| X^{(i)}_j\|_2\Bigr)^2 \nonumber \\
&\leq \frac{L N_L}{\E_\Phi \|\Phi(X)\|^2_F} \sum_{i=0}^{L-1} (2\pi pm \delta^2)^{L-i} \Bigl(\prod_{k=i}^{L-1} \log N_{k} \Bigr)  \max_{1\leq j\leq N_{i}}\| X^{(i)}_j\|_2^2 .  
\end{align}
By \cref{corollary:relative-error},
$\max_{1\leq j\leq N_i} \|X^{(i)}_j\|_2^2 \leq (4p)^{i} \|X\|_F^2 \log N_0 \prod_{k=1}^{i-1} (N_k \log N_k  ) $
 with high probability, and $\E_\Phi \|\Phi(X)\|_F^2 = \E_\Phi \|X^{(L)}\|_F^2 \geq \frac{\|X\|_F^2}{(2\pi)^L} \prod_{k=1}^{L} N_k.$ Plugging these results into \eqref{eq:relative-error-complete}, 
\begin{align}\label{eq:relative-error}
\frac{\|\Phi(X) -  \widetilde{\Phi}(X)\|_F^2}{\E_\Phi \|\Phi(X)\|_F^2} &\leq L (2\pi)^L \Bigl(\prod_{k=0}^L \log N_k\Bigr)   \sum_{i=0}^{L-1} \frac{ (2\pi pm \delta^2)^{L-i} (4p)^i}{\prod_{k=i}^{L-1} N_k}  \nonumber \\
&\lesssim \Bigl(\prod_{k=0}^L \log N_k\Bigr)   \sum_{i=0}^{L-1} \prod_{k=i}^{L-1} \frac{m}{N_k} 
\end{align}
gives an upper bound on the relative error of quantization method in \cref{algo:SPFQ-perfect}. Further, if we assume $N_{\min}\leq N_i \leq N_{\max}$ for all $i$, and $2m\leq N_{\min}$, then 
\eqref{eq:relative-error} becomes 
\begin{align*}
\frac{\|\Phi(X) -  \widetilde{\Phi}(X)\|^2_F}{\E_\Phi \|\Phi(X)\|^2_F} &\lesssim (\log N_{\max})^{L+1} \sum_{i=0}^{L-1} \Bigl( \frac{m}{N_{\min}}\Bigr)^{L-i} \\
&\lesssim \frac{m (\log N_{\max})^{L+1}}{N_{\min}} .
\end{align*}
This high probability estimate indicates that the squared error resulting from quantization decays with the overparametrization of the network, relative to the expected squared norm of the neural network's output. It may be possible to replace the expected squared norm by the squared norm itself using another high probability estimate. However, we refrain from doing so as the main objective of this computation was to gain insight into the decay of the relative error in generic settings and the expectation suffices for that purpose.

\section{Error Bounds for SPFQ with Finite Alphabets}\label{sec:finite-alphabets}
Our goal for this section is to relax the assumption that the quantization alphabet used in our algorithms is infinite. We would also like to evaluate the number of elements $2K$ in our alphabet, and thus the number of bits $b:=\log_2(K)+1$ needed for quantizing each layer. Moreover, for simplicity, here we will only consider \cref{algo:SPFQ-perfect}. In this setting, to use a finite quantization alphabet, and still obtain theoretical error bounds, we must guarantee that the argument of the stochastic quantizer in \eqref{eq:quantization-algorithm-step2} remains smaller than the maximal element in the alphabet. Indeed, if that is the case for all $t = 1,..., N_{i-1}$ then the error bound for our finite alphabet would be identical as for the infinite alphabet. It remains to determine the right size of such a finite alphabet. To that end, we start with \cref{thm:error-bound-finite-light}, which assumes boundedness of all the aligned weights $\widetilde{w}$ in the $i$-th layer, i.e.,  the solutions of \eqref{eq:min-norm}, in order to generate an error bound for a finite alphabet of size $K^{(i)} \gtrsim \sqrt{\log \max\{ N_{i-1}, N_i\}}$. 


\begin{theorem}\label{thm:error-bound-finite-light}
Assuming that the first $i-1$ layers have been quantized, let $X^{(i-1)}$, $\widetilde{X}^{(i-1)}$ be as in \eqref{eq:layer-input}. Let $p, K^{(i)}\in\mathbb{N}$ and $\delta>0$ satisfying $p\geq 3$. Suppose we quantize $W^{(i)}$ using \cref{algo:SPFQ-perfect} with $\mathcal{A}=\mathcal{A}^\delta_{K^{(i)}}$ and suppose the resulting aligned weights $\widetilde{W}^{(i)}$ from solving \eqref{eq:min-norm} satisfy
\begin{equation}\label{thm:error-bound-finite-light-eq1}
\|\widetilde{W}^{(i)}\|_{\max}\leq \frac{1}{2}K^{(i)}\delta.
\end{equation}
Then
\begin{equation}\label{thm:error-bound-finite-light-eq2}
\max_{1\leq j\leq N_i}\|X^{(i-1)}W^{(i)}_j-\widetilde{X}^{(i-1)}Q^{(i)}_j\|_2 \leq \delta \sqrt{2\pi pm\log N_{i-1}}\max_{1\leq j\leq N_{i-1}}\|\widetilde{X}^{(i-1)}_j\|_2 
\end{equation}
holds with probability at least $1 -\frac{\sqrt{2}mN_i}{N_{i-1}^p} - \sqrt{2}N_i\sum\limits_{t=2}^{N_{i-1}}\exp\Bigl(- \frac{ (K^{(i)})^2 \|\widetilde{X}_t^{(i-1)}\|_2^2}{8\pi \max\limits_{1\leq j\leq t-1} \|\widetilde{X}_j^{(i-1)}\|_2^2 }\Bigr)$.
\end{theorem}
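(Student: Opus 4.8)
The plan is to exploit the fact that the finite alphabet $\mathcal{A}^\delta_{K^{(i)}}$ behaves identically to the infinite alphabet $\mathcal{A}^\delta_\infty$ in the Phase II recursion \eqref{eq:quantization-algorithm-step2} as long as the argument $v_t := \widetilde{w}_t + \frac{\langle \widetilde{X}_t^{(i-1)}, \widetilde{u}_{t-1}\rangle}{\|\widetilde{X}_t^{(i-1)}\|_2^2}$ of the stochastic quantizer never \emph{overflows}, i.e. never exceeds $K^{(i)}\delta$ in absolute value; the role of hypothesis \eqref{thm:error-bound-finite-light-eq1} is precisely to make such an overflow unlikely. Fix a neuron $w = W^{(i)}_j$ with aligned weight $\widetilde{w} = \widetilde{W}^{(i)}_j$ from solving \eqref{eq:min-norm}. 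Since \cref{algo:SPFQ-perfect} aligns the data perfectly, $X^{(i-1)}W^{(i)}_j = \widetilde{X}^{(i-1)}\widetilde{w}$, so by \eqref{eq:quantization-error-min-inf} the layer error for this neuron equals the Phase II error $\widetilde{u}_{N_{i-1}}$. I would couple the finite- and infinite-alphabet runs of \eqref{eq:quantization-algorithm-step2} by driving the stochastic rounding with the same coin flips, and let $\mathcal{O}_j$ be the event that an overflow occurs at some step $t\in\{2,\dots,N_{i-1}\}$ (step $t=1$ cannot overflow, as $v_1 = \widetilde{w}_1$ and $|\widetilde{w}_1| \leq \tfrac12 K^{(i)}\delta < K^{(i)}\delta$). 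On $\mathcal{O}_j^c$ the two runs coincide trajectory-by-trajectory, so the finite-alphabet $\widetilde{u}_{N_{i-1}}$ equals the infinite-alphabet one, which \cref{lemma:error-bound-step2-infinite} bounds by $\delta\sqrt{2\pi pm\log N_{i-1}}\max_{1\leq j\leq N_{i-1}}\|\widetilde{X}^{(i-1)}_j\|_2$ outside an event of probability at most $\tfrac{\sqrt{2}m}{N_{i-1}^p}$. Hence, for each $j$, the probability that \eqref{thm:error-bound-finite-light-eq2} fails for that neuron is at most $\tfrac{\sqrt{2}m}{N_{i-1}^p}+\prob(\mathcal{O}_j)$, and a union bound over $j=1,\dots,N_i$ will finish the argument once $\prob(\mathcal{O}_j)$ is controlled.

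To bound $\prob(\mathcal{O}_j)$, I would split according to the first overflow time: with $B_t := \{|v_t| > K^{(i)}\delta\}$, one has $\prob(\mathcal{O}_j) \leq \sum_{t=2}^{N_{i-1}}\prob\bigl(B_t \cap \bigcap_{s<t}B_s^c\bigr)$. On $\bigcap_{s<t}B_s^c$ the finite and infinite runs agree through step $t-1$, so $v_t$ is exactly $\widetilde{w}_t + \frac{\langle \widetilde{X}_t^{(i-1)}, \widetilde{u}_{t-1}\rangle}{\|\widetilde{X}_t^{(i-1)}\|_2^2}$ with $\widetilde{u}_{t-1}$ the \emph{infinite}-alphabet error vector; combining $|v_t| > K^{(i)}\delta$ with $|\widetilde{w}_t| \leq \tfrac12 K^{(i)}\delta$ then shows $B_t \cap \bigcap_{s<t}B_s^c \subseteq \bigl\{|\langle \widetilde{X}_t^{(i-1)}, \widetilde{u}_{t-1}\rangle| > \tfrac12 K^{(i)}\delta\,\|\widetilde{X}_t^{(i-1)}\|_2^2\bigr\}$. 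This reduces everything to a one-dimensional tail estimate for the scalar $\langle \widetilde{X}_t^{(i-1)}, \widetilde{u}_{t-1}\rangle$ in the infinite-alphabet process.

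For that estimate I would reuse the intermediate conclusion inside the proof of \cref{lemma:error-bound-step2-infinite}, namely $\widetilde{u}_{t-1} \leq_\mathrm{cx} \mathcal{N}(0,\Sigma_{t-1})$ (inequality \eqref{eq:error-bound-step2-infinite-eq0}) together with $\Sigma_{t-1} \preceq \tfrac{\pi\delta^2}{2}\bigl(\max_{1\leq j\leq t-1}\|\widetilde{X}_j^{(i-1)}\|_2^2\bigr)I$ from \cref{lemma:bound-covar}. Applying the linear functional $x\mapsto\langle \widetilde{X}_t^{(i-1)},x\rangle$ and invoking \cref{lemma:cx-afine}, \cref{lemma:cx-normal}, and transitivity of $\leq_\mathrm{cx}$ gives $\langle \widetilde{X}_t^{(i-1)}, \widetilde{u}_{t-1}\rangle \leq_\mathrm{cx} \mathcal{N}(0,\sigma_t^2)$ with $\sigma_t^2 := \tfrac{\pi\delta^2}{2}\bigl(\max_{1\leq j\leq t-1}\|\widetilde{X}_j^{(i-1)}\|_2^2\bigr)\|\widetilde{X}_t^{(i-1)}\|_2^2$. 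Then \cref{lemma:cx-gaussian-tail} with $m=1$ and threshold $\tfrac12 K^{(i)}\delta\|\widetilde{X}_t^{(i-1)}\|_2^2$ yields
\[
\prob\Bigl(|\langle \widetilde{X}_t^{(i-1)}, \widetilde{u}_{t-1}\rangle| > \tfrac12 K^{(i)}\delta\,\|\widetilde{X}_t^{(i-1)}\|_2^2\Bigr) \leq \sqrt{2}\exp\Bigl(-\frac{(K^{(i)})^2\|\widetilde{X}_t^{(i-1)}\|_2^2}{8\pi\max_{1\leq j\leq t-1}\|\widetilde{X}_j^{(i-1)}\|_2^2}\Bigr),
\]
so that $\prob(\mathcal{O}_j) \leq \sqrt{2}\sum_{t=2}^{N_{i-1}}\exp(\cdots)$; summing the per-neuron failure probability $\tfrac{\sqrt{2}m}{N_{i-1}^p}+\prob(\mathcal{O}_j)$ over $j=1,\dots,N_i$ produces exactly the probability claimed in the statement.

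The hard part will be the coupling-and-conditioning bookkeeping in the first two steps: one has to check carefully that, on the event that no overflow has yet occurred, the finite-alphabet quantizer is genuinely fed the same argument as the infinite-alphabet recursion, so that the convex-order dominations established \emph{for the infinite-alphabet process} inside \cref{lemma:error-bound-step2-infinite} may legitimately be applied to it (and, correspondingly, that the bad event for \eqref{thm:error-bound-finite-light-eq2} restricted to $\mathcal{O}_j^c$ is contained in the bad event of that lemma). Once this is set up, the remaining pieces — the reduction via perfect data alignment, the scalarization enabled by \eqref{thm:error-bound-finite-light-eq1}, the convex-order chain, and the two union bounds — are routine.
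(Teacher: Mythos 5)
Your proposal is correct and follows essentially the same route as the paper: reduce to the Phase~II error via perfect data alignment, observe that the finite alphabet coincides with the infinite one unless the quantizer argument overflows, use \eqref{thm:error-bound-finite-light-eq1} to reduce overflow at step $t$ to a tail event for the scalar $\langle\widetilde{X}_t^{(i-1)},\widetilde{u}_{t-1}\rangle$ controlled by the convex-order domination $\widetilde{u}_{t-1}\leq_{\mathrm{cx}}\mathcal{N}(0,\sigma_{t-1}^2I)$ and \cref{lemma:cx-gaussian-tail}, and finish with union bounds over $t$ and $j$; your exponent matches the paper's exactly. Your explicit coupling of the finite- and infinite-alphabet runs and the first-overflow-time decomposition is a slightly more careful formalization of the paper's statement that ``on the event $\{|v_t|\le K^{(i)}\delta\}$ we can quantize as if the alphabet were infinite,'' but it is the same argument.
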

\begin{proof}
Fix a neuron $w:=W^{(i)}_j\in\R^{N_{i-1}}$ for some $1\leq j\leq N_i$. By our assumption \eqref{thm:error-bound-finite-light-eq1}, the aligned weights $\widetilde{w}$ satisfy $\|\widetilde{w}\|_\infty \leq \frac{1}{2} K^{(i)}\delta$. Then, we perform the iteration  \eqref{eq:quantization-algorithm-step2} in \cref{algo:SPFQ-perfect}. At the $t$-th step, similar to \eqref{eq:error-bound-step2-infinite-eq1}, \eqref{eq:error-bound-step2-infinite-eq3}, and \eqref{eq:error-bound-step2-infinite-eq5}, we have
\[
\widetilde{u}_t = P_{\widetilde{X}^{(i-1)\perp}_t} (h_t) + (v_t-\widetilde{q}_t) \widetilde{X}^{(i-1)}_t
\]
where 
\begin{equation}\label{eq:error-bound-finite-light-eq4}
h_t = \widetilde{u}_{t-1}+\widetilde{w}_t \widetilde{X}^{(i-1)}_t,\quad  v_t=\frac{\langle \widetilde{X}^{(i-1)}_t, h_t\rangle}{\|\widetilde{X}^{(i-1)}_t\|_2^2}, \quad \text{and} \quad \widetilde{q}_t=\mathcal{Q}_\mathrm{StocQ}(v_t). 
\end{equation}
If $t=1$, then $h_1=\widetilde{w}_1 \widetilde{X}^{(i-1)}_1$, $v_1=\widetilde{w}_1$, and $\widetilde{q}_1=\mathcal{Q}_\mathrm{StocQ}(v_1)$. Since $|v_1|=|\widetilde{w}_1|\leq \|\widetilde{w}\|_\infty \leq \frac{1}{2}K^{(i)}\delta$, we get $|v_1-\widetilde{q}_1|\leq \delta$ and the proof technique used for the case $t=1$ in \cref{lemma:error-bound-step2-infinite} can be applied here to conclude that $\widetilde{u}_1\leq_\mathrm{cx}\mathcal{N}(0, \sigma_1^2 I )$ with $\sigma_1^2 =\frac{\pi\delta^2}{2} \|\widetilde{X}_1^{(i-1)}\|_2^2$. Next, for $t\geq 2$, assume that $\widetilde{u}_{t-1} \leq_\mathrm{cx} \mathcal{N}(0, \sigma^2_{t-1}I)$ holds where $\sigma_{t-1}^2=\frac{\pi\delta^2}{2}\max_{1\leq j\leq t-1}\|\widetilde{X}^{(i-1)}_j\|_2^2$ is defined as in \cref{lemma:error-bound-step2-infinite}. It follows from \eqref{eq:error-bound-finite-light-eq4} and \cref{lemma:cx-afine} that
\begin{align*}
    |v_t| = \Bigl|\frac{\langle \widetilde{X}^{(i-1)}_t, \widetilde{u}_{t-1}\rangle}{\|\widetilde{X}^{(i-1)}_t\|_2^2} + \widetilde{w}_t \Bigr| \leq \Bigl|\frac{\langle \widetilde{X}^{(i-1)}_t, \widetilde{u}_{t-1}\rangle}{\|\widetilde{X}^{(i-1)}_t\|_2^2}\Bigr| + \|\widetilde{w}\|_\infty  \leq \Bigl|\frac{\langle \widetilde{X}^{(i-1)}_t, \widetilde{u}_{t-1}\rangle}{\|\widetilde{X}^{(i-1)}_t\|_2^2}\Bigr| + \frac{1}{2} K^{(i)}\delta 
\end{align*} 
with $\frac{\langle \widetilde{X}^{(i-1)}_t, \widetilde{u}_{t-1}\rangle}{\|\widetilde{X}^{(i-1)}_t\|_2^2} \leq_\mathrm{cx} \mathcal{N}\Bigl(0, \frac{\sigma_{t-1}^2}{\|\widetilde{X}^{(i-1)}_t\|_2^2} \Bigr)$. Then we have, by \cref{lemma:cx-gaussian-tail}, that
\[
\prob(|v_t| \leq K^{(i)}\delta) \geq \prob\Bigl( \Bigl| \frac{\langle \widetilde{X}^{(i-1)}_t, \widetilde{u}_{t-1}\rangle}{\|\widetilde{X}^{(i-1)}_t\|_2^2} \Bigr| \leq \frac{1}{2}K^{(i)}\delta\Bigr) \geq 1-\sqrt{2}\exp\Bigl(-\frac{(K^{(i)}\delta)^2}{16\sigma_{t-1}^2} \|\widetilde{X}_t^{(i-1)}\|_2^2\Bigr).
\] 
 On the event $\{|v_t|\leq K^{(i)}\delta\}$, we can quantize $v_t$ as if the quantizer $\mathcal{Q}_\mathrm{StocQ}$ used the infinite alphabet $\mathcal{A}^\delta_\infty$. So $\widetilde{u}_t \leq_\mathrm{cx}\mathcal{N}(0, \sigma^2_t I)$. Therefore,  applying a union bound,
\begin{equation}\label{eq:error-bound-finite-light-eq5}
\prob\Bigl(\widetilde{u}_{N_{i-1}}\leq_\mathrm{cx} \mathcal{N}(0, \sigma_{N_{i-1}}^2 I)\Bigr)\geq 1- \sqrt{2}\sum_{t=2}^{N_{i-1}}\exp\Bigl(-\frac{(K^{(i)}\delta)^2}{16\sigma_{t-1}^2} \|\widetilde{X}_t^{(i-1)}\|_2^2\Bigr).
\end{equation}
Conditioning on the event above, that $\widetilde{u}_{N_{i-1}}\leq_\mathrm{cx} \mathcal{N}(0, \sigma_{N_{i-1}}^2 I)$, \cref{lemma:cx-gaussian-tail} yields for $\gamma\in (0, 1]$    
\[
\prob\Bigl( \|\widetilde{u}_{N_{i-1}}\|_\infty\leq 2\sigma_{N_{i-1}} \sqrt{\log(\sqrt{2}m/\gamma)}  \Bigr) \geq 1-\gamma.
\]
Setting $\gamma =\sqrt{2}mN_{i-1}^{-p}$ and recalling \eqref{eq:quantization-error-min-inf}, we obtain that 
\begin{equation}\label{eq:error-bound-finite-light-eq6}
\|X^{(i-1)}W^{(i)}_j-\widetilde{X}^{(i-1)}Q^{(i)}_j\|_2 =\|\widetilde{u}_{N_{i-1}}\|_2 
\leq \sqrt{m}\|\widetilde{u}_{N_{i-1}}\|_\infty 
\leq  2\sigma_{N_{i-1}} \sqrt{mp\log N_{i-1}} 
\end{equation}
holds with probability at least $1-\frac{\sqrt{2}m}{N_{i-1}^p}$. Combining \eqref{eq:error-bound-finite-light-eq5} and \eqref{eq:error-bound-finite-light-eq6}, for each $1\leq j\leq N_i$,
\[
\|X^{(i-1)}W^{(i)}_j-\widetilde{X}^{(i-1)}Q^{(i)}_j\|_2 \leq   2\sigma_{N_{i-1}} \sqrt{mp\log N_{i-1}} = \delta \sqrt{2\pi pm\log N_{i-1}}\max_{1\leq j\leq N_{i-1}}\|\widetilde{X}^{(i-1)}_j\|_2
\]
holds with probability exceeding 
$1-\frac{\sqrt{2}m}{N_{i-1}^p} - \sqrt{2}\sum_{t=2}^{N_{i-1}}\exp\Bigl(-\frac{(K^{(i)}\delta)^2}{16\sigma_{t-1}^2} \|\widetilde{X}_t^{(i-1)}\|_2^2\Bigr) $.
Taking a union bound over all $1\leq j\leq N_i$, we have
\begin{align*}
&\prob\Bigl( \max_{1\leq j\leq N_i}\|X^{(i-1)}W^{(i)}_j-\widetilde{X}^{(i-1)}Q^{(i)}_j\|_2 \leq \delta \sqrt{2\pi pm\log N_{i-1}}\max_{1\leq j\leq N_{i-1}}\|\widetilde{X}^{(i-1)}_j\|_2 \Bigr) \\
&\geq 1 -\frac{\sqrt{2}mN_i}{N_{i-1}^p} - \sqrt{2}N_i\sum_{t=2}^{N_{i-1}}\exp\Bigl(-\frac{(K^{(i)}\delta)^2}{16\sigma_{t-1}^2} \|\widetilde{X}_t^{(i-1)}\|_2^2\Bigr) \\
&\geq 1 -\frac{\sqrt{2}mN_i}{N_{i-1}^p} - \sqrt{2}N_i\sum_{t=2}^{N_{i-1}}\exp\Bigl(- \frac{ (K^{(i)})^2 \|\widetilde{X}_t^{(i-1)}\|_2^2}{8\pi \max_{1\leq j\leq t-1} \|\widetilde{X}_j^{(i-1)}\|_2^2 }\Bigr).
\end{align*}
\end{proof}

Next, in \cref{thm:error-bound-finite}, we show that provided the activations $X^{(i-1)}$ and $\widetilde{X}^{(i-1)}$ of the quantized and unquantized networks are sufficiently close, and provided the weights $w$ follow a random distribution, one can guarantee the needed boundedness of the aligned weights $\widetilde{w}$. This allows us to apply \cref{thm:error-bound-finite-light} and generate an error bound for finite alphabets. 
Our focus on random weights here enables us to avoid certain adversarial situations. Indeed, one can construct activations $X^{(i-1)}$ and $\widetilde{X}^{(i-1)}$ that are arbitrarily close to each other, along with adversarial weights $w$ that together lead to $\|\widetilde{w}\|_\infty$ becoming arbitrarily large.  We demonstrate this contrived adversarial scenario in \cref{prop:negative}. However, in generic cases represented by random weights, as shown in \cref{thm:error-bound-finite}, the bound on $\widetilde{w}$ is not a major issue. Consequently, one can utilize a finite alphabet for quantization as desired.

\begin{theorem}\label{thm:error-bound-finite}
Assuming that the first $i-1$ layers have been quantized, let $X^{(i-1)}$, $\widetilde{X}^{(i-1)}$ be as in \eqref{eq:layer-input}. Suppose the weight matrix $W^{(i)}\in\R^{N_{i-1}\times N_i}$ has i.i.d. $\mathcal{N}(0, 1)$ entries and
\begin{equation}\label{eq:error-bound-finite-eq0}
\|\widetilde{X}^{(i-1)}-X^{(i-1)}\|_2\leq \epsilon^{(i-1)} \sigma_1^{(i-1)} <\sigma_m^{(i-1)},
\end{equation}
where $\epsilon^{(i-1)}\in(0,1)$, $\sigma_1^{(i-1)}$ and $\sigma_m^{(i-1)}$ are the largest and smallest singular values of $X^{(i-1)}$ respectively. Let $p, K^{(i)}\in\mathbb{N}$ and $\delta>0$ such that $p\geq 3$ and 
\begin{equation}\label{eq:error-bound-finite-eq1}
K^{(i)}\delta \geq 2\eta^{(i-1)}\sqrt{2p\log N_{i-1}}. 
\end{equation}
where $\eta^{(i-1)}:=\frac{\sigma_1^{(i-1)}}{\sigma_m^{(i-1)}-\epsilon^{(i-1)}\sigma_1^{(i-1)}}$. 
If we quantize $W^{(i)}$ using \cref{algo:SPFQ-perfect} with $\mathcal{A}=\mathcal{A}^\delta_{K^{(i)}}$, then
\begin{equation}\label{eq:error-bound-finite-eq2}
\max_{1\leq j\leq N_i}\|X^{(i-1)}W^{(i)}_j-\widetilde{X}^{(i-1)}Q^{(i)}_j\|_2 \leq \delta \sqrt{2\pi pm\log N_{i-1}}\max_{1\leq j\leq N_{i-1}}\|\widetilde{X}^{(i-1)}_j\|_2 
\end{equation}
holds with probability at least $1- \frac{2N_i}{N_{i-1}^{p-1}} -\frac{\sqrt{2}mN_i}{N_{i-1}^p} - \sqrt{2}N_i\sum_{t=2}^{N_{i-1}}\exp\Bigl(- \frac{ (K^{(i)})^2 \|\widetilde{X}_t^{(i-1)}\|_2^2}{8\pi \max_{1\leq j\leq t-1} \|\widetilde{X}_j^{(i-1)}\|_2^2 }\Bigr)$.
\end{theorem}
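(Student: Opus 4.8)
The plan is to bootstrap off \cref{thm:error-bound-finite-light}: that result already converts the hypothesis $\|\widetilde{W}^{(i)}\|_{\max}\le\tfrac12 K^{(i)}\delta$ into exactly the bound \eqref{eq:error-bound-finite-eq2}, so the whole task here is to show that, under the random-weight hypothesis, the aligned weights produced by \eqref{eq:min-norm} do satisfy this boundedness condition with the stated probability. First I would record a purely deterministic consequence of \eqref{eq:error-bound-finite-eq0}: by Weyl's perturbation inequality for singular values, $\sigma_m(\widetilde{X}^{(i-1)})\ge\sigma_m^{(i-1)}-\epsilon^{(i-1)}\sigma_1^{(i-1)}>0$, so $\widetilde{X}^{(i-1)}$ has full row rank $m$. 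Hence for every $w$ the linear constraint in \eqref{eq:min-norm} is feasible, its minimizer $\widetilde{w}$ is attained, one has $\|(\widetilde{X}^{(i-1)})^{\dagger}\|_{2}\le (\sigma_m^{(i-1)}-\epsilon^{(i-1)}\sigma_1^{(i-1)})^{-1}$ for the Moore--Penrose pseudoinverse, and $\widetilde{X}^{(i-1)}(\widetilde{X}^{(i-1)})^{\dagger}=I_m$.

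Next, fix a column $w:=W^{(i)}_j$, which is $\mathcal{N}(0,I_{N_{i-1}})$, and set $E:=X^{(i-1)}-\widetilde{X}^{(i-1)}$ and $M:=(\widetilde{X}^{(i-1)})^{\dagger}E$. I would then exhibit the explicit feasible point $\widetilde{w}_0:=(I+M)w$; it satisfies the constraint because $\widetilde{X}^{(i-1)}\widetilde{w}_0=\widetilde{X}^{(i-1)}w+\widetilde{X}^{(i-1)}(\widetilde{X}^{(i-1)})^{\dagger}Ew=\widetilde{X}^{(i-1)}w+Ew=X^{(i-1)}w$. Since $\widetilde{w}$ has the smallest $\ell_\infty$ norm among all feasible vectors, $\|\widetilde{w}\|_\infty\le\|\widetilde{w}_0\|_\infty$, so it suffices to control $\|\widetilde{w}_0\|_\infty$.

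Writing $M_k$ for the $k$-th row of $M$, the $k$-th coordinate of $\widetilde{w}_0$ equals $\langle e_k+M_k,\,w\rangle\sim\mathcal{N}(0,\|e_k+M_k\|_2^2)$, and $\|e_k+M_k\|_2\le 1+\|M\|_2\le 1+\|(\widetilde{X}^{(i-1)})^{\dagger}\|_2\|E\|_2\le 1+\frac{\epsilon^{(i-1)}\sigma_1^{(i-1)}}{\sigma_m^{(i-1)}-\epsilon^{(i-1)}\sigma_1^{(i-1)}}=1+\epsilon^{(i-1)}\eta^{(i-1)}\le\eta^{(i-1)}$, where the last inequality is equivalent to $\sigma_1^{(i-1)}\ge\sigma_m^{(i-1)}$ and hence automatic. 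A standard Gaussian maximal inequality together with a union bound over the $N_{i-1}$ coordinates then gives $\|\widetilde{w}_0\|_\infty\le\eta^{(i-1)}\sqrt{2p\log N_{i-1}}$ with probability at least $1-2N_{i-1}^{1-p}$, and by \eqref{eq:error-bound-finite-eq1} the right-hand side is at most $\tfrac12K^{(i)}\delta$. A union bound over the $N_i$ columns of $W^{(i)}$ upgrades this to $\|\widetilde{W}^{(i)}\|_{\max}\le\tfrac12K^{(i)}\delta$ with probability at least $1-\frac{2N_i}{N_{i-1}^{p-1}}$.

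Conditioning on this event, the hypothesis \eqref{thm:error-bound-finite-light-eq1} of \cref{thm:error-bound-finite-light} is met, so \eqref{thm:error-bound-finite-light-eq2} — which is literally \eqref{eq:error-bound-finite-eq2} — holds after intersecting with the success event of \cref{thm:error-bound-finite-light}; a final union bound over the two failure probabilities produces the claimed estimate. I expect the only genuinely delicate point to be keeping the per-coordinate standard deviation of $\widetilde{w}_0$ pinned at $\eta^{(i-1)}$ rather than something like $1+\eta^{(i-1)}$ — a constant-factor slack there would violate \eqref{thm:error-bound-finite-light-eq1} and destroy the bound — and this is precisely where the assumption $\epsilon^{(i-1)}\sigma_1^{(i-1)}<\sigma_m^{(i-1)}$ (equivalently $\eta^{(i-1)}\ge(1-\epsilon^{(i-1)})^{-1}$) enters; everything else is bookkeeping of probabilities and an appeal to \cref{thm:error-bound-finite-light}.
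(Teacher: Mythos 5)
Your proposal is correct and follows the same overall architecture as the paper: reduce the theorem to the boundedness condition $\|\widetilde{W}^{(i)}\|_{\max}\le\tfrac12 K^{(i)}\delta$, invoke the argument of \cref{thm:error-bound-finite-light} on that event, and union bound over neurons and failure probabilities. Where you genuinely diverge is in how the boundedness of the aligned weights is established. The paper delegates this to \cref{prop:min-inf-stability}, which builds the SVD-based feasible point $z^\sharp=V(S+\mathcal{E})^{-1}SV^\top w\sim\mathcal{N}(0,BB^\top)$ and then, because $BB^\top$ is not diagonal, invokes the log-concavity comparison \cref{lemma:log-concavity} (proved via Pr\'ekopa--Leindler) to dominate $\|z^\sharp\|_\infty$ by that of a scaled standard Gaussian. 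You instead exhibit the pseudoinverse-based feasible point $\widetilde{w}_0=(I+(\widetilde{X}^{(i-1)})^{\dagger}E)w$, observe that each coordinate is marginally $\mathcal{N}(0,\sigma_k^2)$ with $\sigma_k\le 1+\epsilon^{(i-1)}\eta^{(i-1)}\le\eta^{(i-1)}$, and conclude by a per-coordinate Gaussian tail bound plus a union bound over the $N_{i-1}$ coordinates — no joint-distribution comparison needed, since an $\ell_\infty$ union bound only sees marginals. Your computation $1+\epsilon^{(i-1)}\eta^{(i-1)}=\sigma_m^{(i-1)}/(\sigma_m^{(i-1)}-\epsilon^{(i-1)}\sigma_1^{(i-1)})\le\eta^{(i-1)}$ is exactly right, and your failure probability $2N_{i-1}^{1-p}$ per neuron matches the paper's. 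The trade-off: your route is more elementary (it bypasses Lemma B.6 entirely and uses a different, arguably more natural, feasible point), while the paper's Proposition B.7 is stated as a standalone perturbation result for underdetermined systems and is paired with the adversarial counterexample of \cref{prop:negative}; both yield the identical constant $\eta^{(i-1)}$ and identical probabilities, so nothing is lost either way.
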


\begin{proof}
Pick a neuron $w:=W^{(i)}_j\in\R^{N_{i-1}}$ for some $1\leq j\leq N_i$. Then we have $w\sim\mathcal{N}(0, I)$ and since we are using \cref{algo:SPFQ-perfect}, we must work with the resulting $\widetilde{w}$, the solution of \eqref{eq:min-norm}. Applying \cref{prop:min-inf-stability} to $w$ with $X=X^{(i-1)}$ and $\widetilde{X}=\widetilde{X}^{(i-1)}$, we obtain
\[
\prob\Bigl( \|\widetilde{w}\|_\infty \leq 
\eta^{(i-1)} \sqrt{2p\log N_{i-1}} \Bigr) \geq 1- \frac{2}{N_{i-1}^{p-1}},
\]
so that using \eqref{eq:error-bound-finite-eq1} gives
\begin{equation}\label{eq:error-bound-finite-eq3}
\prob\Bigl( \|\widetilde{w}\|_\infty \leq \frac{1}{2}K^{(i)}\delta \Bigr) \geq 1- \frac{2}{N_{i-1}^{p-1}}.
\end{equation}
Conditioning on the event $\{\|\widetilde{w}\|_\infty \leq \frac{1}{2}K^{(i)}\delta\}$ and applying exactly the same argument in \cref{thm:error-bound-finite-light},  
\begin{equation}\label{eq:error-bound-finite-eq4}
\|X^{(i-1)}W^{(i)}_j-\widetilde{X}^{(i-1)}Q^{(i)}_j\|_2 \leq  \delta \sqrt{2\pi pm\log N_{i-1}}\max_{1\leq j\leq N_{i-1}}\|\widetilde{X}^{(i-1)}_j\|_2
\end{equation}
holds with probability exceeding 
$1-\frac{\sqrt{2}m}{N_{i-1}^p} - \sqrt{2}\sum_{t=2}^{N_{i-1}}\exp\Bigl(- \frac{ (K^{(i)})^2 \|\widetilde{X}_t^{(i-1)}\|_2^2}{8\pi \max_{1\leq j\leq t-1} \|\widetilde{X}_j^{(i-1)}\|_2^2 }\Bigr) $. Combining \eqref{eq:error-bound-finite-eq3} and \eqref{eq:error-bound-finite-eq4}, and taking a union bound over all $1\leq j\leq N_i$, we obtain \eqref{eq:error-bound-finite-eq2}.
\end{proof}
Now we are about to approximate the number of bits needed for guaranteeing the derived bounds. Note that, in \cref{thm:error-bound-finite}, we achieved the same error bound \eqref{eq:error-bound-finite-eq2} as in \cref{lemma:error-bound-step2-infinite},  choosing proper $\epsilon^{(i-1)}\in (0,1)$ and $K^{(i)}\in\mathbb{N}$ such that \eqref{eq:error-bound-finite-eq0} and \eqref{eq:error-bound-finite-eq1} are satisfied and the associated probability in \eqref{eq:error-bound-finite-eq2} is positive. This implies that the error bounds we obtained in \cref{sec:error-bounds-SPFQ} remain valid for our finite alphabets as well. In particular, by a similar argument we used to obtain \eqref{eq:relative-error}, one can get the following approximations
\begin{align*}
\frac{\|\widetilde{X}^{(i-1)}-X^{(i-1)}\|^2_F}{\|X^{(i-1)}\|^2_F} & \lesssim  \Bigl(\prod_{k=0}^{i-1} \log N_k\Bigr)   \sum_{j=0}^{i-2} \prod_{k=j}^{i-2} \frac{m}{N_k}.
\end{align*}
Due to $\|X^{(i-1)}\|_F\leq \sqrt{m} \|X^{(i-1)}\|_2$ and $\|\widetilde{X}^{(i-1)}-X^{(i-1)}\|_2\leq \|\widetilde{X}^{(i-1)}-X^{(i-1)}\|_F$, we have 
\begin{align*}
\frac{\|\widetilde{X}^{(i-1)}-X^{(i-1)}\|^2_2}{\|X^{(i-1)}\|_2^2}&\leq \frac{m\|\widetilde{X}^{(i-1)}-X^{(i-1)}\|^2_F}{\|X^{(i-1)}\|^2_F}\\
&\lesssim m\Bigl(\prod_{k=0}^{i-1} \log N_k\Bigr)   \sum_{j=0}^{i-2} \prod_{k=j}^{i-2} \frac{m}{N_k}.
\end{align*}
If $\prod_{k=j}^{i-2} N_k \gtrsim m^{i-j}  \prod_{k=0}^{i-1} \log N_k$ for $0\leq j\leq i-2$, then it is possible to choose $\epsilon^{(i-1)}\in (0,1)$ such that \eqref{eq:error-bound-finite-eq0} holds. Moreover, since $\sigma^{(i-1)}_m \leq \sigma^{(i-1)}_1$, we have $\eta^{(i-1)}=\frac{\sigma_1^{(i-1)}}{\sigma_m^{(i-1)}-\epsilon^{(i-1)}\sigma_1^{(i-1)}}\geq (1-\epsilon^{(i-1)})^{-1}$ and thus \eqref{eq:error-bound-finite-eq1} becomes 
\begin{equation}\label{eq:level-constraint1}
K^{(i)} \geq 2\delta^{-1} (1-\epsilon^{(i-1)})^{-1} \sqrt{2p\log N_{i-1}} \gtrsim \sqrt{\log N_{i-1}}.
\end{equation}
Assuming columns of $\widetilde{X}^{(i-1)}$ are similar in the sense of
\[
\max_{1\leq j\leq t-1} \|\widetilde{X}_j^{(i-1)}\|_2 \lesssim \sqrt{\log N_{i-1}} \|\widetilde{X}_t^{(i-1)}\|_2, \quad 2\leq t\leq N_{i-1},
\]
we obtain that \eqref{eq:error-bound-finite-eq2} holds with probability exceeding
\begin{align}\label{eq:level-constraint2}
&1- \frac{2N_i}{N_{i-1}^{p-1}} -\frac{\sqrt{2}mN_i}{N_{i-1}^p} - \sqrt{2}N_i\sum_{t=2}^{N_{i-1}}\exp\Bigl(- \frac{ (K^{(i)})^2 \|\widetilde{X}_t^{(i-1)}\|_2^2}{8\pi \max_{1\leq j\leq t-1} \|\widetilde{X}_j^{(i-1)}\|_2^2 }\Bigr) \nonumber\\
&\geq 1- \frac{2N_i}{N_{i-1}^{p-1}} -\frac{\sqrt{2}mN_i}{N_{i-1}^p} - \sqrt{2}N_{i-1}N_i\exp\Bigl(- \frac{ (K^{(i)})^2 }{8\pi\log N_{i-1} }\Bigr). 
\end{align}
To make \eqref{eq:level-constraint2} positive, we have 
\begin{equation}
\label{eq:level-constraint3}
K^{(i)} \gtrsim \log \max\{N_{i-1}, N_i\}.
\end{equation}
It follows from \eqref{eq:level-constraint1} and \eqref{eq:level-constraint2} that,  in the $i$th layer,  we only need a number of bits $b^{(i)}$ that satisfies
\[
b^{(i)} \geq \log_2 K^{(i)} + 1\gtrsim \log_2\log  \max\{N_{i-1}, N_i\}
\]
 to guarantee the performance of our quantization method using finite alphabets.

\section{Experiments}\label{sec:experiment}
In this section, we test the performance of SPFQ on the ImageNet classification task and compare it with the non-random scheme GPFQ in \cite{zhang2022post}. In particular, we adopt the version of SPFQ corresponding to \eqref{eq:quantization-algorithm}   \footnote{Code: \href{https://github.com/jayzhang0727/Stochastic-Path-Following-Quantization.git}{$\mathrm{https://github.com/jayzhang0727/Stochastic-Path-Following-Quantization.git}$}}, i.e.,  \cref{algo:SPFQ-order-r} with order $r=1$. Note that the GPFQ algorithm runs the same iterations as in \eqref{eq:quantization-algorithm} except that $\mathcal{Q}_\mathrm{StocQ}$ is substituted with a non-random quantizer $\mathcal{Q}_\mathrm{DetQ}$, so the associated iterations are given by
\begin{equation}\label{eq:GPFQ}
\begin{cases}
u_0 = 0 \in\mathbb{R}^m,\\
q_t = \mathcal{Q}_\mathrm{DetQ}\biggl(\frac{\langle \widetilde{X}_t^{(i-1)}, u_{t-1}+w_t X_t^{(i-1)} \rangle}{\|\widetilde{X}^{(i-1)}_t\|_2^2} \biggr),\\
u_t = u_{t-1} + w_t X^{(i-1)}_t - q_t \widetilde{X}^{(i-1)}_t
\end{cases}
\end{equation}
where $\mathcal{Q}_\mathrm{DetQ}(z):=\argmin_{p\in\mathcal{A}}|z-p|$. For ImageNet data, we consider ILSVRC-2012 \cite{deng2009imagenet}, a 1000-category dataset with over 1.2 million training images and 50 thousand validation images. Additionally, we resize all images to $256\times 256$ and use the normalized $224\times 224$ center crop, which is a standard procedure. The evaluation metrics we choose are top-1 and top-5 accuracy of the quantized models on the validation dataset. As for the neural network architectures, we quantize all layers of VGG-16 \cite{simonyan2014very},  ResNet-18 and ResNet-50 \cite{he2016deep}, which are pretrained $32$-bit floating point neural networks provided by torchvision in PyTorch \cite{paszke2019pytorch}. Moreover, we fuse the batch normalization (BN) layer with the convolutional layer, and freeze the BN statistics before quantization. 

\begin{table}[ht]
\caption{Top-1/Top-5 validation accuracy for SPFQ on ImageNet.}
\label{table:model-results}
\centering 
\begin{tabular}{l|c|c|c|ccc}
\toprule
Model  & $m$ & $b$ & $C$ & Quant Acc  (\%) & Ref Acc (\%)  & Acc Drop (\%) \\
\midrule 
\multirow{3}{*}{VGG-16} &
\multirow{3}{*}{1024} & 4 & 1.02 & 70.48/89.77 & 71.59/90.38  & 1.11/0.61  \\
  & & 5 & 1.23  & 71.08/90.15 & 71.59/90.38 & 0.51/0.23  \\
  & & 6 & 1.26  & 71.24/90.37 & 71.59/90.38 &  0.35/0.01 \\
\midrule 
\multirow{3}{*}{ResNet-18} &
\multirow{3}{*}{2048} & 4 & 0.91 & 67.36/87.74 & 69.76/89.08  & 2.40/1.34  \\
  & & 5 & 1.32  & 68.79/88.77 & 69.76/89.08 & 0.97/0.31  \\
  & & 6 & 1.68  & 69.43/88.96 & 69.76/89.08 & 0.33/0.12 \\
\midrule 
\multirow{3}{*}{ResNet-50} &
\multirow{3}{*}{2048} & 4 & 1.10  & 73.37/91.61 & 76.13/92.86 & 2.76/1.25 \\
  & & 5 & 1.62  & 75.05/92.43 & 76.13/92.86 & 1.08/0.43 \\
  & & 6 & 1.98  & 75.66/92.67 & 76.13/92.86 & 0.47/0.19 \\
\bottomrule
\end{tabular}
\vskip -0.1in
\end{table}

Since the major difference between SPFQ in \eqref{eq:quantization-algorithm} and GPFQ in \eqref{eq:GPFQ} is the choice of quantizers, we will follow the experimental setting for alphabets used in \cite{zhang2022post}. Specifically, we use batch size $m$, fixed bits $b\in\mathbb{N}$ for all the layers, and quantize each $W^{(i)}\in\R^{N_{i-1}\times N_{i}}$ with midtread alphabets $\mathcal{A}=\mathcal{A}_K^\delta$ as in \eqref{eq:alphabet-midtread}, where level $K$ and step size $\delta$ are given by 
\begin{equation}\label{eq-alphabet-exp}
    K= 2^{b-1}, \quad \delta=\delta^{(i)} := \frac{C}{2^{b-1}N_i}\sum_{1\leq j \leq N_i}\|W_j^{(i)}\|_\infty.
\end{equation}
Here, $C>0$ is a constant that is only dependent on bitwidth $b$, determined by grid search with cross-validation, and fixed across layers, and across batch-sizes. One can, of course, expect to do better by using different values of $C$ for different layers but we refrain from doing so, as our main goal here is to demonstrate the performance of SPFQ even with minimal fine-tuning.

\begin{figure*}[ht]
\begin{subfigure}{.45\textwidth}
  \centering
  \includegraphics[width=\linewidth]{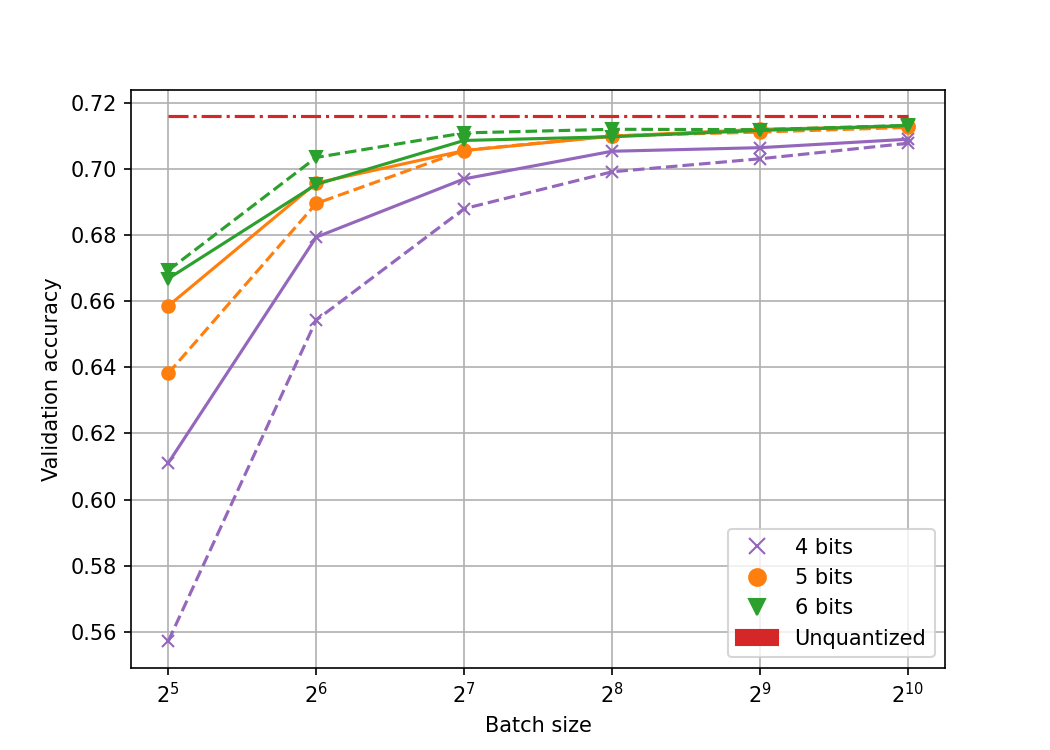}  
\caption{Top-1 accuracy of VGG-16}
\end{subfigure}%
\begin{subfigure}{.45\textwidth}
  \centering
  \includegraphics[width=\linewidth]{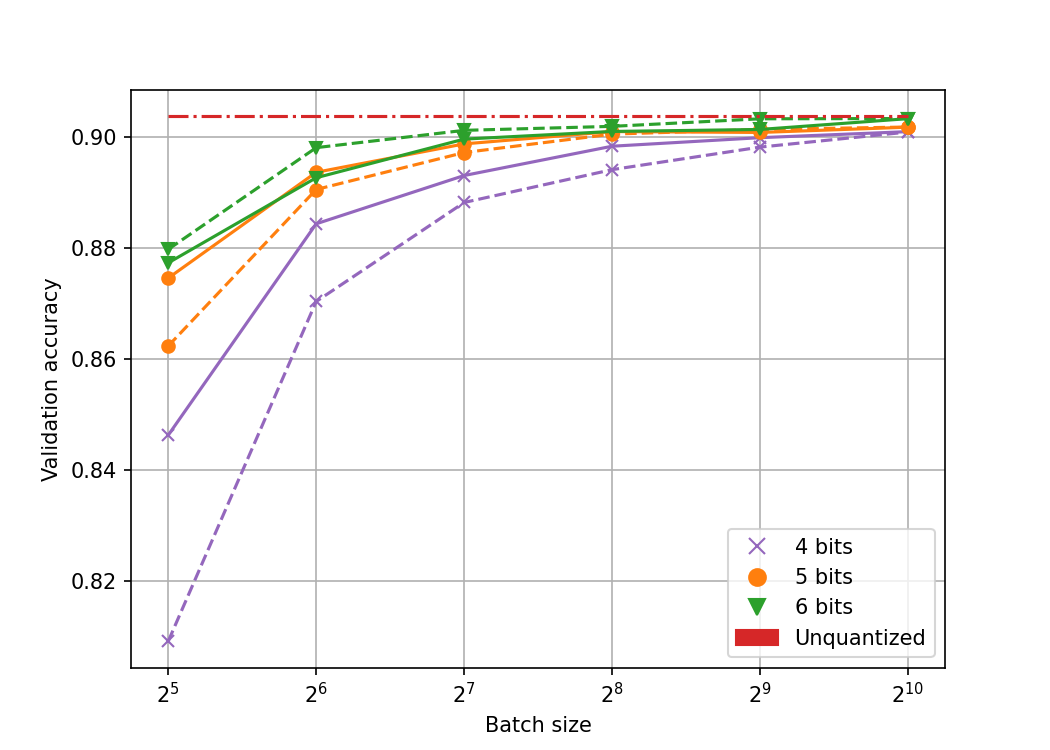}  
\caption{Top-5 accuracy of VGG-16} 
\end{subfigure}\\
\begin{subfigure}{.45\textwidth}
  \centering
  \includegraphics[width=\linewidth]{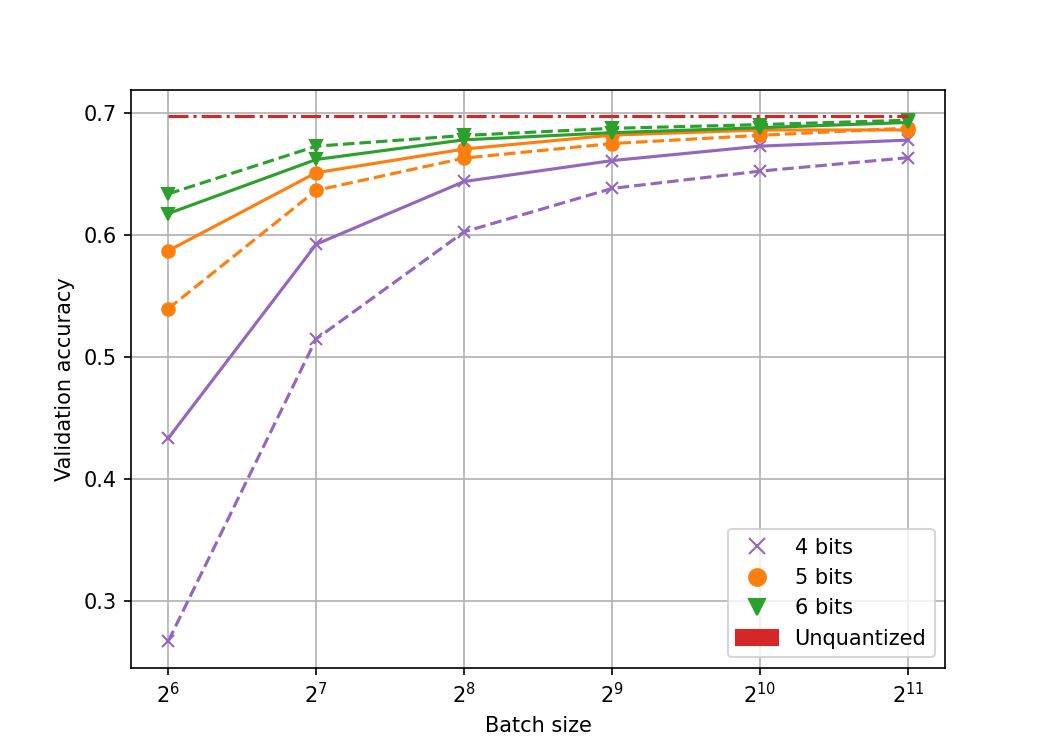}  
\caption{Top-1 accuracy of ResNet-18}
\end{subfigure}%
\begin{subfigure}{.45\textwidth}
  \centering
  \includegraphics[width=\linewidth]{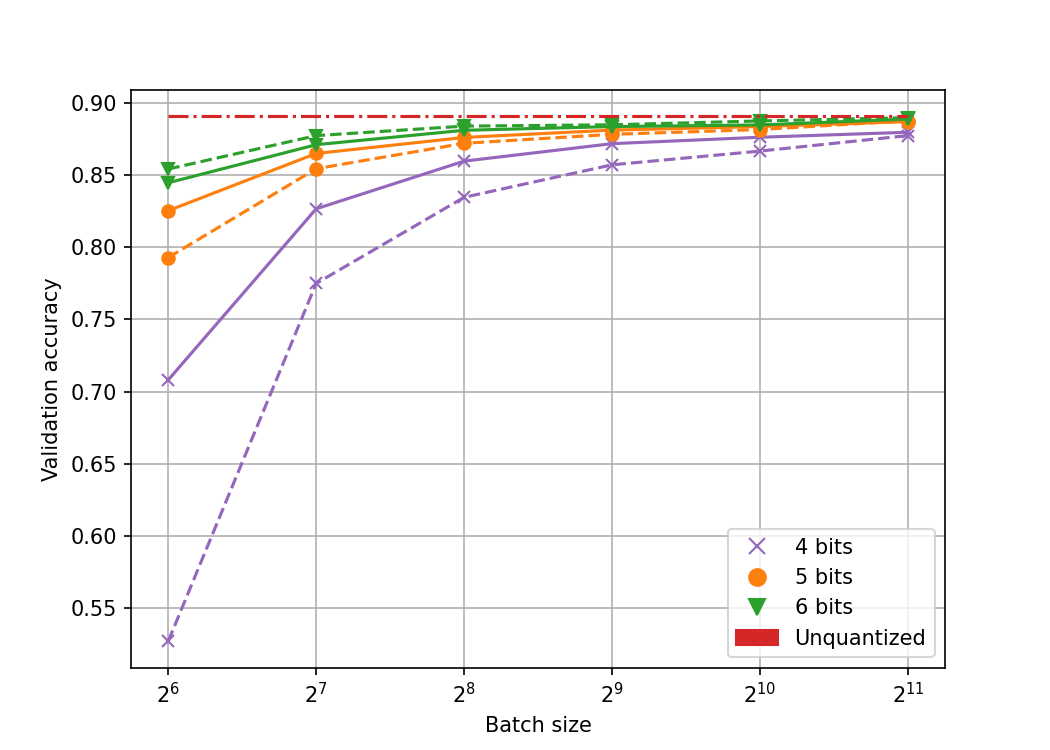}  
\caption{Top-5 accuracy of ResNet-18} 
\end{subfigure}\\
\begin{subfigure}{.45\textwidth}
  \centering
  \includegraphics[width=\linewidth]{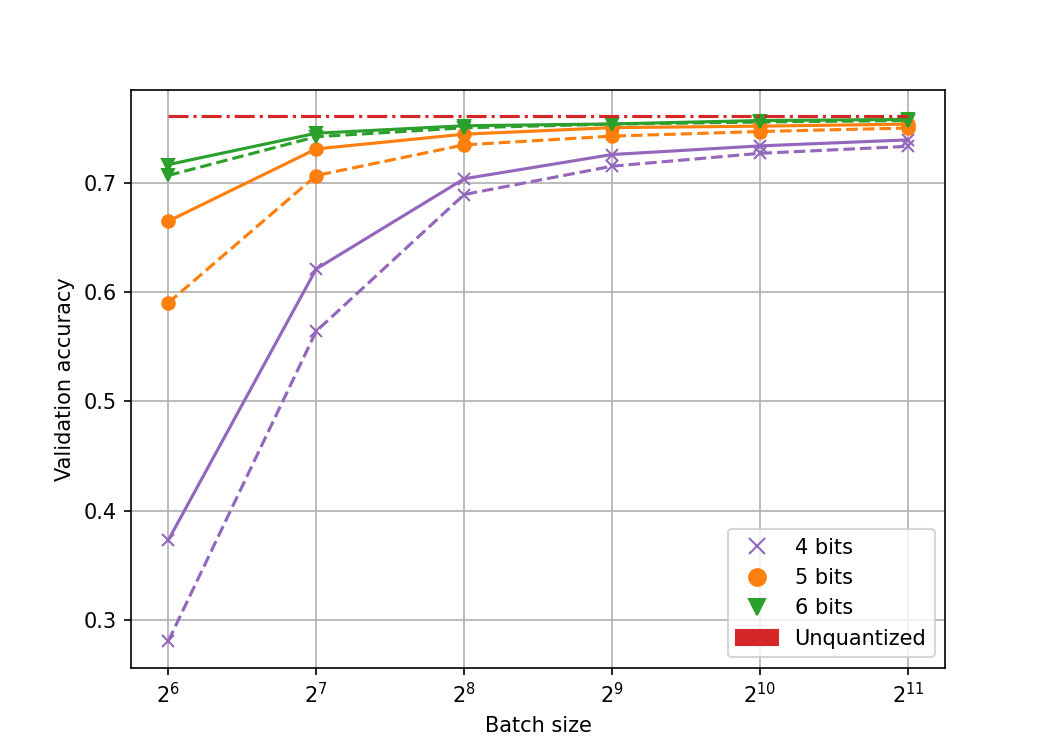}  
\caption{Top-1 accuracy of ResNet-50}
\end{subfigure}%
\begin{subfigure}{.45\textwidth}
  \centering
  \includegraphics[width=\linewidth]{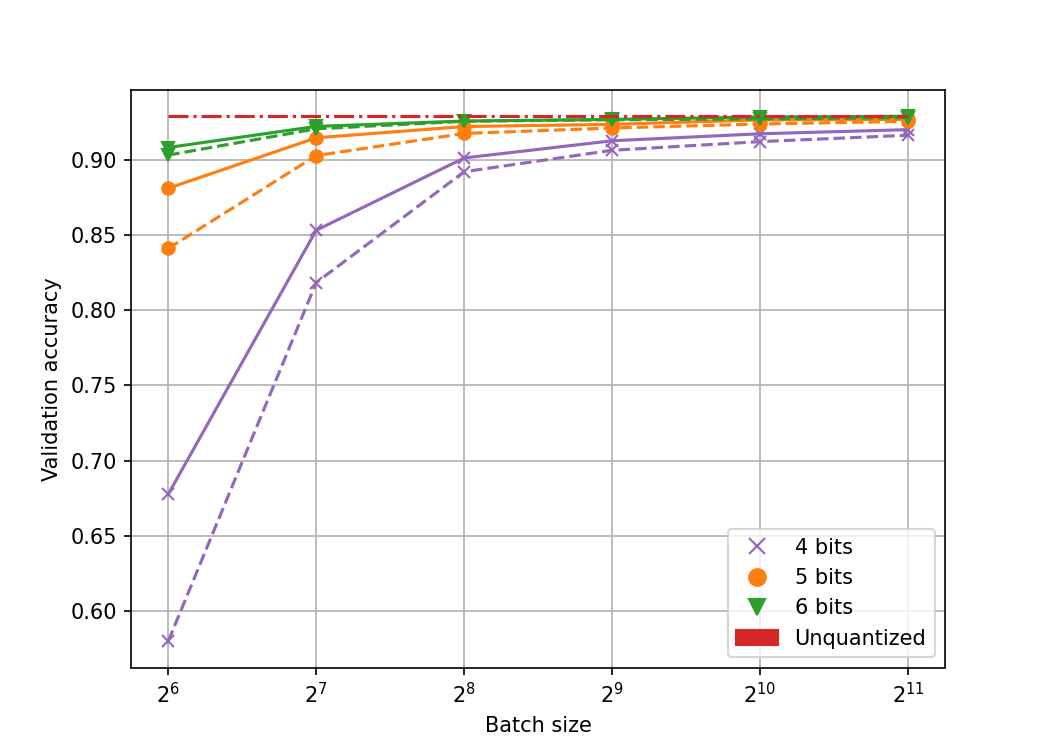}  
\caption{Top-5 accuracy of ResNet-50} 
\end{subfigure}%
\caption{Top-1 and Top-5 validation accuracy for SPFQ (dashed lines) and GPFQ (solid lines) on ImageNet.}
\label{fig:GPFQ-vs-SPFQ}
\end{figure*}

In \cref{table:model-results}, for different combinations of $m$, $b$, and $C$, we present the corresponding top-1/top-5 validation accuracy of quantized networks using SPFQ in the first column, while the second and thrid columns give the validation accuracy of unquantized models and the accuracy drop due to quantization respectively. We observe that, for all three models, the quantization accuracy is improved as the number of bits $b$ increases, and SPFQ achieves less than $0.5\%$ top-1 accuracy loss while using $6$ bits.

Next, in \cref{fig:GPFQ-vs-SPFQ}, we compare SPFQ against GPFQ by quantizing the three models in \cref{table:model-results}. These figures illustrate that GPFQ has better performance than that of SPFQ when $b=3,4$ and $m$ is small. This is not particularly surprising, as $\mathcal{Q}_\mathrm{DetQ}$ deterministically rounds its argument to the nearest alphabet element instead of performing a random rounding like $\mathcal{Q}_\mathrm{StocQ}$. However, as the batch size $m$ increases, the accuracy gap between GPFQ and SPFQ diminishes. Indeed, for VGG-16 and ResNet-18, SPFQ outperforms GPFQ when $b=6$. Further, we note that, for both SPFQ and GPFQ, one can obtain higher quantization accuracy by taking larger $m$ but the extra improvement that results from increasing the batch size rapidly decreases. 

\section*{Acknowledgements}
The authors thank Yixuan Zhou for discussions on the numerical experiments in this paper. This work was supported in part by National Science Foundation Grant DMS-2012546 and a Simons Fellowship.

\bibliographystyle{abbrvnat}
\bibliography{citations}

\appendix
\section{Properties of Convex Order}\label{appendix:convex-orders} 
Throughout this section, $\overset{d}{=}$ denotes equality in distribution. 
A well-known result is that the convex order can be characterized by a \emph{coupling} of $X$ and $Y$, i.e. constructing $X$ and $Y$ on the same probability space.
\begin{theorem}[Theorem 7.A.1 in \cite{shaked2007stochastic}]\label{thm:cx-equivalent}
The random vectors $X$ and $Y$ satisfy $X\leq_\mathrm{cx} Y$ if and only if there exist two random vectors $\hat{X}$ and $\hat{Y}$, defined on the same probability space, such that $\hat{X}\overset{d}{=}X$, $\hat{Y}\overset{d}{=}Y$, and $\E(\hat{Y}| \hat{X})=\hat{X}$.
\end{theorem}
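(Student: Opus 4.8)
The plan is to prove the two implications separately; only the ``only if'' direction has real content, being a Strassen-type construction of a martingale coupling. The ``if'' direction is conditional Jensen: if $\hat X\overset{d}{=}X$ and $\hat Y\overset{d}{=}Y$ live on a common probability space with $\E(\hat Y\mid\hat X)=\hat X$, then for any convex $f:\R^n\to\R$ for which the expectations are defined we have $\E\bigl(f(\hat Y)\mid\hat X\bigr)\ge f\bigl(\E(\hat Y\mid\hat X)\bigr)=f(\hat X)$ almost surely, so taking expectations gives $\E f(Y)=\E f(\hat Y)\ge\E f(\hat X)=\E f(X)$, i.e.\ $X\leq_{\mathrm{cx}}Y$. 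Bounding the convex $f$ below by an affine function keeps the integrability bookkeeping routine.

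For the ``only if'' direction, assume $X\leq_{\mathrm{cx}}Y$; by \eqref{eq:cx-equal-mean} this forces $\E X=\E Y$, and both laws $\mu:=\mathcal L(X)$, $\nu:=\mathcal L(Y)$ have finite first moment. It suffices to produce a Markov kernel (a ``dilation'') $\kappa(x,\cdot)$ on $\R^n$ with barycenter $\int y\,\kappa(x,dy)=x$ for $\mu$-a.e.\ $x$ and $\nu=\mu\kappa$: then $\hat X\sim\mu$ together with $\hat Y$ drawn from $\kappa(\hat X,\cdot)$ is the desired pair. Equivalently, I would seek a coupling $\pi$ of $\mu$ and $\nu$ satisfying the martingale identity $\int\langle\varphi(x),y-x\rangle\,d\pi(x,y)=0$ for all bounded continuous $\varphi:\R^n\to\R^n$. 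The set of couplings of $\mu$ and $\nu$ is weak-$*$ compact and convex (both marginals are fixed, hence tight), and since $\mu,\nu$ have finite first moments this family is uniformly integrable in $\|x\|+\|y\|$, so each of the above linear constraints is weak-$*$ continuous on it; hence the set $\mathcal M(\mu,\nu)$ of martingale couplings is a compact convex subset, and the task reduces to showing $\mathcal M(\mu,\nu)\ne\emptyset$.

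The next step is a separation argument. Suppose $\mathcal M(\mu,\nu)=\emptyset$. Pairing finite signed measures with bounded continuous functions and applying Hahn--Banach / linear-programming duality to this infeasible moment problem for $\pi$, one extracts a certificate of infeasibility which, after minimizing out the $y$-variable, collapses to a single convex function $\psi:\R^n\to\R$ — concretely $\psi=-\widehat f$ for a bounded continuous $f$, where $\widehat f$ is the least concave majorant of $f$, with $\inf f\le\widehat f\le\sup f$ and, for each $x$, $\widehat f(x)=\sup\{\int f\,d\rho:\ \int y\,d\rho(y)=x\}$ — such that $\E\psi(X)>\E\psi(Y)$. Since $\psi$ is convex, this contradicts $X\leq_{\mathrm{cx}}Y$. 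Hence $\mathcal M(\mu,\nu)\ne\emptyset$, and disintegrating any $\pi\in\mathcal M(\mu,\nu)$ with respect to its first marginal $\mu$ yields the kernel $\kappa$ and the coupling $(\hat X,\hat Y)$.

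The main obstacle is entirely measure-theoretic and lives in the ``only if'' direction: (i) justifying that the martingale constraints are weak-$*$ closed despite involving the unbounded integrand $\langle\varphi(x),y-x\rangle$, which is exactly where finiteness of first moments and $\E X=\E Y$ are used; (ii) the Choquet-type identification of $\sup_{\int y\,d\rho(y)=x}\int f\,d\rho$ with the concave envelope $\widehat f(x)$, together with a measurable-selection argument to pass from the pointwise supremum to $\int\widehat f\,d\mu$; and (iii) reconciling the test-function class used for separation (bounded continuous) with the possibly unbounded convex functions in \eqref{eq:convex-order-eq}, which is harmless because the certificate $\psi=-\widehat f$ is convex and bounded above, hence admissible in \eqref{eq:convex-order-eq}. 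A cleaner but longer route sidesteps the functional analysis: prove the claim first for finitely supported $\mu,\nu$ (a finite linear program, handled by a Hall / Birkhoff--von~Neumann-type argument or by iterating elementary ``mean-preserving spreads''), then pass to general distributions by weak approximation, using the uniform first-moment control for tightness.
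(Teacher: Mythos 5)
The paper does not prove this statement at all: it is imported verbatim as Theorem~7.A.1 of the cited monograph of Shaked and Shanthikumar, and in fact only the easy (``if'') direction is ever used downstream (in the proof of \cref{lemma:cx-normal}, where $Y=X+Z$ with $\E(Z\mid X)=0$ is constructed explicitly). So there is no in-paper argument to compare against; what you have written is an attempt to reprove Strassen's theorem from scratch.

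Your ``if'' direction is complete and correct: conditional Jensen plus equality of laws gives $\E f(X)=\E f(\hat X)\le \E f(\hat Y)=\E f(Y)$, and the affine minorant handles integrability. The ``only if'' direction is where the content lies, and your outline is the standard route (compactness and convexity of the set of couplings, Hahn--Banach separation, reduction of the infeasibility certificate to the concave envelope $\widehat f$, whose negative is the convex test function contradicting $X\leq_\mathrm{cx}Y$). The structure is right, including the key identity $\widehat f(x)=\sup\{\int f\,d\rho:\int y\,d\rho(y)=x\}$ and the chain $\int f\,d\nu\le\int\widehat f\,d\nu\le\int\widehat f\,d\mu$. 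However, as you yourself flag, the decisive steps are deferred rather than executed: the infinite-dimensional LP duality (or minimax/finite-intersection argument) that produces the certificate, the upper semicontinuity and measurability of $\widehat f$ needed to integrate it against $\mu$, and the measurable disintegration of the resulting coupling into a kernel $\kappa(x,\cdot)$. Each of these is a genuine theorem, not bookkeeping, so as written this is a correct roadmap for the hard direction rather than a proof of it. Since the paper treats the result as known and only invokes the direction you have fully proved, the honest course is either to cite Strassen (or Theorem~7.A.1 directly) for the ``only if'' implication, or to carry out one of the two routes you name in full --- the finitely-supported case plus weak approximation being the more self-contained of the two.
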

In \cref{thm:cx-equivalent}, $\E(\hat{Y}| \hat{X})=\hat{X}$ implies $\E(\hat{Y}-\hat{X}|\hat{X})=0$. Let $\hat{Z}:=\hat{Y}-\hat{X}$. Then we have $\hat{Y}=\hat{X}+\hat{Z}$ with $\E(\hat{Z}|\hat{X})=0$. Thus, one can obtain $\hat{Y}$ by first sampling $\hat{X}$, and then adding a mean $0$ random vector $\hat{Z}$ whose distribution may depend on the sampled $\hat{X}$. Based on this important observation, the following result gives necessary and sufficient conditions for the comparison of multivariate normal random vectors, see e.g. Example 7.A.13 in \cite{shaked2007stochastic}. 
\begin{lemma}\label{lemma:cx-normal}
Consider multivariate normal distributions $\mathcal{N}(\mu_1,\Sigma_1)$ and $\mathcal{N}(\mu_2,\Sigma_2)$. Then 
\[
\mathcal{N}(\mu_1,\Sigma_1)\leq_\mathrm{cx}\mathcal{N}(\mu_2,\Sigma_2) \iff \mu_1=\mu_2\quad \text{and}\quad \Sigma_1 \preceq \Sigma_2.
\]
\end{lemma}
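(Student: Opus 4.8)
The plan is to establish the two implications separately, leaning on \eqref{eq:cx-equal-mean} for the ``only if'' direction and on the coupling characterization of convex order in \cref{thm:cx-equivalent} for the ``if'' direction.

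For $(\Rightarrow)$, I would start from $\mathcal{N}(\mu_1,\Sigma_1)\leq_\mathrm{cx}\mathcal{N}(\mu_2,\Sigma_2)$ and first invoke \eqref{eq:cx-equal-mean} to get $\mu_1=\mu_2=:\mu$ for free. To extract $\Sigma_1\preceq\Sigma_2$, I would test the defining inequality \eqref{eq:convex-order-eq} against the family of convex functions $f_v(x):=\bigl(v^\top(x-\mu)\bigr)^2$, $v\in\R^n$ (each convex, being the square of an affine functional). Since $\E f_v\bigl(\mathcal{N}(\mu,\Sigma_k)\bigr)=v^\top\Sigma_k v$ for $k=1,2$, this yields $v^\top\Sigma_1 v\le v^\top\Sigma_2 v$ for every $v$, i.e. $\Sigma_1\preceq\Sigma_2$.

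For $(\Leftarrow)$, assuming $\mu_1=\mu_2=:\mu$ and $\Sigma_1\preceq\Sigma_2$, I would build an explicit coupling: on a common probability space take $\hat X\sim\mathcal{N}(\mu,\Sigma_1)$ and, independently, $\hat Z\sim\mathcal{N}(0,\Sigma_2-\Sigma_1)$ --- here $\Sigma_2-\Sigma_1$ is a bona fide covariance matrix precisely because it is symmetric positive semidefinite --- and set $\hat Y:=\hat X+\hat Z$. By independence and additivity of Gaussians, $\hat Y\sim\mathcal{N}(\mu,\Sigma_2)$, while $\E(\hat Y\mid\hat X)=\hat X+\E\hat Z=\hat X$. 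Then \cref{thm:cx-equivalent} certifies $\mathcal{N}(\mu,\Sigma_1)\leq_\mathrm{cx}\mathcal{N}(\mu,\Sigma_2)$, completing the equivalence.

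I do not anticipate a real obstacle here; the proof is essentially an application of tools already assembled in the excerpt. The only spots deserving a line of justification are that $x\mapsto\bigl(v^\top(x-\mu)\bigr)^2$ is genuinely convex (so it is a legitimate test function) and that $\Sigma_2-\Sigma_1\succeq 0$ is exactly the condition allowing it to serve as the covariance of the Gaussian noise $\hat Z$. If one preferred to avoid \cref{thm:cx-equivalent}, the ``if'' direction could instead be handled by interpolating $\Sigma(t)=\Sigma_1+t(\Sigma_2-\Sigma_1)$ and showing $t\mapsto\E f\bigl(\mathcal{N}(\mu,\Sigma(t))\bigr)$ is nondecreasing for convex $f$ via a heat-equation / integration-by-parts computation, but the coupling route is shorter and cleaner.
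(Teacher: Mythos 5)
Your proof is correct and matches the paper's argument essentially line for line: the forward direction uses \eqref{eq:cx-equal-mean} for the means and the convex test functions $f_v(x)=(v^\top(x-\mu))^2$ for the covariances, and the reverse direction constructs exactly the same coupling $\hat Y=\hat X+\hat Z$ with independent $\hat Z\sim\mathcal{N}(0,\Sigma_2-\Sigma_1)$ and invokes \cref{thm:cx-equivalent}. No gaps.
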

\begin{proof}
$(\Rightarrow)$ Suppose that $X\sim\mathcal{N}(\mu_1,\Sigma_1)$ and $Y\sim\mathcal{N}(\mu_2,\Sigma_2)$ such that $X\leq_\mathrm{cx} Y$. By \eqref{eq:cx-equal-mean}, we have $\mu_1=\mu_2$. Let $a\in\R^n$ and define $f(x):=(a^\top x-a^\top \mu_1)^2$. Since $f(x)$ is convex, one can get 
\[
a^\top\Sigma_1 a=\Var(a^\top X) = \E f(X) \leq \E f(Y) = \Var(a^\top Y)=a^\top \Sigma_2 a.
\]
Since this inequality holds for arbitrary $a\in\R^n$, we obtain $\Sigma_1\preceq \Sigma_2$.

\noindent $(\Leftarrow)$ Conversely, assume that $\mu_1=\mu_2$ and $\Sigma_1\preceq \Sigma_2$. Let $X\sim\mathcal{N}(\mu_1, \Sigma_1)$ and $Z\sim\mathcal{N}(0, \Sigma_2-\Sigma_1)$ be independent. Construct a random vector $Y:=X+Z$. Then $Y\sim\mathcal{N}(\mu_2, \Sigma_2)$ and $\E(Y|X)=\E(X+Z|X)=X + \E Z=X$. Following \cref{thm:cx-equivalent}, $\mathcal{N}(\mu_1,\Sigma_1)\leq_\mathrm{cx}\mathcal{N}(\mu_2,\Sigma_2)$ holds.
\end{proof}
Moreover, the convex order is preserved under affine transformations.
\begin{lemma}\label{lemma:cx-afine} 
Suppose that $X$, $Y$ are $n$-dimensional random vectors satisfying $X\leq_\mathrm{cx} Y$. Let $A\in\R^{m\times n}$ and $b\in\R^m$. Then $AX+b\leq_\mathrm{cx}AY+b$.
\end{lemma}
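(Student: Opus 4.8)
The plan is to verify the defining inequality \eqref{eq:convex-order-eq} directly for the vectors $AX+b$ and $AY+b$. Fix an arbitrary convex function $f:\R^m\to\R$ and set $g(x):=f(Ax+b)$ for $x\in\R^n$. The key (and essentially only) observation is that $g$ is again convex: the map $x\mapsto Ax+b$ is affine, and the composition of a convex function with an affine map is convex, which follows at once from the definition of convexity. Given this, I would apply the hypothesis $X\leq_\mathrm{cx}Y$ to the convex function $g$ to obtain $\E g(X)\leq\E g(Y)$, that is, $\E f(AX+b)\leq\E f(AY+b)$. Since $f$ was an arbitrary convex function on $\R^m$ (with the relevant expectations assumed to exist), this is precisely the statement $AX+b\leq_\mathrm{cx}AY+b$.

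There is no substantial obstacle here; the only point that merits an explicit sentence is the convexity of $g$, and, if one wishes to be careful, the remark that the expectation $\E f(AX+b)$ exists exactly when $\E g(X)$ does, so the ``provided the expectations exist'' clause in \cref{def:convex-order} is respected automatically.

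If a second argument is desired for robustness, one can instead invoke the coupling characterization in \cref{thm:cx-equivalent}: choose $\hat X\overset{d}{=}X$ and $\hat Y\overset{d}{=}Y$ on a common probability space with $\E(\hat Y\mid\hat X)=\hat X$. Then $A\hat X+b\overset{d}{=}AX+b$ and $A\hat Y+b\overset{d}{=}AY+b$, while by linearity of conditional expectation and the tower property, $\E(A\hat Y+b\mid A\hat X+b)=\E\bigl(A\,\E(\hat Y\mid\hat X)+b\mid A\hat X+b\bigr)=A\hat X+b$, so $A\hat X+b$ and $A\hat Y+b$ furnish the coupling required by \cref{thm:cx-equivalent} and the conclusion follows. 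I would present the first, more elementary, argument as the proof and omit the second.
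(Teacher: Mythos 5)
Your primary argument is correct and is exactly the paper's proof: pass an arbitrary convex $f$ through the affine map, note that $g(x)=f(Ax+b)$ is convex, and apply the hypothesis to $g$. The coupling argument you mention as a backup is also fine, but the paper does not use it, and your decision to present only the elementary version matches the paper's choice.
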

\begin{proof}
Let $f:\R^n\rightarrow\R$ be any convex function. Since $g(x):=f(Ax+b)$ is a composition of convex function $f(x)$ and a linear map, $g(x)$ is also convex. As $X\leq_\mathrm{cx} Y$, we now have 
\[
\E f(AX+b) = \E g(X) \leq \E g(Y) = \E f(AY+b),
\]
 so $AX+b\leq_\mathrm{cx}AY+b$.
\end{proof}
The following results, which will also be useful to us, were proved in Section 2 of \cite{alweiss2021discrepancy}.
\begin{lemma}\label{lemma:cx-sum}
Consider random vectors $X$, $Y$, $W$, and $Z$. Let $X$ and $Y$ live on the same probability space, and let $W$ and $Z$ be independent. Suppose that $X\leq_\mathrm{cx} W$ and $(Y-X)|X \leq_\mathrm{cx} Z$. Then $Y\leq_\mathrm{cx} W+Z$.
\end{lemma}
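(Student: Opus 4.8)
The plan is to verify the defining inequality of the convex order directly: fix an arbitrary convex $f:\R^n\to\R$ (with the relevant expectations finite, as permitted by \cref{def:convex-order}) and show $\E f(Y)\leq \E f(W+Z)$. The idea is to ``peel off'' $Z$ first and then invoke the two hypotheses one at a time. Since $W$ and $Z$ are independent, define $g(w):=\E_Z f(w+Z)$; for each fixed $z$ the map $w\mapsto f(w+z)$ is convex, so $g$ is convex as an average (in $z$) of convex functions, and Fubini gives $\E f(W+Z)=\E_W\E_Z f(W+Z)=\E g(W)$.

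First I would apply the hypothesis $X\leq_\mathrm{cx} W$ to the convex function $g$, obtaining $\E g(X)\leq \E g(W)=\E f(W+Z)$, so it suffices to prove $\E f(Y)\leq \E g(X)$. For this I would condition on $X$: writing $Y=X+(Y-X)$ and, for a fixed value $x$ of $X$, setting $h_x(d):=f(x+d)$ (a convex function of $d$), we have
\[
\E\bigl(f(Y)\,\big|\,X=x\bigr)=\E\bigl(h_x(Y-X)\,\big|\,X=x\bigr)\leq \E h_x(Z)=\E_Z f(x+Z)=g(x),
\]
where the inequality is exactly the hypothesis $(Y-X)\mid X\leq_\mathrm{cx} Z$ applied to the convex function $h_x$ and to the regular conditional distribution of $Y-X$ given $X=x$. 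Taking expectations over $X$ then yields $\E f(Y)=\E_X\E(f(Y)\mid X)\leq \E_X g(X)=\E g(X)\leq \E f(W+Z)$, and since $f$ was an arbitrary convex function this is precisely $Y\leq_\mathrm{cx} W+Z$.

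The individual steps are all short; the points I expect to require the most care — the ``hard part,'' such as it is — are the measure-theoretic bookkeeping: one must check that $g$ is genuinely convex and that every expectation appearing is well defined (covered by the integrability proviso in \cref{def:convex-order}), and one must phrase the second hypothesis correctly as a statement about regular conditional distributions so that it can be invoked pointwise in $x$ against the $x$-dependent convex function $h_x$. An alternative, more constructive route is to build an explicit coupling via \cref{thm:cx-equivalent}: realize $(\hat X,\hat Y)\overset{d}{=}(X,Y)$, attach $\hat W$ satisfying $\E(\hat W\mid\hat X)=\hat X$ and conditionally independent of $\hat Y$ given $\hat X$, and attach $\hat Z$ satisfying $\E(\hat Z\mid\hat X,\hat Y)=\hat Y-\hat X$ and conditionally independent of $\hat W$; one then checks $\hat W+\hat Z\overset{d}{=}W+Z$ and $\E(\hat W+\hat Z\mid\hat Y)=\hat Y$, which gives the claim by \cref{thm:cx-equivalent}. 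This is close in spirit to the argument in \cite{alweiss2021discrepancy}, but it requires a measurable selection for the kernel producing $\hat Z$, so I would favour the direct argument above.
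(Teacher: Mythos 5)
Your argument is correct. The paper itself does not prove \cref{lemma:cx-sum} but cites it to Section 2 of \cite{alweiss2021discrepancy}, and your direct verification --- writing $\E f(W+Z)=\E g(W)$ with $g(w)=\E_Z f(w+Z)$ convex, using $X\leq_\mathrm{cx}W$ on $g$, and then conditioning on $X$ to apply $(Y-X)\mid X\leq_\mathrm{cx}Z$ to the shifted convex function $h_x$ --- is essentially the same conditioning argument used there, with the measure-theoretic caveats (regular conditional distributions, integrability) appropriately flagged.
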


\begin{lemma}\label{lemma:cx-bounded}
Let $X$ be a real-valued random variable with $\E X = 0$ and $|X|\leq C$. Then $X\leq_\mathrm{cx}\mathcal{N}\bigl(0, \frac{\pi C^2}{2}\bigr)$.
\end{lemma}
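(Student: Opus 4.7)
The plan is to establish the bound via two convex-order comparisons chained by transitivity: first dominate $X$ by a scaled Rademacher variable $C\epsilon$, then dominate $C\epsilon$ by the Gaussian $\mathcal{N}(0,\pi C^2/2)$. This reduces the problem to two elementary steps, one of which uses pointwise convexity and the other uses the coupling characterization in \cref{thm:cx-equivalent}.

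For the first step, I would exploit the fact that, since $|X|\leq C$, the random variable $X$ lies in the segment $[-C,C]$, and can be written as the explicit convex combination
\[
X = \frac{C+X}{2C}\,(C) + \frac{C-X}{2C}\,(-C).
\]
For any convex $f:\R\to\R$, convexity then gives $f(X)\leq \frac{C+X}{2C} f(C) + \frac{C-X}{2C} f(-C)$ pointwise. Taking expectations and using $\E X = 0$ collapses the bound to $\E f(X)\leq \tfrac{1}{2}f(C)+\tfrac{1}{2}f(-C) = \E f(C\epsilon)$, where $\epsilon$ is a symmetric Rademacher variable. Thus $X\leq_\mathrm{cx} C\epsilon$.

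For the second step, I would use \cref{thm:cx-equivalent} with an explicit coupling. Let $Z\sim\mathcal{N}(0,\pi C^2/2)$ and set $\hat{Y}:=Z$ and $\hat{X}:=C\,\mathrm{sign}(Z)$, both on the same probability space. By symmetry $\hat{X}$ has the same law as $C\epsilon$, so it remains to verify $\E(\hat{Y}\mid\hat{X})=\hat{X}$. This reduces to the folded-normal computation
\[
\E(Z\mid Z>0) = \sqrt{\tfrac{2}{\pi}}\cdot\sqrt{\tfrac{\pi C^2}{2}} = C,
\]
with the corresponding sign-flipped identity on $\{Z<0\}$, so $\E(Z\mid\mathrm{sign}(Z)) = C\,\mathrm{sign}(Z) = \hat{X}$. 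Invoking \cref{thm:cx-equivalent} yields $C\epsilon\leq_\mathrm{cx} \mathcal{N}(0,\pi C^2/2)$. Combining the two comparisons by transitivity of $\leq_\mathrm{cx}$ (noted just after \cref{def:convex-order}) gives the conclusion.

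The only delicate point is the choice of variance: the factor $\pi/2$ is dictated precisely by requiring $\E|Z|=C$ so that the conditional-mean identity holds and the coupling goes through. Everything else is routine; in particular, no integrability assumption on $f$ beyond the implicit one in \cref{def:convex-order} is needed, since the $C\epsilon$ intermediate step has all moments.
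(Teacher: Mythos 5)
Your proof is correct. The paper does not actually reproduce a proof of this lemma --- it defers to Section 2 of the cited Alweiss--Liu--Sawhney paper --- but your two-step argument (dominate $X$ by the two-point distribution $C\epsilon$ via pointwise convexity plus $\E X=0$, then couple $C\epsilon$ to the Gaussian through $\E(Z\mid \mathrm{sign}(Z))=\mathrm{sign}(Z)\,\E|Z|=C\,\mathrm{sign}(Z)$, which is exactly what fixes the variance at $\pi C^2/2$) is essentially the same as the argument in that reference, and all the computations check out.
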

Applying \cref{lemma:cx-sum} inductively, one can show that the convex order is closed under convolutions.
\begin{lemma}\label{lemma:cx-independent-sum}
Let $X_1,X_2,\ldots,X_m$ be a set of independent random vectors and let $Y_1,Y_2,\ldots,Y_m$ be another set of independent random vectors. If $X_i\leq_\mathrm{cx}Y_i$ for $1\leq i\leq m$, then
\begin{equation}\label{eq:lemma-cx-independent-sum-eq}
\sum_{i=1}^m X_i \leq_\mathrm{cx} \sum_{i=1}^m Y_i.
\end{equation}
\end{lemma}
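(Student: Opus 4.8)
The plan is to prove \cref{lemma:cx-independent-sum} by induction on $m$, with \cref{lemma:cx-sum} supplying the entire inductive step. The base case $m=1$ is just the hypothesis $X_1\leq_\mathrm{cx} Y_1$. For $m\geq 2$, assume the statement for $m-1$, and write $S_{m-1}:=\sum_{i=1}^{m-1}X_i$ and $T_{m-1}:=\sum_{i=1}^{m-1}Y_i$, so that the induction hypothesis reads $S_{m-1}\leq_\mathrm{cx} T_{m-1}$. We want to upgrade this to $S_{m-1}+X_m\leq_\mathrm{cx} T_{m-1}+Y_m$.

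To do so, I would apply \cref{lemma:cx-sum} with the identifications $X:=S_{m-1}$, $Y:=S_{m-1}+X_m=\sum_{i=1}^m X_i$, $W:=T_{m-1}$, and $Z:=Y_m$. The pair $X,Y$ lives on the same probability space (the one carrying $X_1,\dots,X_m$), as required, and $W=T_{m-1}$ is independent of $Z=Y_m$ since $Y_1,\dots,Y_m$ are independent. The first hypothesis of \cref{lemma:cx-sum}, namely $X\leq_\mathrm{cx} W$, is exactly the induction hypothesis $S_{m-1}\leq_\mathrm{cx} T_{m-1}$. For the second hypothesis, observe that $Y-X=X_m$; because $X_m$ is independent of $(X_1,\dots,X_{m-1})$ and hence of $S_{m-1}=X$, the conditional distribution of $Y-X$ given $X$ coincides with the unconditional law of $X_m$. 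Since the convex order depends only on distributions, $(Y-X)\mid X\overset{d}{=}X_m\leq_\mathrm{cx} Y_m=Z$. \cref{lemma:cx-sum} then gives
\[
\sum_{i=1}^m X_i = Y \leq_\mathrm{cx} W+Z = T_{m-1}+Y_m = \sum_{i=1}^m Y_i,
\]
which closes the induction and establishes \eqref{eq:lemma-cx-independent-sum-eq}.

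The proof involves no estimates or computations; it is a purely formal consequence of \cref{lemma:cx-sum}. The only points demanding care are bookkeeping ones: verifying that the four objects fed into \cref{lemma:cx-sum} meet its precise structural requirements (the same-probability-space condition on $X,Y$, and the independence of $W,Z$), and using the independence of the $X_i$ to reduce the conditional convex-order condition $(Y-X)\mid X\leq_\mathrm{cx} Z$ to the given unconditional relation $X_m\leq_\mathrm{cx} Y_m$. There is thus no substantive obstacle beyond arranging the induction so that \cref{lemma:cx-sum} applies cleanly at each step.
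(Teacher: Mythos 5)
Your proof is correct and follows essentially the same route as the paper: induction on $m$ with \cref{lemma:cx-sum} supplying the inductive step. The only difference is cosmetic — the paper assigns $X=X_m$, $W=Y_m$, $Z=\sum_{i=1}^{m-1}Y_i$ (so the induction hypothesis enters through the conditional convex-order condition), whereas you make the symmetric choice $X=\sum_{i=1}^{m-1}X_i$, $W=\sum_{i=1}^{m-1}Y_i$, $Z=Y_m$; both verifications go through identically via independence.
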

\begin{proof}
We will prove \eqref{eq:lemma-cx-independent-sum-eq} by induction on $m$. The case $m=1$ is trivial. Assume that the lemma holds for $m-1$ with $m\geq 2$, and let us prove it for $m$. Applying \cref{lemma:cx-sum} for $X=X_m$, $Y=\sum_{i=1}^m X_i$, $W=Y_m$, and $Z=\sum_{i=1}^{m-1}Y_i$, inequality \eqref{eq:lemma-cx-independent-sum-eq} follows. 
\end{proof}

\section{Useful Lemmata}
\subsection{Concentration inequalities}
The following two lemmata are essential for the approximation of quantization error bounds. The proof techniques follow \citep{alweiss2021discrepancy}.
\begin{lemma}\label{lemma:bound-covar}
Let $\alpha>0$ and $z_1, z_2, \ldots, z_d\in\R^m$ be nonzero vectors. Let $M_0=0$. For $1\leq t\leq d$, define $M_t\in \R^{m\times m}$ inductively as
\[
M_t := P_{z_t^\perp} M_{t-1} P_{z_t^\perp} + \alpha z_t z_t^\top 
\]
where $P_{z_t^\perp}=I-\frac{z_t z_t^\top}{\|z_t\|_2^2}$ is the orthogonal projection as in \eqref{eq:orth-proj-mat}. Then
\begin{equation}\label{eq:lemma-bound-covar-eq1}
    M_t\preceq \beta_t I
\end{equation}
holds for all $t$, where $\beta_t:=\alpha \max_{1\leq j\leq t} \|z_j\|_2^2$.
\end{lemma}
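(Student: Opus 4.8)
The plan is a straightforward induction on $t$ in the Loewner order, the one subtlety being to combine the two terms of the recursion so as not to lose a factor of $2$. I will use three elementary facts about the PSD order: (i) for any vector $v$, $vv^\top \preceq \|v\|_2^2 I$ (Cauchy--Schwarz); (ii) if $A\preceq B$ then $CAC^\top \preceq CBC^\top$ for any matrix $C$; and (iii) $\preceq$ is transitive and respects addition. The key structural observation is that $P_{z_t^\perp} M_{t-1} P_{z_t^\perp}$ has range contained in $z_t^\perp$ (it vanishes on $\mathrm{span}(z_t)$), while $\alpha z_t z_t^\top = \alpha\|z_t\|_2^2 P_{z_t}$ vanishes on $z_t^\perp$; since $P_{z_t^\perp}$ and $P_{z_t}:=I-P_{z_t^\perp}$ are complementary orthogonal projections summing to $I$, bounding each term by a multiple of the respective projection and taking the common multiple $\beta_t$ recombines to exactly $\beta_t I$.

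For the base case $t=1$, since $M_0=0$ we have $M_1=\alpha z_1 z_1^\top \preceq \alpha\|z_1\|_2^2 I = \beta_1 I$ by (i). For the inductive step, suppose $M_{t-1}\preceq \beta_{t-1} I$. Conjugating by the symmetric matrix $P_{z_t^\perp}$ and using (ii) together with idempotence $P_{z_t^\perp}^2 = P_{z_t^\perp}$ gives $P_{z_t^\perp} M_{t-1} P_{z_t^\perp} \preceq \beta_{t-1} P_{z_t^\perp}$. Writing $P_{z_t} = I - P_{z_t^\perp} = z_t z_t^\top/\|z_t\|_2^2$, we also have $\alpha z_t z_t^\top = \alpha\|z_t\|_2^2 P_{z_t}$, so
\[
M_t = P_{z_t^\perp} M_{t-1} P_{z_t^\perp} + \alpha z_t z_t^\top \preceq \beta_{t-1} P_{z_t^\perp} + \alpha\|z_t\|_2^2 P_{z_t}.
\]
Since $\beta_{t-1}\le \beta_t$ and $\alpha\|z_t\|_2^2\le \beta_t$ by the definition $\beta_t=\alpha\max_{1\le j\le t}\|z_j\|_2^2$, and since $P_{z_t^\perp},P_{z_t}\succeq 0$, the right-hand side is $\preceq \beta_t P_{z_t^\perp}+\beta_t P_{z_t} = \beta_t(P_{z_t^\perp}+P_{z_t}) = \beta_t I$. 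By transitivity, $M_t\preceq \beta_t I$, completing the induction.

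I do not expect any real obstacle here; the argument is two lines of induction. The only place where care is needed — and the reason the stated bound is $\beta_t I$ rather than $2\beta_t I$ — is the final recombination step: a lazy estimate would replace both $P_{z_t^\perp}$ and $P_{z_t}$ by $I$ and double the constant, whereas keeping the two projections intact and using $P_{z_t^\perp}+P_{z_t}=I$ yields the sharp constant that is later needed (with $\alpha=\pi\delta^2/2$) in \cref{lemma:error-bound-step2-infinite}.
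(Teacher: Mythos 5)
Your proof is correct and follows essentially the same induction as the paper's: both bound $P_{z_t^\perp}M_{t-1}P_{z_t^\perp}\preceq \beta_{t-1}P_{z_t^\perp}\preceq\beta_t P_{z_t^\perp}$ and then absorb $\alpha z_t z_t^\top=\alpha\|z_t\|_2^2 P_{z_t}$ using $P_{z_t^\perp}+P_{z_t}=I$ (the paper phrases this last step as adding a nonpositive multiple of $z_tz_t^\top/\|z_t\|_2^2$ to $\beta_t I$, which is the same computation). No gaps.
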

\begin{proof}
We proceed by induction on $t$. If $t=1$, then $M_1=\alpha z_1 z_1^\top$. By Cauchy-Schwarz inequality, for any $x\in\R^m$, we get 
\[
x^\top M_1 x = \alpha \langle z_1, x\rangle^2 \leq \alpha \|z_1\|_2^2 \|x\|_2^2 = \beta_1 \|x\|_2^2= x^\top (\beta_1 I) x.
\]
It follows that $M_1\preceq \beta_1 I$. Now, assume that \eqref{eq:lemma-bound-covar-eq1} holds for $t-1$ with $t\geq 2$. Then we have
\begin{align*}
    M_t &=P_{z_t^\perp} M_{t-1} P_{z_t^\perp} + \alpha z_t z_t^\top \\
    &\preceq \beta_{t-1} P_{z_t^\perp}^2 + \alpha z_t z_t^\top &(\text{by assumption}\, M_{t-1}\preceq \beta_{t-1} I) \\
    &\preceq \beta_t P_{z_t^\perp} + \alpha z_t z_t^\top &(\text{since} \, P_{z_t^\perp}^2=P_{z_t^\perp} \,\text{and}\, \beta_{t-1}\leq \beta_t) \\
    &= \beta_t I + (\alpha\|z_t\|_2^2 - \beta_t) \frac{z_t z_t^\top}{\|z_t\|_2^2} &(\text{using \eqref{eq:orth-proj-mat}})\\
    &\preceq \beta_t I &(\text{as}\, \beta_t =\alpha \max_{1\leq j\leq t}\|z_j\|_2^2).
\end{align*}
This completes the proof.
\end{proof}

\begin{lemma}\label{lemma:cx-gaussian-tail}
Let $X$ be an $n$-dimensional random vector such that $X\leq_\mathrm{cx}\mathcal{N}(\mu, \sigma^2 I)$, and let $\alpha>0$. Then 
\[
\prob\biggl(\|X-\mu\|_\infty \leq \alpha \biggr) \geq 1- \sqrt{2} n e^{-\frac{\alpha^2}{4\sigma^2}}.
\]
In particular, if $\alpha=2\sigma\sqrt{\log(\sqrt{2}n/\gamma)}$ with $\gamma\in(0,1]$, we have 
\[
P\left(\|X-\mu\|_\infty \leq 2\sigma\sqrt{\log(\sqrt{2}n/\gamma)}\right)\geq 1-\gamma.
\]
\end{lemma}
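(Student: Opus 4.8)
The plan is to reduce to a one-dimensional sub-Gaussian tail estimate by a union bound over coordinates, and to extract that one-dimensional estimate from the convex order by feeding it a single well-chosen test function, namely an exponential of a square.

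First I would note that $\{\|X-\mu\|_\infty > \alpha\} = \bigcup_{i=1}^n \{|X_i-\mu_i| > \alpha\}$, so by the union bound it suffices to prove $\prob(|X_i-\mu_i| > \alpha) \leq \sqrt{2}\,e^{-\alpha^2/(4\sigma^2)}$ for each fixed coordinate $i$. Fix $t := 1/(4\sigma^2) > 0$ and observe that $x \mapsto e^{t(x_i-\mu_i)^2}$ is convex on $\R^n$, being the composition of the convex function $u \mapsto e^{tu^2}$ (its second derivative is $2t(1+2tu^2)e^{tu^2} > 0$) with the affine map $x \mapsto x_i-\mu_i$. Hence, writing $Y \sim \mathcal{N}(\mu,\sigma^2 I)$, the hypothesis $X \leq_\mathrm{cx} Y$ and \cref{def:convex-order} give $\E\, e^{t(X_i-\mu_i)^2} \leq \E\, e^{t(Y_i-\mu_i)^2}$; since $Y_i-\mu_i \sim \mathcal{N}(0,\sigma^2)$, a routine Gaussian integral evaluates the right-hand side to $(1-2t\sigma^2)^{-1/2} = \sqrt{2}$ for our choice of $t$.

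Next I would apply Markov's inequality to the nonnegative random variable $e^{t(X_i-\mu_i)^2}$: for any $\alpha > 0$,
\[
\prob(|X_i-\mu_i| > \alpha) = \prob\bigl(e^{t(X_i-\mu_i)^2} > e^{t\alpha^2}\bigr) \leq e^{-t\alpha^2}\,\E\, e^{t(X_i-\mu_i)^2} \leq \sqrt{2}\,e^{-\alpha^2/(4\sigma^2)}.
\]
Summing over $i=1,\dots,n$ gives the first claimed bound. For the ``in particular'' statement, substitute $\alpha = 2\sigma\sqrt{\log(\sqrt{2}n/\gamma)}$ (which is a positive real number because $\sqrt{2}n/\gamma > 1$ when $\gamma \in (0,1]$), so that $\alpha^2/(4\sigma^2) = \log(\sqrt{2}n/\gamma)$ and therefore $\sqrt{2}n\,e^{-\alpha^2/(4\sigma^2)} = \gamma$; taking complements finishes the proof.

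There is no serious obstacle here; the only point that requires care is the choice of test function. One could instead plug the exponential moment function $x \mapsto e^{\lambda(x_i-\mu_i)}$ into the convex order and optimize over $\lambda$, which yields the sharper but $\alpha$-dependent constant, $2n\,e^{-\alpha^2/(2\sigma^2)}$. The squared-exponential choice with the fixed parameter $t = 1/(4\sigma^2)$ is what produces the uniform bound with the clean prefactor $\sqrt{2}$ and the exact form of $\alpha$ that is used downstream in \cref{lemma:error-bound-step2-infinite}.
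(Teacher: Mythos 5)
Your proposal is correct and follows essentially the same route as the paper: both feed the convex test function $u\mapsto e^{u^2/(4\sigma^2)}$ into the convex order to get the moment bound $\sqrt{2}$, apply Markov's inequality, and finish with a union bound over coordinates (the paper merely phrases the coordinate reduction via $\langle X-\mu, x\rangle$ for unit vectors $x$ specialized to the standard basis). Your closing remark about the MGF alternative is a nice observation but does not change the substance.
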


\begin{proof}
Let $x\in\R^n$ with $\|x\|_2\leq 1$. Since $X\leq_\mathrm{cx}\mathcal{N}(\mu, \sigma^2 I)$, by \cref{lemma:cx-normal} and \cref{lemma:cx-afine}, we get 
\[
\frac{\langle X-\mu, x \rangle}{\sigma} \leq_\mathrm{cx} \mathcal{N}(0,  \|x\|_2^2)\leq_\mathrm{cx} \mathcal{N}(0, 1).
\]
Then we have 
\[
\E e^{\frac{\langle X-\mu, x\rangle^2}{4\sigma^2}} \leq \E_{Z\sim\mathcal{N}(0,1)} e^{Z^2/4} = \sqrt{2}.
\]
where we used \cref{def:convex-order} on the convex function $f(x)=e^{x^2/4}$.  By Markov's inequality and the inequality above, we conclude that
\begin{align*}
\prob(|\langle X-\mu, x\rangle|\geq \alpha) 
&= \prob\Bigl( e^{\frac{\langle X-\mu, x\rangle^2}{4\sigma^2}}\geq e^\frac{\alpha^2}{4\sigma^2}\Bigr) \\
&\leq e^{-\frac{\alpha^2}{4\sigma^2}}\E e^{\frac{\langle X-\mu, x\rangle^2}{4\sigma^2}}  \\
&\leq \sqrt{2} e^{-\frac{\alpha^2}{4\sigma^2}}.
\end{align*}
Finally, by a union bound over the standard basis vectors $x=e_1, e_2,\ldots, e_n$, we have 
\[
\prob\biggl(\|X-\mu\|_\infty \leq \alpha \biggr) \geq 1- \sqrt{2} n e^{-\frac{\alpha^2}{4\sigma^2}}.
\]
\end{proof}

\Cref{lemma:orth-proj-norm} below will be used to illustrate the behavior of the norm of the successive projection operator appearing in \Cref{corollary:error-bound-infinite} in the case of random inputs. 
\begin{lemma}\label{lemma:orth-proj-norm}
Let $X_1, X_2, \ldots, X_N$ be i.i.d. random vectors drawn from $\mathcal{N}(0, I_m)$. Let $N\geq 10$ and $P:=P_{X_N^{\perp}}\ldots  P_{X_2^{\perp}}P_{X_1^{\perp}}\in\R^{m\times m}$. Then 
\begin{equation}\label{eq:lemma-orth-proj-norm-eq1}
\prob\Bigl(\|P\|_2^2 \leq 4\bigl(1- \frac{c}{m}\bigr)^{ \lfloor \frac{N}{5} \rfloor} \Bigr)  \geq 1 - 5^m e^{-\frac{N}{5}}
\end{equation}
where $c>0$ is an absolute constant.
\end{lemma}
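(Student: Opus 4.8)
The plan is to reduce the operator-norm estimate to a statement about $\|Pv\|_2$ for a single fixed unit vector $v$ via an $\varepsilon$-net, and then to track how $\|Pv\|_2^2$ decays along the composition of projections. Fix $v\in S^{m-1}$, set $v_0:=v$ and $v_j:=P_{X_j^\perp}v_{j-1}$ for $1\le j\le N$, so that $Pv=v_N$. (If $v_{j-1}=0$ for some $j$ then $v_N=0$ and the bound is trivial, so we may assume $v_j\neq 0$ throughout.) By the Pythagorean identity in \eqref{eq:orth-proj},
\[
\|v_j\|_2^2=\|v_{j-1}\|_2^2\,(1-B_j),\qquad B_j:=\frac{\langle X_j,v_{j-1}\rangle^2}{\|X_j\|_2^2\,\|v_{j-1}\|_2^2}\in[0,1],
\]
so the sequence $(\|v_j\|_2^2)_{j\ge 0}$ is non-increasing and $\|Pv\|_2^2=\|v_N\|_2^2=\prod_{j=1}^N(1-B_j)$.

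First I would identify the conditional law of $B_j$. Writing $\mathcal F_{j-1}:=\sigma(X_1,\dots,X_{j-1})$, the vector $v_{j-1}$ is $\mathcal F_{j-1}$-measurable while $X_j\sim\mathcal N(0,I_m)$ is independent of $\mathcal F_{j-1}$; hence, by rotational invariance of the Gaussian, conditionally on $\mathcal F_{j-1}$ the variable $B_j$ has the distribution of $X_{j,1}^2/\|X_j\|_2^2$. Applying Markov's inequality to $\sum_{i\ge 2}X_{j,i}^2$ (whose mean is $m-1$) together with a one-dimensional anti-concentration estimate for $X_{j,1}$, I would show that there exist absolute constants $c>0$ and $p_0\in(0,1)$, with $p_0$ \emph{independent of $m$} and moreover as close to $1$ as we wish (at the cost of shrinking $c$), such that $\prob(B_j\ge c/m\mid\mathcal F_{j-1})\ge p_0$ for every $j$ and every $m\ge 1$. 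Setting $Y_j:=\mathbbm 1[B_j\ge c/m]$, we have $1-B_j\le 1-c/m$ on $\{Y_j=1\}$ and $1-B_j\le 1$ otherwise, so, since $\|v_0\|_2=1$,
\[
\|Pv\|_2^2\le\Bigl(1-\tfrac{c}{m}\Bigr)^{\sum_{j=1}^N Y_j}\qquad\text{deterministically.}
\]

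It then remains to bound $\sum_{j=1}^N Y_j$ from below. Since $Y_j\in\{0,1\}$ and $\E[Y_j\mid\mathcal F_{j-1}]\ge p_0$ with $p_0$ taken close enough to $1$, a martingale (lower-tail Chernoff) bound gives $\prob\bigl(\sum_{j=1}^N Y_j<\lfloor N/5\rfloor\bigr)\le e^{-N/5}$. As $k\mapsto(1-c/m)^k$ is decreasing, on the complementary event $\|Pv\|_2^2\le(1-c/m)^{\lfloor N/5\rfloor}$; thus for each fixed unit $v$,
\[
\prob\Bigl(\|Pv\|_2^2>\bigl(1-\tfrac{c}{m}\bigr)^{\lfloor N/5\rfloor}\Bigr)\le e^{-N/5}.
\]
Finally, take a $\tfrac12$-net $\mathcal N$ of $S^{m-1}$ with $|\mathcal N|\le 5^m$; the standard net estimate gives $\|P\|_2\le 2\max_{v\in\mathcal N}\|Pv\|_2$, hence $\|P\|_2^2\le 4\max_{v\in\mathcal N}\|Pv\|_2^2$, and a union bound over $\mathcal N$ combined with the previous display yields $\prob\bigl(\|P\|_2^2\le 4(1-c/m)^{\lfloor N/5\rfloor}\bigr)\ge 1-5^m e^{-N/5}$, which is \eqref{eq:lemma-orth-proj-norm-eq1}.

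The step requiring the most care is the uniform-in-$m$ lower bound $\prob(B_j\ge c/m\mid\mathcal F_{j-1})\ge p_0$. A naive route — bounding moments of $\|v_N\|_2^2$, whose conditional mean merely gets multiplied by $1-1/m$ at each step so that $\E\|v_N\|_2^2=(1-1/m)^N$ — only produces tails polynomial in the threshold, which cannot survive the union bound over the $5^m$ net points. What rescues the argument is that each projection strips a $\Theta(1/m)$ fraction of the squared norm with probability bounded below by an absolute constant, and that the squared norms are monotone non-increasing; together these upgrade the estimate to the required exponential-in-$N$ concentration. Reproducing the exact constants ``$5$'' and ``$5^m$'' in the statement is then just a matter of committing to a $\tfrac12$-net and choosing $c$ small enough that the Chernoff exponent clears $N/5$.
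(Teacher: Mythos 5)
Your proposal is correct and follows essentially the same route as the paper: a $\tfrac12$-net of size $5^m$, a per-step bound showing each projection removes a $\Theta(1/m)$ fraction of the squared norm with probability bounded below by an absolute constant (via anti-concentration of one Gaussian coordinate plus control of $\|X_j\|_2^2$), and a lower-tail bound on the number of "successful" steps. The only difference is cosmetic: you handle the sequential dependence explicitly via conditional success probabilities and a martingale Chernoff bound, whereas the paper invokes independence of the $X_j$ and bounds the binomial lower tail $\sum_{k\le\lfloor N/5\rfloor}\binom{N}{k}\tau^k(1-\tau)^{N-k}$ directly — your version is, if anything, the more carefully justified of the two.
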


\begin{proof}
This proof is based on an $\epsilon$-net argument. By the definition of $\|P\|_2$, we need to bound $\|Pz\|_2$ for all vectors $z\in\mathbb{S}^{m-1}$. To this end, we will cover the unit sphere using small balls with radius $\epsilon$, establish tight control of $\|Pz\|_2$ for every fixed vector $z$ from the net, and finally take a union bound over all vectors in the net. 

We first set up an $\epsilon$-net. Choosing $\epsilon=\frac{1}{2}$, according to Corollary 4.2.13 in \citep{vershynin2018high}, we can find an $\epsilon$-net $\mathcal{D}\subseteq\mathbb{S}^{m-1}$ such that 
\begin{equation}\label{eq:lemma-orth-proj-norm-eq2}
\mathbb{S}^{m-1} \subseteq \bigcup_{z\in\mathcal{D}} B(z, \epsilon) \quad \text{and} \quad |\mathcal{D}| \leq \Bigl(1 + \frac{2}{\epsilon} \Bigr)^m = 5^m.
\end{equation}
Here, $B(z,\epsilon)$ represents the closed ball centered at $z$ and with radius $\epsilon$, and $|\mathcal{D}|$ is the cardinality of $\mathcal{D}$. Moreover, we have (see Lemma 4.4.1 in \citep{vershynin2018high})
\begin{equation}\label{eq:lemma-orth-proj-norm-eq2b}
\|P\|_2 \leq \frac{1}{1-\epsilon} \max_{z\in\mathcal{D}} \|Pz\|_2 = 2\max_{z\in\mathcal{D}} \|Pz\|_2. 
\end{equation}
Next, let $\beta\geq 1$, $\gamma>0$, and $z\in\mathbb{S}^{m-1}$. Applying \eqref{eq:orth-proj} and setting $\xi\sim\mathcal{N}(0, I_m)$, for $1\leq j\leq N$, we obtain
\begin{align*}
\prob\Bigl(\|P_{X_j^{\perp}} (z) \|_2^2 \geq 1-\gamma\Bigr) &= \prob\Bigl(\|P_{X_j} (z) \|_2^2 \leq \gamma\Bigr) \\
&= \prob\Bigl(\Bigl\langle \frac{X_j}{\|X_j\|_2}, z \Bigr\rangle^2 \leq \gamma\Bigr) \\
&= \prob\Bigl(\Bigl\langle \frac{\xi}{\|\xi\|_2}, z \Bigr\rangle^2 \leq \gamma \Bigr).
\end{align*}
By rotation invariance of the normal distribution, we may assume without loss of generality that $z=e_1:=(1, 0, \ldots, 0)\in \R^m$. It follows that 
\begin{align}\label{eq:lemma-orth-proj-norm-eq3}
\prob\Bigl(\|P_{X_j^{\perp}} (z) \|_2^2 \geq 1-\gamma\Bigr) &= \prob\Bigl( \frac{\xi_1^2}{\|\xi\|_2^2} \leq \gamma \Bigr) \nonumber \\
&= \prob \Bigl(\frac{\xi_1^2}{\|\xi\|_2^2} \leq \gamma, \, \|\xi\|_2^2 \leq \beta m  \Bigr) + \prob \Bigl( \frac{\xi_1^2}{\|\xi\|_2^2} \leq \gamma, \, \|\xi\|_2^2 > \beta m   \Bigr)  \nonumber \\
&\leq \prob ( \xi_1^2 \leq \beta \gamma m ) + \prob ( \|\xi\|_2^2 \geq \beta m ) \nonumber \\
&\leq \sqrt{\frac{2\beta\gamma m}{\pi}} + 2 \exp(-c^\prime m (\sqrt{\beta}-1)^2).
\end{align}
In the last step, we controlled the probability via
\[
\prob ( \xi_1^2 \leq \beta \gamma m ) = \int_{-\sqrt{\beta\gamma m}}^{\sqrt{\beta\gamma m}} \frac{1}{\sqrt{2\pi}} e^{-\frac{1}{2}x^2} \, dx \leq \frac{1}{\sqrt{2\pi}} \int_{-\sqrt{\beta\gamma m}}^{\sqrt{\beta\gamma m}} 1 \, dx = \sqrt{\frac{2\beta\gamma m}{\pi}},
\]
and used the concentration of the norm (see Theorem 3.1.1 in \citep{vershynin2018high}):
\[
\prob ( \|\xi\|_2^2 \geq \beta m ) \leq 2 \exp(-c^\prime m (\sqrt{\beta}-1)^2), \quad \beta \geq 1,
\]
where $c^\prime >0$ is an absolute constant. In \eqref{eq:lemma-orth-proj-norm-eq3}, picking $\beta= (\sqrt{\frac{3}{c^\prime}}+1)^2$ and $\gamma=\frac{1}{12\beta m}=\frac{c}{m}$ with $c:=\frac{1}{12}(\sqrt{\frac{3}{c^\prime}}+1)^{-2}$, we have that
\begin{equation}\label{eq:lemma-orth-proj-norm-eq4}
\tau := \prob\Bigl(\|P_{X_j^{\perp}} (z) \|_2^2 \leq 1- \frac{c}{m}\Bigr) \geq 1 - \sqrt{\frac{1}{6\pi}} - 2e^{-3m} \geq  1 - \sqrt{\frac{1}{6\pi}} - 2e^{-3} \geq \frac{2}{3}
\end{equation}
holds for all $1\leq j\leq N$ and $z\in\mathbb{S}^{m-1}$. So each orthogonal projection $P_{X_j^\perp}$ can reduce the squared norm of a vector to at most $1-\frac{c}{m}$ ratio with probability $\tau$. Fix $z\in\mathcal{D}$. Since $X_1, X_2, \ldots, X_n$ are independent, we have 
\begin{align}\label{eq:lemma-orth-proj-norm-eq5}
\prob\Bigl( \|Pz\|_2^2 \geq \Bigl( 1 - \frac{c}{m}\Bigr)^{\lfloor \frac{N}{5}\rfloor} \Bigr) &\leq \sum_{k=0}^{\lfloor \frac{N}{5}\rfloor} \binom{N}{k} \tau^k (1-\tau)^{N-k} \nonumber \\
&\leq \sum_{k=0}^{\lfloor \frac{N}{5}\rfloor} \binom{N}{k}  (1-\tau)^{N-k} &(\text{since}\, \tau\leq 1) \nonumber \\
&\leq (1-\tau)^{N-\lfloor \frac{N}{5}\rfloor} \sum_{k=0}^{\lfloor \frac{N}{5}\rfloor} \binom{N}{k} \nonumber \\
&\leq \Bigl(\frac{1}{3} \Bigr)^{N-\lfloor \frac{N}{5}\rfloor} \sum_{k=0}^{\lfloor \frac{N}{5}\rfloor} \binom{N}{k} &(\text{by}\, \eqref{eq:lemma-orth-proj-norm-eq4}) \nonumber \\
&\leq \Bigl(\frac{1}{3} \Bigr)^{N-\lfloor \frac{N}{5}\rfloor}  \Bigl(\frac{eN}{\lfloor \frac{N}{5}\rfloor} \Bigr)^{\lfloor \frac{N}{5}\rfloor} &(\text{due to}\, \sum_{k=0}^l \binom{n}{k} \leq \Bigl(\frac{en}{l}\Bigr)^l). 
\end{align}
Since $\frac{N}{5}-1 < \lfloor \frac{N}{5}\rfloor \leq \frac{N}{5}$ and $N\geq 10$, we have 
\[
\Bigl(\frac{1}{3} \Bigr)^{N-\lfloor \frac{N}{5}\rfloor}  \Bigl(\frac{eN}{\lfloor \frac{N}{5}\rfloor} \Bigr)^{\lfloor \frac{N}{5}\rfloor} \leq \Bigl(\frac{1}{3} \Bigr)^{\frac{4N}{5}}  \Bigl(\frac{eN}{\frac{N}{5}-1} \Bigr)^{ \frac{N}{5}} =   \Bigl(\frac{1}{81}\cdot \frac{5e}{1-\frac{5}{N}} \Bigr)^{ \frac{N}{5}} \leq \Bigl( \frac{10e}{81} \Bigr)^{\frac{N}{5}} \leq e^{-\frac{N}{5}}.
\]
Plugging this into \eqref{eq:lemma-orth-proj-norm-eq5}, we deduce that
\[
\prob\Bigl( \|Pz\|_2^2 \leq \Bigl( 1 - \frac{c}{m}\Bigr)^{\lfloor \frac{N}{5}\rfloor} \Bigr) \geq 1 -  e^{-\frac{N}{5}}.
\]
holds for all $z\in\mathcal{D}$. By a union bound over $|\mathcal{D}|\leq 5^m$ points, we obtain 
\begin{equation}\label{eq:lemma-orth-proj-norm-eq6}
\prob\Bigl( \max_{z\in\mathcal{D}} \|Pz\|_2^2 \leq \Bigl( 1 - \frac{c}{m}\Bigr)^{\lfloor \frac{N}{5}\rfloor} \Bigr) \geq 1 - 5^m e^{-\frac{N}{5}}.
\end{equation}
Then \eqref{eq:lemma-orth-proj-norm-eq1} follows immediately from \eqref{eq:lemma-orth-proj-norm-eq2b} and \eqref{eq:lemma-orth-proj-norm-eq6}.
\end{proof}

Moreover, we present the following result on concentration of (Gaussian) measure inequality for Lipschitz functions, which will be used in the proofs to control the effect of the non-linear function $\rho$ that appears in the neural network. 
\begin{lemma}\label{lemma:Lip-concentration}
Consider an $n$-dimensional random vector $X\sim\mathcal{N}(0, I)$ and a Lipschitz function $f:\R^n\rightarrow\R$ with Lipschitz constant $L_f>0$, that is $|f(x)-f(y)|\leq L_f\|x-y\|_2$ for all $x,y\in\R^n$. Then, for all $\alpha\geq 0$,
\[
\prob(|f(X)-\E f(X)|\geq \alpha) \leq 2\exp\biggl( -\frac{\alpha^2}{2L_f^2} \biggr). 
\]
\end{lemma}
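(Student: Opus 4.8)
The plan is to reduce to the unit-Lipschitz, mean-zero case and then run the classical Herbst argument on top of the Gaussian logarithmic Sobolev inequality. First I would normalize: replacing $f$ by $f/L_f$ we may assume $L_f=1$, and replacing $f$ by $f-\E f(X)$ we may assume $\E f(X)=0$. It then suffices to prove the one-sided estimate $\prob(f(X)\geq \alpha)\leq e^{-\alpha^2/2}$ for every $\alpha\geq 0$, since applying this also to $-f$ (which is $1$-Lipschitz with mean zero) and taking a union bound produces the two-sided bound with the factor $2$; undoing the rescaling then replaces $\alpha^2/2$ by $\alpha^2/(2L_f^2)$.

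Second, because the log-Sobolev inequality is usually stated for smooth functions, I would first treat $f\in C^\infty$ with $\|\nabla f\|_2\leq 1$ everywhere, and recover the general Lipschitz case at the end by a routine mollification: convolving $f$ with a Gaussian kernel of vanishing width gives smooth $1$-Lipschitz approximants $f_\epsilon\to f$ (uniformly on compact sets, with $\E f_\epsilon(X)\to\E f(X)$), so the tail bound passes to the limit.

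Third -- the core of the argument -- set $H(\lambda):=\E\,e^{\lambda f(X)}$ and apply the Gaussian log-Sobolev inequality $\mathrm{Ent}(g^2)\leq 2\,\E\|\nabla g\|_2^2$ (which I would cite, e.g.\ from Boucheron--Lugosi--Massart or Ledoux) to $g=e^{\lambda f/2}$. Using $\|\nabla g\|_2^2=\tfrac{\lambda^2}{4}\|\nabla f\|_2^2 e^{\lambda f}\leq \tfrac{\lambda^2}{4}e^{\lambda f}$ together with the identity $\mathrm{Ent}(g^2)=\lambda H'(\lambda)-H(\lambda)\log H(\lambda)$, one gets the differential inequality $\lambda H'(\lambda)-H(\lambda)\log H(\lambda)\leq \tfrac{\lambda^2}{2}H(\lambda)$, i.e.\ $\frac{d}{d\lambda}\big(\lambda^{-1}\log H(\lambda)\big)\leq\tfrac12$. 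Since $\lambda^{-1}\log H(\lambda)\to \E f(X)=0$ as $\lambda\to 0^+$, integrating yields $\log H(\lambda)\leq \lambda^2/2$, that is $\E e^{\lambda f(X)}\leq e^{\lambda^2/2}$ for all $\lambda>0$. A Chernoff bound then gives $\prob(f(X)\geq\alpha)\leq e^{-\lambda\alpha}H(\lambda)\leq e^{-\lambda\alpha+\lambda^2/2}$, and optimizing at $\lambda=\alpha$ finishes the one-sided estimate.

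The main obstacle -- indeed essentially the only nontrivial ingredient -- is the Gaussian log-Sobolev inequality itself; the normalization, the mollification, and the elementary ODE manipulation in the Herbst step are all bookkeeping. If one wishes to avoid invoking log-Sobolev, an alternative is to quote the Gaussian isoperimetric inequality (Borell--Sudakov--Tsirelson), which yields the same sharp constant; or, accepting a slightly worse constant, one can give a self-contained proof via the Gaussian interpolation path $X\sin\theta+Y\cos\theta$ with $Y$ an independent copy of $X$, bounding $\E e^{\lambda(f(X)-f(Y))}$ by conditioning on $X_\theta$ and using that $\langle\nabla f(X_\theta),X_\theta'\rangle$ is then centered Gaussian with variance at most $L_f^2$.
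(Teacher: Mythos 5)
Your proof is correct: the normalization, mollification, and Herbst/log-Sobolev argument are all standard and yield exactly the stated bound with the sharp constant $2e^{-\alpha^2/(2L_f^2)}$. The paper itself supplies no proof, only a citation to Chapter~8 of Foucart--Rauhut, and that reference establishes the result by precisely the entropy-method route you describe, so your argument is essentially the same as the one the paper relies on.
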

A proof of \cref{lemma:Lip-concentration} can be found in Chapter 8 of \citep{foucart2013invitation}. 
Further, the following result provides a lower bound for the expected activation associated with inputs drawn from a Gaussian distribution. This will be used to illustrate the relative error associated with SPFQ. 
\begin{proposition}\label{prop:expect-relu-gaussian}
Let $\rho(x):=\max\{0, x\}$ be the ReLU activation function, acting elementwise, and let 
 $X\sim\mathcal{N}(0, \Sigma)$. Then
\[
\E \|\rho(X)\|_2 \geq \sqrt{\frac{\tr(\Sigma)}{2\pi}}. 
\]
\end{proposition}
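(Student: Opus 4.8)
The plan is to bound $\|\rho(X)\|_2$ from below by a single, well-chosen linear functional of $\rho(X)$ and then pass to expectations. The point is that $\rho$ acts entrywise, so $\rho(X)$ has nonnegative coordinates, and for any fixed nonzero $u\in\R^n$ the Cauchy--Schwarz inequality gives $\|\rho(X)\|_2\geq \langle \rho(X),u\rangle/\|u\|_2$. Everything then comes down to picking $u$ so that this estimate is tight in expectation.

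First I would dispose of the degenerate case $\tr(\Sigma)=0$: then $X=0$ almost surely, both sides vanish, and there is nothing to prove. So assume $\tr(\Sigma)>0$ and take $u:=\bigl(\sqrt{\Sigma_{11}},\ldots,\sqrt{\Sigma_{nn}}\bigr)^{\tp}$, so that $\|u\|_2^2=\sum_{i=1}^n \Sigma_{ii}=\tr(\Sigma)$. Taking expectations in the Cauchy--Schwarz bound and using linearity, I need $\E\,\rho(X_i)$ for each coordinate. The marginal of $X_i$ is $\mathcal{N}(0,\Sigma_{ii})$, and rescaling the standard identity $\E\max\{0,Z\}=\tfrac{1}{\sqrt{2\pi}}$ for $Z\sim\mathcal{N}(0,1)$ gives $\E\,\rho(X_i)=\sqrt{\Sigma_{ii}}/\sqrt{2\pi}$. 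Hence
\[
\E\langle \rho(X),u\rangle=\sum_{i=1}^n \sqrt{\Sigma_{ii}}\cdot\frac{\sqrt{\Sigma_{ii}}}{\sqrt{2\pi}}=\frac{\tr(\Sigma)}{\sqrt{2\pi}},
\]
and dividing by $\|u\|_2=\sqrt{\tr(\Sigma)}$ yields exactly $\E\|\rho(X)\|_2\geq \tr(\Sigma)\big/\bigl(\sqrt{2\pi}\sqrt{\tr(\Sigma)}\bigr)=\sqrt{\tr(\Sigma)/(2\pi)}$, as claimed.

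The only genuine idea here is the choice of the test vector $u$, so that is also where one would get stuck. A naive choice such as the all-ones vector (equivalently, using $\|\cdot\|_2\geq n^{-1/2}\|\cdot\|_1$) produces only the weaker bound $\bigl(\sum_i\sqrt{\Sigma_{ii}}\bigr)/\sqrt{2\pi n}$, which can fall short of the asserted quantity by a factor of order $\sqrt{n}$ when the diagonal of $\Sigma$ is highly unbalanced. Weighting coordinate $i$ by $\sqrt{\Sigma_{ii}}$ is precisely what absorbs the Cauchy--Schwarz slack; note the resulting bound is sharp, with equality when $n=1$ (where $\rho(X)=\max\{0,X_1\}$ and both sides equal $\sqrt{\Sigma_{11}/(2\pi)}$).
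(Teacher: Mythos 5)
Your proof is correct, and it takes a genuinely different and more elementary route than the paper. The paper splits the bound into two lemmas: first it shows $\E\|X\|_2\geq\sqrt{2\tr(\Sigma)/\pi}$ by viewing $A\mapsto\E\sqrt{Z^\top AZ}$ as a concave function on the convex compact set of trace-one positive semidefinite matrices, invoking the Bauer maximum principle to reduce to rank-one extreme points (characterized in a separate lemma); second it shows $\E\|\rho(X)\|_2\geq\tfrac{1}{2}\E\|X\|_2$ by pairing opposite orthants and using the symmetry of the Gaussian together with $\rho(x)+\rho(-x)=|x|$ coordinatewise. Your argument bypasses all of this machinery: since $\rho(X)$ has nonnegative coordinates, the single inequality $\|\rho(X)\|_2\geq\langle\rho(X),u\rangle/\|u\|_2$ with the weighted test vector $u_i=\sqrt{\Sigma_{ii}}$, combined with the half-normal mean $\E\,\rho(X_i)=\sqrt{\Sigma_{ii}/(2\pi)}$, delivers exactly the same constant in three lines. (The edge cases are handled correctly: $\tr(\Sigma)=0$ is dispatched separately, and coordinates with $\Sigma_{ii}=0$ contribute zero to both the numerator and $\|u\|_2^2$.) Your approach is shorter, needs only the diagonal of $\Sigma$, and makes transparent why the constant is sharp for $n=1$; what the paper's route buys is the two intermediate facts $\E\|X\|_2\geq\sqrt{2\tr(\Sigma)/\pi}$ and $\E\|\rho(X)\|_2\geq\tfrac12\E\|X\|_2$, which are of some independent interest but are not used elsewhere in the paper.
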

To start the proof of \cref{prop:expect-relu-gaussian}, we need the following two lemmas. While these results are likely to be known, we could not find proofs in the literature so we include the argument for completeness. 
\begin{lemma}\label{lemma:extreme-points}
Let $\mathcal{S}$ denote the convex set of all positive semidefinite matrices $A$ in $\mathbb{R}^{n\times n}$ with $\tr(A)=1$. Then the extreme points of $\mathcal{S}$ are exactly the rank-$1$ matrices of the form $uu^\top$ where $u$ is a unit vector in $\R^n$.
\end{lemma}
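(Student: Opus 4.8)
The plan is to establish the two directions of the set equality separately: first, that every unit-norm rank-one matrix $uu^\top$ is an extreme point of $\mathcal{S}$, and second, that every extreme point of $\mathcal{S}$ has rank one (and hence, by the trace constraint, is of this form). Neither direction requires any input beyond elementary linear algebra, so the argument is mostly bookkeeping with the positive semidefiniteness and trace constraints.

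For the first direction, I would suppose $uu^\top=\tfrac12(A+B)$ with $A,B\in\mathcal{S}$ and $\|u\|_2=1$, and show $A=B=uu^\top$. For any $v$ orthogonal to $u$ one has $v^\top(A+B)v=2\langle u,v\rangle^2=0$, and since $A,B\succeq 0$ this forces $v^\top A v=v^\top B v=0$. The standard fact that $v^\top A v=0$ together with $A\succeq 0$ implies $Av=0$ (Cauchy--Schwarz applied to the bilinear form $x,y\mapsto x^\top A y$) then shows that the ranges of $A$ and $B$ lie in $\mathrm{span}(u)$, so $A=a\,uu^\top$ and $B=b\,uu^\top$ for scalars $a,b\ge 0$; the constraint $\tr(A)=\tr(B)=1$ gives $a=b=1$, hence $A=B=uu^\top$ and $uu^\top$ is extreme.

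For the second direction, I would argue by contraposition: if $A\in\mathcal{S}$ has rank at least two, take a spectral decomposition $A=\sum_i\lambda_i v_iv_i^\top$ with $v_i$ orthonormal and, after relabeling, $\lambda_1,\lambda_2>0$. Set $E:=v_1v_1^\top-v_2v_2^\top$ (symmetric, $\tr(E)=0$, commuting with $A$) and $B:=A+\varepsilon E$, $C:=A-\varepsilon E$ for $0<\varepsilon\le\min\{\lambda_1,\lambda_2\}$. Since $E$ shares eigenvectors with $A$, the matrices $B$ and $C$ are symmetric, have trace $1$, and remain positive semidefinite (only the eigenvalues along $v_1,v_2$ change, staying nonnegative), so $B,C\in\mathcal{S}$; as $B\ne C$ and $A=\tfrac12(B+C)$, $A$ is not extreme. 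Therefore every extreme point of $\mathcal{S}$ has rank at most one; since the zero matrix has trace $0$ and is excluded, the rank is exactly one, and writing the unique nonzero eigenpair as $(\lambda,u)$ with $\|u\|_2=1$ and using $\lambda=\tr(A)=1$ gives the form $uu^\top$. The only points deserving a line of care are the implication ``$v^\top A v=0,\ A\succeq 0\Rightarrow Av=0$'' and the positive semidefiniteness of the perturbed matrices $B,C$; I do not expect any genuine obstacle here.
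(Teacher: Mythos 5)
Your proof is correct, and both directions differ from the paper's argument in ways worth noting. For extremality of $uu^\top$, the paper assumes $uu^\top=\alpha_1A_1+\alpha_2A_2$ lies in an open segment and argues via kernel inclusions and rank counting: it shows $\ker(A)\subseteq\ker(A_1)\cap\ker(A_2)$, concludes $A_1=u_1u_1^\top$, $A_2=u_2u_2^\top$ with $u_1\neq\pm u_2$, and derives the contradiction $\mathrm{rank}(A)\geq\mathrm{rank}(A_1+A_2)=2$. Your route is more direct: testing against $v\perp u$ and using $v^\top Av=0,\ A\succeq0\Rightarrow Av=0$ immediately pins down $A=a\,uu^\top$, $B=b\,uu^\top$, and the trace constraint finishes it; this avoids the rank bookkeeping entirely. (Your reduction to midpoints is harmless, since for a convex set the midpoint characterization of extreme points is equivalent to the general one, and in any case your orthogonality argument works verbatim for an arbitrary convex combination $\lambda A+(1-\lambda)B$.) For the converse, the paper splits a rank-$r$ matrix as a convex combination of a normalized rank-$(r-1)$ part and $u_ru_r^\top$, whereas you perturb symmetrically by $\pm\varepsilon(v_1v_1^\top-v_2v_2^\top)$; both are valid exhibitions of a nontrivial convex combination, and your eigenvalue check $\varepsilon\leq\min\{\lambda_1,\lambda_2\}$ correctly guarantees $B,C\succeq0$. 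The two steps you flag as needing care are exactly the right ones, and both are standard.
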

\begin{proof}
We first let $A\in\mathcal{S}$ be an extreme point of $\mathcal{S}$ and assume $\mathrm{rank}(A)=r>1$. Since $A$ is positive semidefinite, the spectral decomposition of $A$ yields $A=\sum\limits_{i=1}^r \lambda_i u_iu_i^\top$ where $\lambda_i>0$ and $\|u_i\|_2=1$ for $1\leq i\leq r$. Then $A$ can be rewritten as 
\[
A = \Bigl(\sum_{j=1}^{r-1} \lambda_j \Bigr)B + \lambda_r u_r u_r^\top 
\]
where $B=\sum\limits_{i=1}^{r-1}\frac{\lambda_i}{\sum_{j=1}^{r-1}\lambda_j}u_{i}u_{i}^\top$. Note that $B$ and $u_r u_r^\top $ are distinct positive semidefinite matrices with $\tr(B)=\tr(u_ru_r^\top)=1$, and $\sum\limits_{j=1}^r \lambda_j=\tr(A)=1$. Thus, $B, u_ru_r^\top\in\mathcal{S}$ and $A$ is in the open line segment joining $B$ and $u_ru_r^\top$, which is a contradiction. So any extreme point of $\mathcal{S}$ is a rank-$1$ matrix of the form $A=uu^\top$ with $\|u\|_2=1$. 

Conversely, consider any rank-$1$ matrix $A=uu^\top$ with $\|u\|_2=1$. Then we have $A\in\mathcal{S}$. Assume that $A$ lies in an open segment in $\mathcal{S}$ connecting two distinct matrices $A_1, A_2\in \mathcal{S}$, that is 
\begin{equation}\label{eq:lemma-extreme-points-eq1}
    A = \alpha_1 A_1 + \alpha_2 A_2
\end{equation}
where $\alpha_1+\alpha_2=1$ and $0< \alpha_1\leq\alpha_2$. Additionally, for any $x\in\mathrm{ker}(A)$, we have 
\begin{equation}\label{eq:lemma-extreme-points-eq2}
0=x^\top A x = \alpha_1 x^\top A_1 x+ \alpha_2 x^\top A_2 x
\end{equation}
and thus $A_1x=A_2x=0$. It implies $\mathrm{ker}(A)\subseteq \mathrm{ker}(A_1)\cap \mathrm{ker}(A_2)$. By the rank–nullity theorem, we get $1=\mathrm{rank}(A)\geq \max\{\mathrm{rank}(A_1), \mathrm{rank}(A_2)\}$. Since $A_1$ and $A_2$ are distinct matrices in $\mathcal{S}$, we have $\mathrm{rank}(A_1)=\mathrm{rank}(A_2)=1$ and there exist unit vectors $u_1, u_2$ such that $A_1=u_1u_1^\top$, $A_2=u_2u_2^\top$, and $u_1\neq \pm u_2$. Hence,  
\[
\mathrm{rank}(A_1+A_2)=\mathrm{rank}([u_1, u_2][u_1, u_2]^\top )=\mathrm{rank}([u_1, u_2])=2.
\]
Moreover, it follows from \eqref{eq:lemma-extreme-points-eq1} that $A=\alpha_1(A_1+A_2)+(\alpha_2-\alpha_1)A_2$. Due to $\alpha_2-\alpha_1\geq0$, one can get $\mathrm{rank}(A)\geq \mathrm{rank}(A_1+A_2)=2 $ by a similar argument we applied in \eqref{eq:lemma-extreme-points-eq2}. However, this contradicts the assumption that $A$ is a rank-$1$ matrix. Therefore, for any unit vector $u$, $A=uu^\top$ is an extreme point of $\mathrm{S}$.
\end{proof}

\begin{lemma}\label{lemma:expect-gaussian}
Suppose $X\sim\mathcal{N}(0, \Sigma)$. Then $\E\|X\|_2\geq \sqrt{\frac{2\tr(\Sigma)}{\pi}}$.
\end{lemma}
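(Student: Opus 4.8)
The plan is to exploit concavity of the map $\Sigma \mapsto \E_{X\sim\mathcal{N}(0,\Sigma)}\|X\|_2$ in combination with \cref{lemma:extreme-points}. First I would realize $X\sim\mathcal{N}(0,\Sigma)$ as $X=\Sigma^{1/2}Z$ with $Z\sim\mathcal{N}(0,I_n)$ and $\Sigma^{1/2}$ the symmetric positive semidefinite square root; since $\Sigma^{1/2}$ is symmetric, $\|X\|_2=\sqrt{Z^\top\Sigma Z}$, so
\[
g(\Sigma):=\E_{X\sim\mathcal{N}(0,\Sigma)}\|X\|_2=\E_{Z\sim\mathcal{N}(0,I_n)}\sqrt{Z^\top\Sigma Z}.
\]
For each fixed $Z$, the map $\Sigma\mapsto\sqrt{Z^\top\Sigma Z}$ is the composition of the concave increasing function $\sqrt{\cdot}$ with a linear map of $\Sigma$, hence concave on the positive semidefinite cone; taking the expectation over $Z$ preserves concavity. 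Moreover $g$ is homogeneous, $g(t\Sigma)=\sqrt{t}\,g(\Sigma)$ for $t\ge 0$, so the case $\Sigma=0$ is trivial and otherwise, dividing by $\tr(\Sigma)$, it suffices to prove $g(\Sigma)\ge\sqrt{2/\pi}$ for every $\Sigma$ in the convex set $\mathcal{S}$ of trace-one positive semidefinite matrices from \cref{lemma:extreme-points}.

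Next I would invoke the spectral decomposition $\Sigma=\sum_{i=1}^n\lambda_i u_iu_i^\top$ with $\lambda_i\ge 0$, $\sum_i\lambda_i=1$, and $\{u_i\}$ orthonormal. This writes each $\Sigma\in\mathcal{S}$ as an explicit finite convex combination of the rank-one matrices $u_iu_i^\top$, which by \cref{lemma:extreme-points} are precisely the extreme points of $\mathcal{S}$; in particular no Krein--Milman argument is needed. Applying concavity of $g$ to this convex combination gives
\[
g(\Sigma)\ \ge\ \sum_{i=1}^n\lambda_i\,g(u_iu_i^\top).
\]
Finally, for a unit vector $u$ one has $g(uu^\top)=\E_Z\sqrt{Z^\top uu^\top Z}=\E_Z|u^\top Z|$, and since $u^\top Z\sim\mathcal{N}(0,1)$ this equals $\E|\gamma|=\sqrt{2/\pi}$ for $\gamma\sim\mathcal{N}(0,1)$. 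Substituting back yields $g(\Sigma)\ge(\sum_i\lambda_i)\sqrt{2/\pi}=\sqrt{2/\pi}$ on $\mathcal{S}$, and undoing the normalization gives $\E\|X\|_2=g(\Sigma)\ge\sqrt{2\tr(\Sigma)/\pi}$ in general.

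The only point requiring care is the concavity step: one must check that $\Sigma\mapsto\sqrt{Z^\top\Sigma Z}$ is concave on the semidefinite cone and that this property survives both the expectation over $Z$ and the passage to the spectral convex combination. All of this is routine once the identity $\|X\|_2=\sqrt{Z^\top\Sigma Z}$ is established, so I do not expect a genuine obstacle; the evaluation $\E|\gamma|=\sqrt{2/\pi}$ is elementary, and one could even bypass \cref{lemma:extreme-points} by applying Jensen's inequality directly to the spectral decomposition.
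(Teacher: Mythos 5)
Your proof is correct, but it takes a genuinely different and more elementary route than the paper. The paper also reduces to $f(\Sigma)=\E\sqrt{Z^\top\Sigma Z}$ on the trace-one positive semidefinite set $\mathcal{S}$, but it then invokes the Bauer maximum principle (a continuous concave function on a compact convex set attains its minimum at an extreme point) together with the full classification of the extreme points of $\mathcal{S}$ in \cref{lemma:extreme-points}, and finally evaluates $f$ at a rank-one extreme point. You instead write $\Sigma=\sum_i\lambda_i u_iu_i^\top$ via the spectral decomposition and apply finite Jensen directly: $\sqrt{z^\top\Sigma z}=\sqrt{\sum_i\lambda_i(u_i^\top z)^2}\ge\sum_i\lambda_i|u_i^\top z|$, then take expectations. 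This bypasses both the Krein--Milman/Bauer machinery and \cref{lemma:extreme-points} entirely (the latter occupies a full page of the appendix), and it sidesteps the compactness and continuity checks the paper leaves implicit; all you need is concavity of $\sqrt{\cdot}$ on $[0,\infty)$ and $\E|\gamma|=\sqrt{2/\pi}$ for $\gamma\sim\mathcal{N}(0,1)$. What the paper's approach buys in exchange is the slightly stronger structural statement that the minimum of $f$ over $\mathcal{S}$ is exactly $\sqrt{2/\pi}$ and is attained precisely at the rank-one matrices, but that extra information is not needed for the lemma. Your concavity and homogeneity verifications are all sound, so there is no gap.
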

\begin{proof}
Without loss of generality, we can assume that $\tr(\Sigma)=1$. Let $Z\sim\mathcal{N}(0, I)$. Since $\Sigma^{\frac{1}{2}}Z\sim\mathcal{N}(0, \Sigma)$, we have 
\begin{equation}\label{eq:lemma-expect-gaussian-eq1}
\E\|X\|_2=\E\|\Sigma^{\frac{1}{2}}Z\|_2=\E \sqrt{Z^\top\Sigma Z}. 
\end{equation}
Define a function $f(A):=\E \sqrt{Z^\top A Z}$ and let $\mathcal{S}$ denote the set of all positive semidefinite matrices whose traces are equal to $1$. Then $f(A)$ is continuous and concave over $\mathcal{S}$ that is convex and compact. By Bauer maximum principle, $f(A)$ attains its minimum at some extreme point $\widetilde{A}$ of $\mathcal{S}$. According to \cref{lemma:extreme-points}, $\widetilde{A}=uu^\top$ with $\|u\|_2=1$. If follows that 
\begin{equation}\label{eq:lemma-expect-gaussian-eq2}
\min_{A\in\mathcal{S}} f(A) =  f(\widetilde{A}) =\E \sqrt{Z^\top \widetilde{A} Z} = \E |u^\top Z| = \sqrt{\frac{2}{\pi}}. 
\end{equation}
In the last step, we used the fact $u^\top Z\sim\mathcal{N}(0,1)$. Combining \eqref{eq:lemma-expect-gaussian-eq1} and \eqref{eq:lemma-expect-gaussian-eq2}, we obtain
\[
\E\|X\|_2 =f(\Sigma)\geq \min_{A\in\mathcal{S}} f(A)=\sqrt{\frac{2}{\pi}}.
\]
This completes the proof.
\end{proof}

\begin{lemma}\label{lemma:expect-relu-gaussian}
Given an $n$-dimensional random vector $X\sim \mathcal{N}(0, \Sigma)$, we have
\[
\E\|\rho(X)\|_2\geq \frac{1}{2}\E\|X\|_2
\]
 where $\rho(x)=\max\{0, x\}$ is the ReLU activation function.
\end{lemma}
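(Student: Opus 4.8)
The plan is to exploit the symmetry of the centered Gaussian together with the triangle inequality. Since $X\sim\mathcal{N}(0,\Sigma)$ is symmetric, $X$ and $-X$ have the same distribution, hence $\E\|\rho(X)\|_2=\E\|\rho(-X)\|_2$ and therefore
\[
2\,\E\|\rho(X)\|_2 = \E\bigl(\|\rho(X)\|_2 + \|\rho(-X)\|_2\bigr).
\]
So it suffices to show the pointwise inequality $\|\rho(x)\|_2+\|\rho(-x)\|_2\geq \|x\|_2$ for every $x\in\R^n$, and then take expectations.

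To establish the pointwise bound, I would first record the elementary identity $\rho(t)+\rho(-t)=|t|$ for every real $t$ (check the two cases $t\geq 0$ and $t<0$), which applied entrywise gives $\rho(x)+\rho(-x)=|x|$, where $|x|$ denotes the entrywise absolute value. Then the triangle inequality yields
\[
\|\rho(x)\|_2 + \|\rho(-x)\|_2 \geq \|\rho(x)+\rho(-x)\|_2 = \bigl\||x|\bigr\|_2 = \|x\|_2 .
\]
Combining this with the displayed identity above and taking expectations gives $2\,\E\|\rho(X)\|_2\geq \E\|X\|_2$, which is the claim.

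I do not anticipate a genuine obstacle here: the only ``trick'' is recognizing that pairing $X$ with $-X$ lets one split $|x|$ into its positive and negative parts, both of which are controlled by $\rho$. (An alternative, more pedestrian route would be to argue coordinatewise or to integrate the half-line Gaussian density directly, but that is messier and unnecessary.) The one point worth stating carefully is that the symmetry step uses only that $\mathcal{N}(0,\Sigma)$ is invariant under $x\mapsto -x$, so no nondegeneracy assumption on $\Sigma$ is needed, and that $\rho$ acts entrywise so the identity $\rho(x)+\rho(-x)=|x|$ is genuinely a vector identity.
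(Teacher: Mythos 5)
Your proof is correct and rests on the same two ingredients as the paper's: the symmetry of $\mathcal{N}(0,\Sigma)$ under $x\mapsto -x$ and the pointwise identity $\rho(x)+\rho(-x)=|x|$ combined with the triangle inequality. The paper implements the symmetry step by partitioning $\R^n$ into antipodal pairs of orthants and comparing integrals over each pair, whereas you simply use $\E\|\rho(-X)\|_2=\E\|\rho(X)\|_2$ directly; your version is a cleaner organization of the identical argument, not a different route.
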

\begin{proof}
We divide $\R^n$ into $J:= 2^{n-1}$ pairs of orthants $\{(A_i, B_i)\}_{i=1}^J$ such that $-A_i=B_i$. For example, $\{(x_1, x_2,\ldots, x_n): x_i>0, i = 1, 2,\ldots, n\}$ and $\{(x_1, x_2,\ldots, x_n): x_i<0, i = 1, 2,\ldots, n\}$ compose one of these pairs. Since $X$ is symmetric, that is, $X$ and $-X$ have the same distribution, one can get 
\begin{equation}\label{eq:lemma-expect-relu-gaussian-eq1}
    \int_{A_i} \|\rho(-x)\|_2 \, dP_X = \int_{B_i} \|\rho(x)\|_2 \, dP_X  
\end{equation}
and 
\begin{equation}\label{eq:lemma-expect-relu-gaussian-eq2}
    \int_{A_i} \|x\|_2 \, dP_X = \int_{B_i} \|x\|_2 \, dP_X  
\end{equation}
where $P_X$ denotes the probability distribution of $X$. It follows that
\begin{align*}
    \E \|\rho(X)\|_2 &= \int_{\R^n} \|\rho(x)\|_2 \, dP_X \\
    &= \sum_{j=1}^J \int_{A_j\cup B_j} \|\rho(x)\|_2 \, dP_X \\
    &= \sum_{j=1}^J \int_{A_j} \|\rho(x)\|_2 \, dP_X +  \int_{A_j} \|\rho(-x)\|_2 \, dP_X &(\text{using}\, \eqref{eq:lemma-expect-relu-gaussian-eq1}) \\
    &\geq \sum_{j=1}^J \int_{A_j} \|\rho(x) + \rho(-x) \|_2 \, dP_X &(\text{by triangle inequality}) \\
    &= \sum_{j=1}^J \int_{A_j} \|x\|_2 \, dP_X \\
    &= \frac{1}{2} \sum_{j=1}^J \int_{A_j\cup B_j} \|x\|_2 \, dP_X &(\text{using}\, \eqref{eq:lemma-expect-relu-gaussian-eq2}) \\
    &= \frac{1}{2}\E\|X\|_2.
\end{align*}
\end{proof}
\cref{prop:expect-relu-gaussian} then follows immediately from \cref{lemma:expect-gaussian} and \cref{lemma:expect-relu-gaussian}. 

\subsection{Perturbation analysis for underdetermined systems}
In this section, we investigate the minimal $\ell_\infty$ norm solutions of perturbed underdetermined linear systems like \eqref{eq:min-norm}, which can be used to bound the $\ell_\infty$ norm of $\widetilde{w}$ generated by the perfect data alignment. Specifically, consider a matrix $X\in \R^{m\times N }$ with $\mathrm{rank}(X)=m < N$. It admits the singular value decomposition 
\begin{equation}\label{eq:SVD}
X=USV^\top
\end{equation}
where $U=[U_1, \ldots, U_m]\in\mathbb{R}^{m\times m}$, $V=[V_1, \dots, V_m]\in \R^{N\times m}$ have orthonormal columns, and $S=\mathrm{diag}(\sigma_1, \ldots, \sigma_m)$ consists of singular values $\sigma_1\geq \sigma_2 \geq \ldots \geq \sigma_m >0$. Moreover, suppose $\epsilon>0$, $w\in\R^{N}$, and $E\in\R^{m\times N}$ satisfying $\|E\|_2 \leq \epsilon \|X\|_2$. Let $\widetilde{X} := X + E$ be the perturbed matrix and define 
\begin{align}
\hat{w} &:= \arg\min \|z\|_\infty \, \text{ subject to } \, X z = Xw, \label{eq:system1}\\ 
\widetilde{w} &:= \arg\min \|z\|_\infty \, \text{ subject to } \, \widetilde{X} z = Xw.\label{eq:system2}
\end{align}
Our goal is to evaluate the ratio $\frac{\|\widetilde{w}\|_\infty}{\|\hat{w}\|_\infty}$.

The proposition below highlights the fact that one can construct systems where arbitrarily small perturbations can yield arbitrarily divergent solutions. The proof relies on the system being ill-conditioned, and on a particular construction of $X$ and $E$ to exploit the ill-conditioning.  
\begin{proposition}\label{prop:negative}
For $\epsilon, \gamma\in (0,1)$, there exist a matrix $X\in\R^{m\times N}$, a perturbed version $\widetilde{X}=X+E$ with $\|E\|_2\leq \epsilon \|X\|_2$, and a unit vector $w\in \R^{N}$, so that the optimal solutions to \eqref{eq:system1} and \eqref{eq:system2} satisfy $\frac{\|\widetilde{w}\|_\infty}{\|\hat{w}\|_\infty} = \frac{1}{\gamma}$. 
\end{proposition}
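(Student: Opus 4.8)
The plan is to produce an explicit two-row, ill-conditioned example and simply compute both minimum-$\ell_\infty$ solutions. I would take $m=2$, $N=3$, fix a parameter $\kappa\in(0,1)$ to be specified, and set
\[
X=\begin{pmatrix}1 & 0 & 0\\ 0 & \kappa & 0\end{pmatrix},\qquad E=\begin{pmatrix}0 & 0 & 0\\ 0 & (\gamma-1)\kappa & 0\end{pmatrix},\qquad \widetilde X=X+E=\begin{pmatrix}1 & 0 & 0\\ 0 & \gamma\kappa & 0\end{pmatrix},\qquad w=e_2 .
\]
Here $w$ is a unit vector, $X$ and $\widetilde X$ both have rank $2<3=N$, the singular values of $X$ are $\sigma_1=1,\ \sigma_2=\kappa$ (so $X$ is ill-conditioned when $\kappa$ is small, and $\|X\|_2=1$), and $\|E\|_2=(1-\gamma)\kappa$. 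Choosing $\kappa:=(1-\gamma)\epsilon\in(0,1)$ makes $\|E\|_2=(1-\gamma)^2\epsilon\le\epsilon=\epsilon\|X\|_2$, so $E$ is an admissible perturbation.

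Next I would evaluate \eqref{eq:system1} and \eqref{eq:system2} directly. Since $Xw=(0,\kappa)^{\top}$, the constraint $Xz=Xw$ forces $z_1=0$ and $z_2=1$ while leaving $z_3$ free, so $\min_z\|z\|_\infty=\min_{z_3}\max(1,|z_3|)=1$, attained at $\hat w=(0,1,0)$; thus $\|\hat w\|_\infty=1$. For the perturbed system, $\widetilde X z=Xw=(0,\kappa)^{\top}$ forces $z_1=0$ and $\gamma\kappa z_2=\kappa$, i.e.\ $z_2=1/\gamma$, again with $z_3$ free, and since $1/\gamma>1$ we get $\min_z\|z\|_\infty=\min_{z_3}\max(1/\gamma,|z_3|)=1/\gamma$, attained at $\widetilde w=(0,1/\gamma,0)$. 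Hence $\|\widetilde w\|_\infty/\|\hat w\|_\infty=1/\gamma$ exactly, which is the claim. (Feasibility of \eqref{eq:system2} is immediate: $\widetilde X$ has full row rank, so $Xw\in\mathrm{range}(\widetilde X)=\R^{2}$, and the optimal value of the program is well defined even though the minimizer is not unique.)

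Since this is an existence statement, there is no real obstacle once the construction is found; the one point worth articulating is why $m=1$ cannot work, which also motivates the shape of the example. With a single row $X=a^{\top}$ one has $\|\hat w\|_\infty=|\langle a,w\rangle|/\|a\|_1$ and $\|\widetilde w\|_\infty=|\langle a,w\rangle|/\|a+e\|_1$, so the ratio is $\|a\|_1/\|a+e\|_1$, and a perturbation with $\|e\|_2\le\epsilon\|a\|_2$ can change this only by a factor controlled by $\epsilon$ and the $\ell_1$--$\ell_2$ norm equivalence --- never arbitrarily large when $\epsilon$ is small. One therefore needs $m\ge2$, so that a perturbation that is small in operator norm (hence small relative to $\sigma_1$) can nevertheless rescale the tiny singular value $\sigma_2$ by an order-one factor $\gamma$; the diagonal choice above isolates precisely this mechanism. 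The remaining checks --- $w$ a unit vector, the ranks, $\|E\|_2\le\epsilon\|X\|_2$, and the fact that the free coordinate $z_3$ cannot reduce the optimal norms below the values forced by the active equality constraints --- are routine.
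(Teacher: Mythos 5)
Your construction is correct, and it isolates exactly the same mechanism as the paper's proof: a rank-one perturbation, small in operator norm relative to $\sigma_1$, that rescales the smallest singular value by the factor $\gamma$, combined with a choice of $w$ supported on the corresponding right-singular direction so that the feasible sets of \eqref{eq:system1} and \eqref{eq:system2} differ by exactly that factor. The difference is in execution. The paper takes $V$ to be the first $m$ columns of a normalized Hadamard matrix, so that identifying the minimum-$\ell_\infty$ solution requires a small duality-type argument ($\|z\|_\infty \geq \|V^\top z\|_2/\sqrt{N}$, with equality at the flat vector $V_m$); this yields a construction valid for general $m$ and for any $N$ admitting a Hadamard matrix. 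Your diagonal choice makes the $\ell_\infty$ minimization trivial to read off coordinate-by-coordinate, at the cost of fixing $m=2$, $N=3$. Since the proposition is a pure existence statement (its role is only to exhibit an adversarial scenario), a single instance suffices, and in any case your example extends verbatim to arbitrary $m<N$ by placing $1$'s in the first $m-1$ diagonal positions, $\kappa$ in position $(m,m)$, and zero columns thereafter. All the checks go through: $\|X\|_2=1$, $\|E\|_2=(1-\gamma)^2\epsilon\leq\epsilon\|X\|_2$, $\mathrm{rank}(X)=\mathrm{rank}(\widetilde X)=m$, and the non-uniqueness of the minimizers in the free coordinate does not affect the optimal values, which is all the ratio depends on. Your closing remark about why $m=1$ cannot exhibit this blow-up (the ratio there is $\|a\|_1/\|a+e\|_1$, controlled for small $\epsilon$) is a nice piece of motivation absent from the paper, though note it is only valid for $\epsilon$ bounded away from $1$, as you correctly qualify.
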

\begin{proof}
Let $U\in \R^{m\times m}$ be any orthogonal matrix and let $V\in \R^{N\times m}$ be the first $m$ columns of a normalized Hadamard matrix of order $N$. Then we have $V^\top V = I$ and entries of $V$ are either $\frac{1}{\sqrt{N}}$ or $-\frac{1}{\sqrt{N}}$. Set $X=USV^\top$ where $S\in\R^{m\times m}$ is diagonal with diagonal elements $\sigma_1=\sigma_2=\ldots=\sigma_{m-1}=1$ and $\sigma_m=\epsilon$. Define a rank one matrix $E=\epsilon (\gamma-1) U_m V_m^\top$. Then we have 
\[
\frac{\|E\|_2}{\|X\|_2} = \epsilon(1-\gamma) < \epsilon, \quad \widetilde{X}=X+E= U\mathrm{diag}(1,\ldots, 1, \epsilon\gamma) V^\top. 
\]
Picking a unit vector $w= \epsilon VS^{-1}e_m$ with $e_m:=(0,\ldots, 0, 1)\in\R^m$, the feasibility condition in \eqref{eq:system1}, together with the definition of $X$, implies that $Xz = Xw$ is equivalent to 
\begin{equation}\label{eq:system1-feasibility}
V^\top z = e_m.
\end{equation}
Since $VV^\top z = P_{\mathrm{Im}(V)}(z)$ is the orthogonal projection of $z$ onto the image of $V$, for any feasible $z$ satisfying \eqref{eq:system1-feasibility}, we have 
\[
\|z\|_\infty \geq \frac{\|z\|_2}{\sqrt{N}}\geq \frac{\|VV^\top z\|_2}{\sqrt{N}} = \frac{\|V^\top z\|_2}{\sqrt{N}} = \frac{\|e_m\|_2}{\sqrt{N}}=\frac{1}{\sqrt{N}}.
\]
Note that $z=V_m$ satisfies \eqref{eq:system1-feasibility} and $\|V_m\|_\infty=\frac{1}{\sqrt{N}}$ achieves the lower bound. Thus, we have found an optimal solution $\hat{w}=V_m$ with $\|\hat{w}\|_\infty=\frac{1}{\sqrt{N}}$.

Meanwhile the corresponding feasibility condition in \eqref{eq:system2}, coupled with the definition of $\widetilde{X}$, implies that $\widetilde{X}z = Xw$ can be rewritten as $V^\top z =\frac{1}{\gamma} e_m$. By a similar argument we used for solving \eqref{eq:system1}, we obtain that $\widetilde{w}=\frac{1}{\gamma}V_m$ is an optimal solution to \eqref{eq:system2} and thus $\|\widetilde{w}\|_\infty=\frac{1}{\gamma\sqrt{N}}$. Therefore, we have 
$\frac{\|\widetilde{w}\|_\infty}{\|\hat{w}\|_\infty} = \frac{1}{\gamma}$ as desired.
\end{proof}

Proposition \ref{prop:negative} constructs a scenario in which adjusting the weights to achieve $\widetilde{X}\widetilde{w}=X\hat{w}=Xw$, under even a small perturbation of $X$, inexorably leads to a large increase in the infinity norm of $\widetilde{w}$. In \cref{prop:min-inf-stability}, we consider a more reasonable scenario where the original weights $w$ is Gaussian that is more likely to be representative of ones encountered in practice. The proof of the following lemma follows \citep{410242}. 

\begin{lemma}\label{lemma:log-concavity}
Let $\|\cdot\|$ be any vector norm on $\R^n$. Let $X\sim \mathcal{N}(0, \Sigma_1)$ and $Y\sim \mathcal{N}(0, \Sigma_2)$ be $n$-dimensional random vectors. Suppose $\Sigma_1 \preceq \Sigma_2$. Then, for $t\geq 0$, we have 
\[
\prob(\|X\| \leq t) \geq \prob(\|Y\| \leq t).
\]
\end{lemma}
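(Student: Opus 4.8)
The plan is to reduce the statement to a one-line conditioning argument combined with the fact that a symmetric log-concave function is maximized at its center of symmetry; this is essentially Anderson's inequality and explains the name of the lemma. First I would record the two elementary structural facts that make everything work. The sublevel set $K_t:=\{x\in\R^n:\|x\|\le t\}$ is \emph{convex} (by the triangle inequality and positive homogeneity of $\|\cdot\|$) and \emph{symmetric}, i.e. $K_t=-K_t$; and, since $\Sigma_1\preceq\Sigma_2$, the matrix $\Sigma_2-\Sigma_1$ is positive semidefinite, so we may realize $Y$ as $Y\overset{d}{=}X+Z$ with $Z\sim\mathcal{N}(0,\Sigma_2-\Sigma_1)$ independent of $X$ (exactly the coupling used in \cref{lemma:cx-normal}).

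Next, conditioning on $Z$ and writing $\phi$ for the density of $X$ (treat the nondegenerate case first), independence gives
\[
\prob(\|Y\|\le t)=\E_Z\bigl[\,\prob(X\in K_t-Z\mid Z)\,\bigr],\qquad \prob(X\in K_t-z)=\int_{K_t}\phi(x+z)\,dx=(\mathbbm{1}_{K_t}*\phi)(z),
\]
where the last identity uses that $\phi$ is even. The function $g(z):=(\mathbbm{1}_{K_t}*\phi)(z)$ is a convolution of two nonnegative, symmetric, log-concave functions ($\mathbbm{1}_{K_t}$ is log-concave because $K_t$ is convex, and the centered Gaussian density $\phi$ is log-concave), hence $g$ is itself symmetric and log-concave by the Pr\'ekopa--Leindler inequality; in particular $g(0)=g\bigl(\tfrac12 z+\tfrac12(-z)\bigr)\ge\sqrt{g(z)g(-z)}=g(z)$ for every $z$. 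Thus $\prob(X\in K_t-z)\le\prob(X\in K_t)=\prob(\|X\|\le t)$ for all $z\in\R^n$, and averaging over $Z$ yields $\prob(\|Y\|\le t)\le\prob(\|X\|\le t)$. I note that the convex order of \cref{lemma:cx-normal} does not by itself give this, since $x\mapsto\mathbbm{1}\{\|x\|>t\}$ is quasi-convex but not convex; the log-concavity of the convolution is the genuinely new ingredient, and it is the main obstacle — everything else is bookkeeping.

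The only remaining loose end is the degenerate case $\det\Sigma_1=0$ (or $\det\Sigma_2=0$), where $X$ has no Lebesgue density on $\R^n$. I would dispatch this by the standard perturbation trick: apply the nondegenerate case to $\Sigma_1+\varepsilon I\preceq\Sigma_2+\varepsilon I$ and let $\varepsilon\downarrow0$, using that $\mathcal{N}(0,\Sigma_j+\varepsilon I)\to\mathcal{N}(0,\Sigma_j)$ weakly together with the fact that the sphere $\{x:\|x\|=t\}$ is a $\mathcal{N}(0,\Sigma_j)$-null set for $t>0$, so that $\prob(\|X\|\le t)$ and $\prob(\|Y\|\le t)$ pass to the limit (the case $t=0$ being trivial). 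This recovers the statement in the form taken from \citep{410242}.
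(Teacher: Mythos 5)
Your proof is correct and follows essentially the same route as the paper's: realize $Y\overset{d}{=}X+Z$ with $Z\sim\mathcal{N}(0,\Sigma_2-\Sigma_1)$ independent of $X$, show that $z\mapsto\prob(\|X+z\|\le t)$ is a symmetric log-concave function via Pr\'ekopa--Leindler (so it is maximized at $z=0$), and average over $Z$. Your additional treatment of the degenerate case $\det\Sigma_1=0$ by perturbation is a welcome bit of extra care that the paper's proof, which writes down the density of $X$ explicitly, silently omits.
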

\begin{proof}
Fix $t\geq 0$. Define $g: \R^n\rightarrow [0, 1]$ by 
\[
g(z):= \prob(\|X+z\|\leq t) = \int_{\R^n} f_X(x) \mathbbm{1}_{\{\|x+z\|\leq t\} } \, dx 
\]
where $f_X(x):=(2\pi)^{-\frac{n}{2}} \mathrm{det}(\Sigma_1)^{-\frac{1}{2}}\exp(-\frac{1}{2}x^\top \Sigma_1^{-1} x)$ is the density function of $X$. Since $\log f_X(x) = -\frac{1}{2}x^\top \Sigma_1^{-1} x $ is concave and $\mathbbm{1}_{\{\|x+z\|\leq t\} }$ is an indicator function of a convex set, both $f_X(x)$ and $\mathbbm{1}_{\{\|x+z\|\leq t\} }$ are log-concave. It follows that the product $h(x, z) := f_X(x) \mathbbm{1}_{\{\|x+z\|\leq t\} } $ is also log-concave. Applying the Pr\'{e}kopa–Leindler inequality \citep{prekopa1971logarithmic, prekopa1973logarithmic}, the marginalization $g(z)=\int_{\R^n} h(x, z) \, dx$ preserves log-concavity. Additionally, by change of variables and the symmetry of $f_X(x)$, we have 
\[
g(-z) = \int_{\R^n} f_X(x) \mathbbm{1}_{\{\|x-z\|\leq t\} } \, dx = \int_{\R^n} f_X(x) \mathbbm{1}_{\{\|x+z\|\leq t\} } \, dx = g(z).
\]
So $g(z)$ is a log-concave even function, which implies that, for any $z\in \R^n$, 
\begin{equation}\label{eq:log-concavity-eq1}
g(z) = g(z)^{\frac{1}{2}} g(-z)^{\frac{1}{2}}\leq g\Bigl( \frac{1}{2}z -\frac{1}{2} z\Bigr) = g(0) = \prob(\|X\|\leq t).
\end{equation}
Now, let $Z\sim \mathcal{N}(0, \Sigma_2 - \Sigma_1)$ be independent of $X$. Then $X+Z \overset{d}{=} Y\sim \mathcal{N}(0, \Sigma_2)$ and, by \eqref{eq:log-concavity-eq1}, $\E g(Z) \leq \prob(\|X\|\leq t)$. It follows that 
\begin{align*}
\prob(\|X\|\leq t) &\geq \E g(Z) \\
&= \int_{\R^n} f_Z(z) g(z) \, dz \\
&= \int_{\R^n} \int_{\R^n} f_X(x) f_Z(z) \mathbbm{1}_{\{\|x+z\|\leq t\} } \, dx \, dz\\
&= \int_{\R^n \times \R^n} f_{(X,Z)}(x,z)  \mathbbm{1}_{\{\|x+z\|\leq t\} } \, d(x,z) \\
&= \prob(\|X+Z\| \leq t) \\
&= \prob(\|Y\| \leq t) 
\end{align*}
where $f_Z(z)$ and $f_{(X,Z)}(x,z)$ are density functions of $Z$ and $(X,Z)$ respectively.  
\end{proof}

\begin{proposition}\label{prop:min-inf-stability}
Let $X\in \R^{m \times N}$ admit the singular value decomposition  $X=USV^\top$ as in \eqref{eq:SVD} and let $w\in \R^N$ be a random vector with i.i.d. $\mathcal{N}(0,1)$ entries. Let $p\in\mathbb{N}$ with $p\geq 2$. Given $\epsilon\in (0,1)$, suppose $\widetilde{X}=X+E\in \R^{m\times N}$ with $\|E\|_2 \leq \epsilon \sigma_1 < \sigma_m$.Then, with probability at least $1-\frac{2}{N^{p-1}}$, 
\[
\|\widetilde{w}\|_\infty \leq \frac{\sigma_1}{\sigma_m - \epsilon \sigma_1}\sqrt{2p\log N}
\]
holds for all optimal solutions $\widetilde{w}$ of \eqref{eq:system2}.
\end{proposition}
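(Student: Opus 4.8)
The plan is to exhibit an explicit feasible point for \eqref{eq:system2} whose $\ell_\infty$ norm we can control, and then invoke the optimality of $\widetilde{w}$. First I would observe that $\widetilde{X}$ has full row rank: since $\sigma_{\min}(\widetilde{X}) = \sigma_{\min}(X+E) \geq \sigma_m - \|E\|_2 \geq \sigma_m - \epsilon\sigma_1 > 0$, the right pseudoinverse $\widetilde{X}^+ := \widetilde{X}^\top(\widetilde{X}\widetilde{X}^\top)^{-1}$ is well defined and $\widetilde{X}\widetilde{X}^+ = I_m$. Consequently $z^\ast := \widetilde{X}^+ Xw$ satisfies $\widetilde{X}z^\ast = Xw$, i.e.\ $z^\ast$ is feasible for \eqref{eq:system2}. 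Since every optimal solution $\widetilde{w}$ attains $\min\{\|z\|_\infty : \widetilde{X}z = Xw\}$, the bound $\|\widetilde{w}\|_\infty \leq \|z^\ast\|_\infty$ holds uniformly over all optimal solutions, so it suffices to control $\|z^\ast\|_\infty$.

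Next I would rewrite $z^\ast$ in a form amenable to Gaussian tail bounds. Using $X = USV^\top$ and the SVD $\widetilde{X} = \widetilde{U}\widetilde{S}\widetilde{V}^\top$ (with $\widetilde{V}\in\R^{N\times m}$ having orthonormal columns), we get $\widetilde{X}^+ = \widetilde{V}\widetilde{S}^{-1}\widetilde{U}^\top$ and hence $z^\ast = \widetilde{V}\widetilde{S}^{-1}\widetilde{U}^\top U S V^\top w = A g$, where $A := \widetilde{V}\widetilde{S}^{-1}\widetilde{U}^\top U S \in \R^{N\times m}$ and $g := V^\top w$. Since $V$ has orthonormal columns, $g \sim \mathcal{N}(0, I_m)$, so each coordinate $z^\ast_j = \langle A^\top e_j, g\rangle$ is a centered Gaussian with variance $\|A^\top e_j\|_2^2$. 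The key estimate is $\|A^\top e_j\|_2 = \|e_j^\top \widetilde{V}\widetilde{S}^{-1}\widetilde{U}^\top U S\|_2 \leq \|e_j^\top\widetilde{V}\|_2\,\|\widetilde{S}^{-1}\|_2\,\|S\|_2 \leq 1\cdot\frac{\sigma_1}{\sigma_m - \epsilon\sigma_1} =: \eta$, where I used that $\widetilde{V}\widetilde{V}^\top$ is an orthogonal projection so its diagonal entries $\|e_j^\top\widetilde{V}\|_2^2$ are at most $1$, together with $\sigma_{\min}(\widetilde{X}) \geq \sigma_m - \epsilon\sigma_1$ from the first step.

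Finally I would apply concentration coordinatewise. By \cref{lemma:Lip-concentration} applied to the linear (hence $\|A^\top e_j\|_2$-Lipschitz) map $g \mapsto \langle A^\top e_j, g\rangle$ with threshold $\alpha = \eta\sqrt{2p\log N}$, and using $\|A^\top e_j\|_2 \leq \eta$, we get $\prob(|z^\ast_j| \geq \eta\sqrt{2p\log N}) \leq 2\exp(-\alpha^2/(2\|A^\top e_j\|_2^2)) \leq 2\exp(-\alpha^2/(2\eta^2)) = 2N^{-p}$. A union bound over $j = 1,\dots,N$ then gives $\|z^\ast\|_\infty \leq \eta\sqrt{2p\log N}$ with probability at least $1 - 2N^{1-p} = 1 - \tfrac{2}{N^{p-1}}$, and optimality of $\widetilde{w}$ yields $\|\widetilde{w}\|_\infty \leq \|z^\ast\|_\infty \leq \frac{\sigma_1}{\sigma_m - \epsilon\sigma_1}\sqrt{2p\log N}$ on this event for all optimal solutions.

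The main obstacle to anticipate is choosing the right feasible point. The obvious estimate $\|z^\ast\|_\infty \leq \|z^\ast\|_2 \leq \|\widetilde{X}^+\|_2\|Xw\|_2$ is too lossy, since $\|Xw\|_2$ is of order $\sigma_1\sqrt{m}$, producing a bound of order $\eta\sqrt{m}$ rather than $\eta\sqrt{\log N}$. The fix — and the crux of the argument — is to keep the Gaussian structure intact: writing $z^\ast = Ag$ with $g$ standard Gaussian lets us apply a coordinatewise Gaussian tail bound, and crucially the rows of $A$ have $\ell_2$ norm bounded by $\eta$ with \emph{no} dimensional factor, precisely because the rows of $\widetilde{V}$ have norm at most one. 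The only other points to verify are the mild regularity facts (consistency of the system and attainment of the minimum), both immediate from $\widetilde{X}$ having full row rank.
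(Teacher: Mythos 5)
Your proof is correct, and it reaches the same bound by a route that differs from the paper's in both of its main steps. The paper also argues by exhibiting an explicit feasible point for \eqref{eq:system2} and invoking optimality, but its feasible point is $z^\sharp = V(S+\mathcal{E})^{-1}SV^\top w$ with $\mathcal{E}=U^\top E V$ (a point in the row space of $X$, whose feasibility requires checking that $S+\mathcal{E}$ is invertible and a short computation), whereas you take the pseudoinverse solution $z^\ast=\widetilde{X}^+Xw$ (a point in the row space of $\widetilde{X}$), for which feasibility is immediate once $\sigma_{\min}(\widetilde{X})\geq \sigma_m-\epsilon\sigma_1>0$ is noted via Weyl's inequality. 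More substantively, the concentration steps differ: the paper establishes $z^\sharp\sim\mathcal{N}(0,BB^\top)$ with $BB^\top\preceq \eta^2 I$ and then invokes \cref{lemma:log-concavity} (a Pr\'ekopa--Leindler/Anderson-type comparison) to transfer the $\ell_\infty$ tail of $\mathcal{N}(0,\eta^2 I)$ to $z^\sharp$; you observe that for an $\ell_\infty$ bound only the coordinatewise variances matter, bound each row norm $\|A^\top e_j\|_2\leq\eta$ directly (using that the rows of $\widetilde{V}$ have norm at most one), and finish with elementary Gaussian tails plus a union bound. Your version is more self-contained --- it does not need the log-concavity machinery of \cref{lemma:log-concavity} at all --- while the paper's comparison lemma is reused elsewhere and yields the stronger conclusion that the full covariance is dominated, not just its diagonal; for the purposes of this proposition the two are equally sharp, and both give exactly the probability $1-2/N^{p-1}$ and the constant $\eta=\sigma_1/(\sigma_m-\epsilon\sigma_1)$.
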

\begin{proof}
Let $w^\sharp := VV^\top w$ be the orthogonal projection of $w$ onto $\mathrm{Im}(V)$. Let $\widetilde{V}=[V, \hat{V}]\in \R^{N\times N}$ be an expansion of $V$ such that $\widetilde{V}$ is orthogonal. Define 
\[
\widetilde{\mathcal{E}}:= U^\top E \widetilde{V}= [U^\top E V, U^\top E \hat{V} ] = [\mathcal{E}, \hat{\mathcal{E}}] \in \R^{m\times N}
\]
where $\mathcal{E}:= U^\top E V$ and $\hat{\mathcal{E}} := U^\top E \hat{V}$. Then $E=U\widetilde{\mathcal{E}}\widetilde{V}^\top$ and thus 
\begin{equation}\label{prop:min-inf-stability-eq1}
\epsilon\sigma_1 \geq \|E\|_2=\|\widetilde{\mathcal{E}}\|_2\geq \|\mathcal{E}\|_2.
\end{equation}
Define $z^\sharp := V \left(S+\mathcal{E} \right)^{-1} S V^\top w \in \R^N$. Since $\widetilde{\mathcal{E}} \widetilde{V}^\top V = \mathcal{E}$, we have 
\begin{align*}
\widetilde{X}z^\sharp &= X z^\sharp + E z^\sharp \\
&= US(S+\mathcal{E} )^{-1} S V^\top w + U \widetilde{\mathcal{E}} \widetilde{V}^\top V (S+\mathcal{E} )^{-1} S V^\top w \\
&= US(S+\mathcal{E} )^{-1} S V^\top w + U \mathcal{E} (S+\mathcal{E} )^{-1} S V^\top w \\
&= US V^\top w \\
&= Xw.
\end{align*}
Moreover, since $w\sim\mathcal{N}(0, I)$, we have $z^\sharp \sim \mathcal{N}(0, BB^\top)$ with $B:=V(S+\mathcal{E})^{-1}S$ and thus 
\begin{equation}\label{eq:min-inf-stability-eq1}
BB^\top \preccurlyeq \|BB^\top\|_2 I = \|B\|_2^2 I = \|(S+\mathcal{E})^{-1}S\|_2^2 I \preccurlyeq \Bigl( \frac{\sigma_1}{\sigma_m - \|\mathcal{E}\|_2} \Bigr)^2 I \preccurlyeq \Bigl( \frac{\sigma_1}{\sigma_m - \epsilon \sigma_1} \Bigr)^2 I.
\end{equation}
Applying \cref{lemma:log-concavity} to \eqref{eq:min-inf-stability-eq1} with $\Sigma_1=BB^\top$ and $\Sigma_2=( \frac{\sigma_1}{\sigma_m - \epsilon \sigma_1} )^2 I$, we obtain that, for $t\geq 0$, 
\begin{equation}\label{eq:min-inf-stability-eq2}
\prob(\|z^\sharp \|_\infty \leq t) \geq \prob\Bigl( \Bigl\|\frac{\sigma_1 \xi}{\sigma_m -\epsilon \sigma_1} \Bigr\|_\infty \leq t \Bigr) \geq  1- 2 N \exp\Bigl( -\frac{1}{2} \bigl( \frac{\sigma_m -\epsilon \sigma_1}{\sigma_1}\bigr)^2 t ^2 \Bigr)  
\end{equation}
where $\xi\sim\mathcal{N}(0, I)$. In the last inequality, we used the following concentration inequality 
\[
\prob (\|\xi\|_\infty\leq t) \geq 1 - 2 N e^{-\frac{t^2}{2}}, \quad t\geq 0.
\]
Choosing $t=\frac{\sigma_1}{\sigma_m - \epsilon \sigma_1}\sqrt{2p\log N}$ in \eqref{eq:min-inf-stability-eq2}, we obtain
\[
\prob\Bigl(\|z^\sharp \|_\infty \leq \frac{\sigma_1}{\sigma_m - \epsilon \sigma_1}\sqrt{2p\log N} \Bigr) \geq  1-\frac{2}{N^{p-1}}.
\]
Further, since $z^\sharp$ is a feasible vector of \eqref{eq:system2}, we have $\|\widetilde{w}\|_\infty\leq \|z^\sharp\|_\infty$. Therefore, with probability at least $1-\frac{2}{N^{p-1}}$, 
\[
\|\widetilde{w}\|_\infty \leq \frac{\sigma_1}{\sigma_m - \epsilon \sigma_1}\sqrt{2p\log N}.
\]
\end{proof}

\end{document}